\documentclass[11pt]{article}

\title{Robust and Computationally Efficient Linear Contextual Bandits under Adversarial Corruption and Heavy-Tailed Noise}
\author{Naoto Tani$^{1,2}$\thanks{u168727h@ecs.osaka-u.ac.jp}, Futoshi Futami$^{1,2,3}$ \\[1ex]
        $^{1}$The University of Osaka \\
        $^{2}$RIKEN AIP \\
        $^{3}$The University of Tokyo}
\date{}       
\usepackage[T1]{fontenc}
\usepackage[utf8]{inputenc}
\usepackage{lmodern}
\usepackage{microtype}
\usepackage[a4paper,margin=25mm]{geometry}

\usepackage{amsmath,amssymb,amsthm,mathtools}

\usepackage{algorithm}
\usepackage{algpseudocode} 

\usepackage{graphicx}
\usepackage{subcaption}
\usepackage{booktabs}
\usepackage{makecell}
\usepackage{subcaption}
\usepackage{minitoc}
\usepackage{adjustbox}

\usepackage{xcolor}
\usepackage{enumitem}

\usepackage[colorlinks=true, citecolor=blue, linkcolor=red, urlcolor=blue]{hyperref}
\usepackage[nameinlink,capitalize,noabbrev]{cleveref}
\usepackage[authoryear]{natbib}   
\theoremstyle{plain}
\newtheorem{theorem}{Theorem}[section]
\newtheorem{lemma}[theorem]{Lemma}
\newtheorem{corollary}[theorem]{Corollary}

\newtheorem{definition}[theorem]{Definition}
\newtheorem{assumption}[theorem]{Assumption}
\newtheorem{remark}[theorem]{Remark}
\newtheorem{property}{Property}

\newcommand{\Indi}{\mbox{\rm{1}}\hspace{-0.25em}\mbox{\rm{l}}}

\DeclareMathOperator*{\argmin}{arg\,min}
\DeclareMathOperator*{\argmax}{arg\,max}     

\begin{document}
\maketitle

\begin{abstract} \label{sec:abstract}
We study linear contextual bandits under adversarial corruption and heavy-tailed noise with finite $(1+\epsilon)$-th moments for some $\epsilon \in (0,1]$.
Existing work that addresses both adversarial corruption and heavy-tailed noise relies on a finite variance (i.e., finite second-moment) assumption and suffers from computational inefficiency.
We propose a computationally efficient algorithm based on online mirror descent that achieves robustness to both adversarial corruption and heavy-tailed noise. While the existing algorithm incurs $\mathcal{O}(t\log T)$ computational cost, our algorithm reduces this to $\mathcal{O}(1)$ per round.
We establish an additive regret bound consisting of a term depending on the $(1+\epsilon)$-moment bound of the noise and a term depending on the total amount of corruption. In particular, when $\epsilon = 1$, our result recovers existing guarantees under finite-variance assumptions. 
When no corruption is present, it matches the best-known rates for linear contextual bandits with heavy-tailed noise. Moreover, the algorithm requires no prior knowledge of the noise moment bound or the total amount of corruption and still guarantees sublinear regret.
\end{abstract}

\section{Introduction}\label{sec:intro}
Linear contextual bandits provide a general framework for sequential
decision-making with side information \citep{lattimore2020bandit}.
Specifically, over a time horizon $T$, in each round $t$, the learner selects an action associated with a context vector $X_t$ and observes a stochastic reward $r_t = \langle \theta^*, X_t \rangle + \eta_t,$ where $\theta^*$ is an unknown parameter and $\eta_t$ is zero-mean noise. The goal of the learner is to select high-reward actions so as to maximize the cumulative reward over $T$. In existing works \citep{abbasi2011improved, agrawal2013thompson}, which enables the use of self-normalized concentration inequalities in the analysis.

However, in practical applications, this reward model can be violated due to two major challenges.
\textbf{First, the stochastic reward may be adversarially corrupted.}
Adversarial corruption arises in many real-world systems, including the presence
of spam or malicious reviews in recommendation systems
\citep{lykouris2018stochastic}.
It can strategically bias parameter estimation, preventing the learner from maximizing cumulative reward.
\textbf{Second, the noise can be heavy-tailed.}
Heavy-tailed noise is commonly observed in domains such as finance \citep{rachev2003handbook}. It can produce extreme reward observations that destabilize parameter estimation.

Since adversarial corruption and heavy-tailed noise can arise simultaneously in practice, algorithms must be robust to both challenges at the same time. In online advertising systems~\citep{li2010contextual}, click fraud introduces adversarial corruption~\citep{lykouris2018stochastic}, while user-interaction signals can also exhibit heavy-tailed behavior~\citep{jebarajakirthy2021mobile}.
In such situation, existing methods that address only one of these challenges \citep{he2022nearly, wang2025heavy} cannot guarantee robustness to the other challenge.

Although \citet{yucorruption} are the first to study contextual bandits under simultaneous adversarial corruption and heavy-tailed noise, their guarantees rely on a finite-variance assumption on the noise (i.e., a bounded second moment).
This requirement is stronger than the bounded $(1+\epsilon)$-moment assumption for some $\epsilon\in (0,1]$ adopted in linear contextual bandits with general heavy-tailed noise \citep{huang2023tackling, wang2025heavy}.
In such cases, their algorithm cannot be directly applied.

In addition to this statistical limitation, the algorithm of \citet{yucorruption} is computationally inefficient. At each round $t$, their estimator is obtained by solving a convex optimization problem over all past observations, resulting in an $\mathcal{O}(t \log T)$ per-round computational cost. 
Such a per-round computational cost that grows with $t$ is undesirable in large-scale or real-time applications.
In contrast, computationally efficient algorithms with $\mathcal{O}(1)$
per-round cost have been developed for handling either adversarial corruption
or heavy-tailed noise~\citep{he2022nearly, wang2025heavy}.

Taken together, existing works do not provide an algorithm that is simultaneously 
robust to adversarial corruption and heavy-tailed noise under bounded $(1+\epsilon)$-moment conditions 
while also achieving per-round $\mathcal{O}(1)$ computational cost. Robustness, in this context, means achieving sublinear regret despite the presence of adversarial corruption or heavy-tailed noise.
In this paper, we study linear contextual bandits under simultaneous adversarial corruption 
and heavy-tailed noise satisfying a bounded $(1+\epsilon)$-moment assumption. 
Under this setting, we address both robustness and computational efficiency.
We summarize our main contributions below.
\begin{itemize}
  \item We propose a new algorithm, corruption-robust heavy-tailed UCB (CR-Hvt-UCB, Algorithm~\ref{alg:CR-HvtUCB}). It achieves robustness to both adversarial corruption and heavy-tailed noise with bounded $(1+\epsilon)$-th moments for any $\epsilon \in (0,1]$, while admitting $\mathcal{O}(1)$ per-round updates. This generalizes the finite-variance setting of existing work while improving its computational efficiency. 
The algorithm performs an online mirror descent (OMD) update~\eqref{def:theta_OMD2} on the Huber-based loss~\eqref{def:huber_loss2}.
By adaptively scaling the loss~\eqref{def:huber_loss2},
we can control the influence of both adversarial corruption and heavy-tailed noise
within the OMD update, thereby achieving both robustness and computational efficiency.

  \item We establish regret guarantees for Algorithm~\ref{alg:CR-HvtUCB} when the noise moment bounds and the total amount of corruption are known. (Theorem~\ref{thm:regret_instance}). The regret bound scales with the square root of the cumulative squared $(1+\epsilon)$-moment bounds of the noise, plus an additive term linear in the total amount of corruption. In particular, when $\epsilon = 1$, our bound recovers the regret guarantee of \citet{yucorruption} for finite-variance noise. In the absence of adversarial corruption, the bound matches the best-known rate for linear contextual bandits with heavy-tailed noise. Importantly, our algorithm works even when the total amount of adversarial corruption and the $(1+\epsilon)$-moment bounds of the noise are unknown, while retaining theoretical regret guarantees (Corollary~\ref{col:regret_c},~\ref{col:regret_nu},~\ref{col:regret_nu_c}).
\end{itemize}

The remainder of the paper is organized as follows.
Section~\ref{sec:related_work} discusses related work and Section~\ref{sec:problem_setting} formalizes the problem setting. Section~\ref{sec:method} presents our algorithm together with its regret guarantees.
Section~\ref{sec:proof_sketch} offers a proof sketch of the main result.
Section~\ref{sec:experiments} reports experimental results, and Section~\ref{sec:conclusion} concludes the paper.
\section{Related Work} \label{sec:related_work}
We position our work relative to prior studies along two key aspects: adversarial corruption and heavy-tailed noise.
In our setting, stochastic rewards are corrupted by an additive adversarial term 
with total magnitude constrained by a budget $C$, and the heavy-tailed noise satisfies a bounded $(1+\epsilon)$-moment condition, i.e.,
$\mathbb{E}\left[|\eta_t|^{1+\epsilon}\right] \le \nu_t^{1+\epsilon}$ for some $\epsilon \in (0,1].$
A detailed and formal problem setting is deferred to Section~\ref{subsec:problem_setting}.
Comparisons with additional existing works are provided in
Appendix~\ref{sec:add_related_work}.

\begin{table}[ht]
\centering
\begin{adjustbox}{width=\textwidth}
{\renewcommand{\arraystretch}{1.25}
\begin{tabular}{c|c|c|c|c}
  \hline
  Paper & C-Robust & HT-Robust & Efficiency & Regret \\ \hline
  
  \citet{abbasi2011improved}  
  & No
  & No
  & $\mathcal{O}(1)$
  & $\tilde{\mathcal{O}}(d\sqrt{T})$
  \\ \hline

  \citet{zhang2025generalized}
  & No
  & No
  & $\mathcal{O}(1)$
  & $\tilde{\mathcal{O}}(d\sqrt{T} )$
  \\ \hline

  \citet{he2022nearly}
  & Yes
  & No
  & $\mathcal{O}(1)$
  & $\tilde{\mathcal{O}}(d\sqrt{T}+dC)$
  \\ \hline

  \citet{wang2025heavy}
  & No
  & Yes
  & $\mathcal{O}(1)$
  & \makecell{ $\tilde{\mathcal{O}}\left(dT^{\frac{1-\epsilon}{2(1+\epsilon)}} \sqrt{\sum_{t=1}^T \nu_t^2} + dT^{\frac{1-\epsilon}{2(1+\epsilon)}}\right) $ \\ $\tilde{\mathcal{O}}(dT^{\frac{1}{1+\epsilon}})$}
  \\ \hline

  \citet{yucorruption}
  & Yes
  & $\epsilon=1$ only
  & $\mathcal{O}(t \log T)$
  & \makecell{  $\tilde{\mathcal{O}}\left(d\sqrt{\sum_{t=1}^T \nu_t^2} + d\cdot 1 \vee C\right) $ \\ $\tilde{\mathcal{O}}( d\sqrt{T}+ d\cdot 1 \vee C)$}
  \\ \hline

  Our work
  & Yes
  & Yes
  & $\mathcal{O}(1)$
  & \makecell{ $\tilde{\mathcal{O}}\left(dT^{\frac{1-\epsilon}{2(1+\epsilon)}} \sqrt{\sum_{t=1}^T \nu_t^2} + dT^{\frac{1-\epsilon}{2(1+\epsilon)}}\cdot 1 \vee C\right) $\\$\tilde{\mathcal{O}}(dT^{\frac{1}{1+\epsilon}} + dT^{\frac{1-\epsilon}{2(1+\epsilon)}}\cdot 1 \vee C)$}
  \\ \hline
\end{tabular}
}
\end{adjustbox}
\caption{We compare our algorithm with existing methods in terms of three key aspects—robustness to adversarial corruption (C-Robust), robustness to heavy-tailed noise (HT-Robust), and computational cost (Efficiency)—and summarize the regret. 
Here, the second row in the Regret column corresponds to the case where
$\nu_t \le \nu$ for all $t$. We define $1\vee C = \max\{1, C\}$.
}
\label{tab:papers1}
\end{table}

\citet{he2022nearly} study linear contextual bandits under adversarial corruption and adopt the same additive corruption model as ours. 
Assuming $R$-sub-Gaussian noise, they establish a nearly optimal regret bound of 
$\tilde{\mathcal{O}}(d\sqrt{T} + dC)$. 
Their analysis relies on self-normalized concentration inequalities for sub-Gaussian noise, and their algorithm knowledge of the corresponding parameter $R$. As bounded $(1+\epsilon)$-moment noise need not be sub-Gaussian, their framework does not directly extend to our setting.
In contrast, we develop a robust algorithm and analysis that remain valid under bounded $(1+\epsilon)$-moment noise. 
Since sub-Gaussian noise implies finite variance (corresponding to $\epsilon = 1$), 
our regret bound recovers the same order $\tilde{\mathcal{O}}(d\sqrt{T} + dC)$ when the moment bound is constant (see Table~\ref{tab:papers1}).

\citet{wang2025heavy} study linear contextual bandits with heavy-tailed noise under the same bounded $(1+\epsilon)$-moment assumption as ours.
Under this assumption, they establish instance-dependent regret bounds of $\tilde{\mathcal{O}}\big(
dT^{\frac{1-\epsilon}{2(1+\epsilon)}} \sqrt{\sum_{t=1}^T \nu_t^2}
+ dT^{\frac{1-\epsilon}{2(1+\epsilon)}}\big).$
However, their algorithm and analysis cannot be directly applied to our setting. In our setting, the quantities to which their concentration inequalities are applied contain an additional corruption component, which invalidates the stochastic structure required by their concentration arguments. Therefore, we develop a new algorithm and a corresponding analysis that are robust to both adversarial corruption and heavy-tailed noise. Moreover,
when the corruption budget satisfies $C=0$, our regret bound recovers
their rate (see Table~\ref{tab:papers1}).

\citet{yucorruption} study generalized linear bandits (GLBs) under simultaneous adversarial corruption and heavy-tailed noise, and adopt the same additive corruption model as ours. 
Under the finite-variance assumption $\mathbb{E}[\eta_t^2] \le \nu_t^2$, they establish regret bounds of $\tilde{\mathcal{O}}\big(
d \sqrt{\sum_{t=1}^T \nu_t^2}
+
d\cdot1 \vee C
\big).$
Their guarantees are restricted to the finite-variance setting.
In contrast, we relax the finite-variance assumption to a bounded $(1+\epsilon)$-moment assumption for some $\epsilon \in (0,1]$ and establish regret guarantees under this weaker assumption. 
Notably, when $\epsilon = 1$, our regret bound recovers the same order as theirs (see Table~\ref{tab:papers1}).

We next discuss prior work from the perspective of computational efficiency. 
Further comparisons are deferred to Appendix~\ref{sec:add_related_work}.

In linear bandits, the OFUL algorithm \citep{abbasi2011improved}
achieves $\mathcal{O}(1)$ per-round computational cost via closed-form updates based on ridge regression.
In linear contextual bandits with adversarial corruption,
\citet{he2022nearly} achieve $\mathcal{O}(1)$ per-round computational cost through similar closed-form updates. 
By contrast, \citet{yucorruption} consider GLBs with both adversarial corruption and heavy-tailed noise, whose algorithm does not admit closed-form updates.
As a result, it requires solving an iterative optimization problem at every round,
leading to $\mathcal{O}(t \log T)$ per-round computational cost.
More broadly, even when closed-form estimators are not available, efficient per-round updates can still be achieved through OMD. \citet{zhang2024online} first demonstrated that an OMD-based estimator
yields $\mathcal{O}(1)$ computational cost per round in logistic bandits, and
\citet{zhang2025generalized} extended this framework to GLBs.
Building on this development, \citet{wang2025heavy} applied an OMD-based
estimator to linear contextual bandits with heavy-tailed noise, showing
that efficient updates remain possible even under heavy-tailed noise.
Our work extends this OMD framework to our setting and shows that efficient updates remain possible even in the presence of both adversarial corruption and heavy-tailed noise.
\section{Problem Setting and Challenges}
\label{sec:problem_setting}
In this section, we introduce the problem setting (Section~\ref{subsec:problem_setting}) and review the limitations of existing work (Section~\ref{subsec:limitations}).

\subsection{Problem Setting} 
\label{subsec:problem_setting}
\textbf{Notation.}
Let $\mathbb{R}^+=[0,\infty)$, and $I_d$ be the $d\times d$ identity matrix. For $n\in \mathbb{N}$, we define $[n] := \{1,2,\dots,n\}.$ For $a,b \in \mathbb{R}$, we write
$a \vee b := \max\{a,b\}.$ For a scalar or a vector $\mathbf{x}\in \mathbb{R}^d$ and a positive semidefinite matrix $A$, we denote the vector's Euclidean norm as $\|\mathbf{x}\|_2$ and the Mahalanobis norm as $\|\mathbf{x}\|_A=\sqrt{\mathbf{x}^\top A\mathbf{x}}$. We denote by $\Indi\{\cdot\}$ the indicator function, that is,
$\Indi_E=1$ if $E$ is true and $0$ otherwise. For two positive functions $f$ and $g$ defined on the positive integers,
we write $f(T)=\mathcal{O}(g(T))$ if there exist constants $c>0$ and
$T_0\in\mathbb{N}$ such that
$f(T)\le c g(T)$ for all $T\ge T_0$. We also use $\tilde{\mathcal{O}}(\cdot)$ to hide polylogarithmic factors.

We define linear contextual bandits with adversarial corruption and heavy-tailed noise. At each round $t\in [T]$, the learner observes a decision set $\mathcal{X}_t\subset\mathbb{R}^d$ and selects an action $X_t\in\mathcal{X}_t$. After choosing the action $X_t$, the environment generates the stochastic reward based on the linear model 
\begin{equation}
\label{equ:stoc_reward}
r^{\prime}_t = \langle X_t, \theta^* \rangle + \eta_t,   
\end{equation} where $\theta^*\in\mathbb{R}^d$ is an unknown parameter vector and $\eta_t$ is a heavy-tailed noise term satisfying the following moment assumption.
\begin{assumption}
\label{ass:noise}
The heavy-tailed noise $\eta_t$ satisfies, for some $\epsilon\in (0,1]$, $\mathbb{E}[\eta_t \mid X_{1:t}, \eta_{1:t-1}, c_{1:t-1}]\\=0$ and $\mathbb{E}[|\eta_t|^{1+\epsilon} \mid X_{1:t}, \eta_{1:t-1}, c_{1:t-1}] \le \nu_t^{1+\epsilon}$, where $\nu_t>0$ is a known or unknown constant to the learner.
\end{assumption}
After generating the stochastic reward, an adversary may corrupt it by adding a corruption term $c_t$. This corruption may depend on the decision set $\mathcal{X}_t$, the action $X_t$, and the stochastic reward $r^{\prime}_t$. Finally, the learner observes the corrupted reward
\begin{equation}
r_t =  r_t^{\prime} +c_t =\langle X_t, \theta^* \rangle + \eta_t + c_t.
\label{eq:reward}
\end{equation}
To measure the total corruption, we define the corruption level $C = \sum_{t=1}^T |c_t|$, which may be known or unknown to the learner.

Following prior work \citep{wang2025heavy,yucorruption}, we make the following assumptions on this setting.
\begin{assumption}
\label{ass:norm}
For all $t$ and all $\mathbf{x} \in \mathcal{X}_t$, we assume $\|\mathbf{x}\|_2 \le L$ for some $L > 0$, and $\theta^* \in \Theta := \{\theta : \|\theta\|_2 \le S\}$ for some $S > 0$.
\end{assumption}

Under this model, the goal of the learner is to minimize the regret
\begin{equation*}
\mathrm{Reg}(T)= \sum_{t=1}^T \max_{\mathbf{x} \in \mathcal{X}_t} \langle \mathbf{x}, \theta^* \rangle - \sum_{t=1}^T \langle X_t, \theta^* \rangle.
\end{equation*}

\subsection{Limitations of Existing Work} \label{subsec:limitations}
To understand the statistical and computational limitations of existing work, 
we examine the method of \citet{yucorruption}.
They employ the Pseudo-Huber loss (Definition~\ref{def:pseudo_huber}), which is a smooth approximation to the Huber loss~\citep{huber1964robust}.
\begin{definition}[Pseudo-Huber loss]
\label{def:pseudo_huber}
Let $\tau > 0$ be a robustification parameter.
For $x \in \mathbb{R}$, the Pseudo-Huber loss $\phi_\tau(x)$ is defined as
\[
\phi_\tau(x)
=
\tau^2 \left( \sqrt{\tau^2 + x^2} - 1 \right).
\]
\end{definition}
Based on this loss, their estimator is constructed at each round $t$ by performing adaptive Huber regression~\citep{sun2020adaptive} using all past observations.
\begin{equation}
\hat{\theta}_t
=
\argmin_{\theta\in\Theta}
\sum_{s=1}^{t}
\phi_{\tau_s}\left(\frac{r_s - \langle X_s , \theta \rangle}{\sigma_s}\right)
+
\frac{\lambda}{2} \|\theta\|_2^2,
\label{def:yu_estimator}
\end{equation}
where $\lambda > 0$ is a regularization parameter,
$\sigma_s >0$ denotes a scale parameter. They propose an OFUL-type algorithm built upon this estimator.

Their robustness to heavy-tailed noise guarantee relies on a finite-variance
assumption on the noise, namely
$\mathbb{E}[\eta_t^2 \mid X_{1:t}, \eta_{1:t-1}, c_{1:t-1}] \le \nu_t^2.$
This is because their algorithm
is built on the algorithmic framework of \citet{li2023variance},
which fundamentally requires bounded second moments.
In particular, the scale parameter $\sigma_t$
explicitly requires an upper bound on the variance.
As a result, it does not naturally extend to our setting
where only a bounded $(1+\epsilon)$-moment assumption holds. 

Their algorithm is computationally expensive.
The estimator~\eqref{def:yu_estimator}
does not admit a closed-form solution
and must be computed via iterative convex optimization.
For estimators that require solving a convex optimization problem at each round $t$,
\citet{wang2025heavy} show that
achieving the best-known regret bound
incurs an $\mathcal{O}(t \log T)$ per-round computational cost.
\section{Method}
\label{sec:method}
In this section, we present our algorithm (CR-Hvt-UCB, Algorithm~\ref{alg:CR-HvtUCB}).
Section~\ref{subsec:our_algorithm} describes its design,
and Section~\ref{subsec:regret} establishes the corresponding regret guarantee.

\subsection{Our algorithm}
\label{subsec:our_algorithm}
Our algorithm achieves robustness
to adversarial corruption and heavy-tailed noise with bounded $(1+\epsilon)$ moments
while achieving $\mathcal{O}(1)$ per-round updates. It is built upon a Huber-based robust estimator
combined with an update rule based on OMD.

Following prior Huber-based approaches for linear contextual bandits under bounded $(1+\epsilon)$-moment assumptions
\citep{huang2023tackling, wang2025heavy}, 
we adopt the Huber loss~\citep{huber1964robust}.
\begin{definition}[Huber loss]
Let $\tau > 0$ be a threshold parameter.  
For $x \in \mathbb{R}$, the Huber loss is defined as
\begin{equation}
f_{\tau}(x) :=
\begin{cases}
  \frac{x^2}{2}, & |x| \le \tau, \\
  \tau |x| - \frac{\tau^2}{2}, & |x| > \tau.
\end{cases}
\label{def:huber_loss1}
\end{equation}
\end{definition}
The Huber loss coincides with the squared loss when $|x|\le \tau$
and transitions to the absolute loss when $|x|>\tau$.
Consequently, its gradient is bounded by $\tau$, which prevents
extreme rewards from dominating the parameter estimation.

We employ the following Huber loss applied to the normalized prediction error $z_t(\theta) := (r_t - \theta^\top X_t)/\sigma_t,$ where the scale parameter $\sigma_t > 0$
will be specified later in~\eqref{eq:sigma_def}.
\begin{equation}
\ell_t(\theta) :=
\begin{cases}
    \frac{1}{2} z_t(\theta)^2, & |z_t(\theta)| \le \tau_t, \\
    \tau_t |z_t(\theta)| - \frac{\tau_t^2}{2}, & |z_t(\theta)| > \tau_t.
\end{cases}
\label{def:huber_loss2}
\end{equation}
The threshold parameter $\tau_t$ will be specified later in~\eqref{eq:tau_def}.

Existing Huber-based approaches
\citep{huang2023tackling, wang2025heavy}
design the parameters $\sigma_t$ and $\tau_t$
to ensure robustness to heavy-tailed noise. To achieve robustness to both adversarial corruption and heavy-tailed noise,
we redesign $\sigma_t$ and $\tau_t$ as follows:
\begin{align}
\sigma_t 
&:= \max\Biggl\{
\nu_t,  
 \sigma_{\min},  
 \sqrt{\frac{2\beta_{t-1}}
            {\tau_0 \sqrt{\alpha}
             t^{\frac{1-\epsilon}{2(1+\epsilon)}}}}
 \|X_t\|_{V_{t-1}^{-1}},  
 \sqrt{C}\kappa^{-1/4}
   \|X_t\|_{V_{t-1}^{-1}}^{1/2}
\Biggr\},
\label{eq:sigma_def}
\\
\tau_t 
&:= \tau_0 \frac{\sqrt{1+w_t^2}}{w_t}
      t^{\frac{1-\epsilon}{2(1+\epsilon)}}.
\label{eq:tau_def}
\end{align}
The matrix $V_t$ denotes the data-dependent positive definite matrix specified in~\eqref{def:regular_func}.
Here, $\sigma_{\min}>0$ is a small constant specified in
Lemma~\ref{lem:conf_interval},
and $\beta_t$ denotes the confidence radius
defined in the same lemma. The parameter $\alpha>0$ will be specified following~\eqref{def:regular_func}. We further define the problem-dependent constant $\kappa = d\log\!\left(1+\frac{L^2 T}{\sigma_{\min}^2\lambda \alpha d}\right).$
The auxiliary quantities $\tau_0$ and $w_t$ are given by
\begin{align}
\tau_0 
= 
\frac{\sqrt{2\kappa}(\log 3T)^{\frac{1-\epsilon}{2(1+\epsilon)}}}
     {\bigl(\log \frac{2T^2}{\delta}\bigr)^{\frac{1}{1+\epsilon}}}, 
w_t 
= 
\frac{1}{\sqrt{\alpha}}
\left\|
    \frac{X_t}{\sigma_t}
\right\|_{V^{-1}_{t-1}}.
\label{eq:auxi_def}
\end{align}
An intuitive explanation for why this design yields robustness to corruption is as follows.
The corruption-dependent term in $\sigma_t$ inflates the normalization scale
when large corruption is present. Consequently, the normalized prediction error
$z_t(\theta) = (r_t - \theta^\top X_t)/\sigma_t$
shrinks in magnitude, leading to a smaller gradient of the loss~\eqref{def:huber_loss2}. 
This smaller gradient suppresses the influence of corruption
on the parameter estimation. With this design, the loss~\eqref{def:huber_loss2} is robust to both adversarial corruption and heavy-tailed noise.

\begin{remark}
Our definition of $\sigma_t$ can be viewed as a direct generalization of the scheme of \citet{wang2025heavy}.
When $C=0$, the corruption-dependent component in \eqref{eq:sigma_def} vanishes,
and $\sigma_t$ reduces exactly to the form used by \citet{wang2025heavy}.
\end{remark}

\begin{remark}
\label{remark:comparison_algo_he_yu}
Since $\kappa = \tilde{\mathcal{O}}(d)$, the corruption-dependent component of $\sigma_t$ in~\eqref{eq:sigma_def} scales as $\tilde{\mathcal{O}}(
\sqrt{C} d^{-1/4}\|X_t\|_{V_{t-1}^{-1}}^{1/2}).$
This matches the order of the corruption-dependent component in the scale parameter $\sigma_t$ of~\citet{yucorruption} defined in~\eqref{def:yu_estimator}.
Moreover, the corruption-dependent weight
in the weighted squared loss of \citet{he2022nearly}
can be expressed in terms of $z_t(\theta)$,
under which it matches the order of the corruption-dependent component of $\sigma_t$ in~\eqref{eq:sigma_def}.
\end{remark}

While the loss~\eqref{def:huber_loss2} ensures robustness to both adversarial corruption and heavy-tailed noise,
directly minimizing it via convex optimization at each round
would incur the same computational cost as in \citet{yucorruption}.
To overcome this issue,
we build on the estimator proposed by \citet{wang2025heavy},
which is based on the online mirror descent (OMD) update~\citep{orabona2019modern}.
Given the current estimate $\hat{\theta}_t\in\Theta$, the OMD update takes the form
\begin{equation}
\hat{\theta}_{t+1}
=\argmin_{\theta\in\Theta}
\Big\langle \theta, \nabla \ell_t(\hat{\theta}_t)\Big\rangle
+ \mathcal{D}_{\psi_t}(\theta, \hat{\theta}_t),
\label{def:theta_OMD1}
\end{equation}
where $\mathcal{D}_{\psi_t}(\theta,\theta') 
:= \psi_t(\theta) - \psi_t(\theta^\prime) 
- \langle \nabla \psi_t(\theta^\prime), \theta-\theta^\prime \rangle$
denotes the Bregman divergence induced by the regularizer $\psi_t$.
We choose a quadratic regularizer
\begin{equation}
\psi_t(\theta)
= \frac{1}{2} \|\theta\|_{V_t}^2,
\qquad
V_t := V_{t-1} + \frac{1}{\alpha}\frac{X_tX_t^\top}{\sigma_t^2},
\label{def:regular_func}
\end{equation}
with $V_0=\lambda I_d$, where $\lambda>0$ is a regularization parameter and $\alpha>0$ is the step-size
parameter of the OMD update. The positive definite matrix $V_t$ modulates the contribution of each round 
via the weight $1/\sigma_t^2$, 
ensuring that observations more strongly influenced by adversarial corruption or heavy-tailed noise exert a smaller impact on the update.
This choice of regularizer aligns the update geometry with the elliptical norm used for constructing confidence sets in Lemma~\ref{lem:conf_interval}.
The OMD update~\eqref{def:theta_OMD1} admits an equivalent two-step representation:
\begin{equation}
\label{def:theta_OMD2}
\begin{aligned}
\tilde{\theta}_{t+1}
&= \hat{\theta}_t - V_t^{-1} \nabla \ell_t(\hat{\theta}_t), \\
\hat{\theta}_{t+1}
&= \argmin_{\theta\in\Theta}\|\theta - \tilde{\theta}_{t+1}\|_{V_t}.
\end{aligned}
\end{equation}
Focusing on the dependence on $t$ and $T$,
our update~\eqref{def:theta_OMD2} admits a per-round computational cost of $\mathcal{O}(1)$,
whereas the estimator of \citet{yucorruption} in~\eqref{def:yu_estimator} requires $\mathcal{O}(t \log T)$ per round. Our update only requires maintaining
the inverse matrix via the Sherman--Morrison--Woodbury formula~\citep{horn2012matrix}
and performing a projection,
which can be computed in $\mathcal{O}(d^2)$ and $\mathcal{O}(d^3)$ time, respectively.

By combining the loss~\eqref{def:huber_loss2} using two parameters \eqref{eq:sigma_def} and \eqref{eq:tau_def}
with the OMD-based update~\eqref{def:theta_OMD2},
we obtain the estimator that is robust to both adversarial corruption and heavy-tailed noise, while maintaining $\mathcal{O}(1)$ per-round computational cost.
With this estimator~\eqref{def:theta_OMD2} in place, we now specify the arm-selection rule.

We adopt a standard UCB-based arm-selection rule~\citep{auer2002finite, abbasi2011improved}.
\begin{equation}
    X_t = \argmax_{\mathbf{x} \in \mathcal{X}_t}
    \left\{
        \langle \mathbf{x}, \hat{\theta}_t \rangle
        + \beta_{t-1}\|\mathbf{x}\|_{V_{t-1}^{-1}}
    \right\}.
\label{eq:arm_selection}
\end{equation}
Here, when the corruption level $C$ and the $(1+\epsilon)$-moment bound $\nu_t$ are known,
the confidence radius $\beta_t$ is specified in the following lemma.
\begin{lemma}
\label{lem:conf_interval}
Let $\sigma_t$ and $\tau_t$ be defined as in~\eqref{eq:sigma_def} and~\eqref{eq:tau_def},
where $\sigma_{\min} = 1/\sqrt{T}$ and $\kappa=d\log\bigl(1+\frac{L^2T}{4\sigma_{\mathrm{min}}^{2}\lambda d}\bigr)$.
Set the auxiliary parameters $\tau_0$ and $w_t$ as in~\eqref{eq:auxi_def}. Let the step size $\alpha=8$.
For any $\delta\in (0,\frac{1}{4})$, with probability at least $1-4\delta$, for any
$t\in[T]$, 
$\|\hat{\theta}_{t+1}-\theta^*\|_{V_t} \le \beta_t$ holds, where
\begin{equation}
\beta_{t}= 409\log\left(\frac{2T^2}{\delta}\right)\tau_{0}t^{\frac{1-\epsilon}{2(1+\epsilon)}}+\sqrt{\lambda (2+4S^2)}.
\label{def:beta}
\end{equation}
\end{lemma}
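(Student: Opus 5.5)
We follow the OMD-based confidence-set analysis of \citet{wang2025heavy}, adding one extra error term for the corruption. The target is an induction on $t$: assuming $\|\hat\theta_s-\theta^*\|_{V_{s-1}}\le\beta_{s-1}$ for all $s\le t$, we show $\|\hat\theta_{t+1}-\theta^*\|_{V_t}\le\beta_t$ on a fixed high-probability event. The induction is what lets the $\beta_{t-1}$-dependent term in $\sigma_t$ control the current estimation error.

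\textbf{Step 1 (one-step OMD inequality).} Take the two-step form~\eqref{def:theta_OMD2}. Since projection in the $V_t$-norm onto the convex set $\Theta\ni\theta^*$ is nonexpansive, we get
\[
\|\hat\theta_{t+1}-\theta^*\|_{V_t}^2\le \|\hat\theta_t-\theta^*\|_{V_t}^2-2\langle\nabla\ell_t(\hat\theta_t),\hat\theta_t-\theta^*\rangle+\|\nabla\ell_t(\hat\theta_t)\|_{V_t^{-1}}^2 .
\]
Write $\nabla\ell_t(\theta)=-\psi_{\tau_t}(z_t(\theta))X_t/\sigma_t$, where $\psi_\tau$ is the clipped identity. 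Decompose
\[
z_t(\hat\theta_t)=\frac{\eta_t}{\sigma_t}+\frac{c_t}{\sigma_t}+\frac{\langle X_t,\theta^*-\hat\theta_t\rangle}{\sigma_t}.
\]
By the induction hypothesis and the third term in $\sigma_t$, the last piece is at most $\beta_{t-1}\|X_t\|_{V_{t-1}^{-1}}/\sigma_t\le\tau_t/2$; the constant $2$ and $\alpha=8$ are chosen for this. Use the $1$-Lipschitz property of $\psi_\tau$ and the identity $\psi_\tau(a+b)-\psi_\tau(a)=b$ when $|a|,|a+b|\le\tau$. This gives a "curvature" term $-\frac{1}{\sigma_t^2}\langle X_t,\hat\theta_t-\theta^*\rangle^2$ on the event that the noise is not clipped. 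That term cancels the increase $\|\cdot\|_{V_t}^2-\|\cdot\|_{V_{t-1}}^2=\frac{1}{\alpha\sigma_t^2}\langle X_t,\hat\theta_t-\theta^*\rangle^2$; the choice $\alpha=8$ leaves slack for the clipped rounds. Summing over $s\le t$ telescopes to $\lambda\|\hat\theta_1-\theta^*\|^2\le\lambda\cdot 4S^2$ plus three sums:
\begin{itemize}[leftmargin=*]
\item a martingale sum $\sum_s\psi_{\tau_s}(\eta_s/\sigma_s)\langle X_s,\hat\theta_s-\theta^*\rangle/\sigma_s$;
\item a clipping-bias sum;
\item a corruption sum $\sum_s |c_s|\,|\langle X_s,\hat\theta_s-\theta^*\rangle|/\sigma_s^2$,
\end{itemize}
together with the gradient-norm terms $\|\nabla\ell_s\|_{V_s^{-1}}^2\le\tau_s^2 w_s^2\alpha/(1+w_s^2)\cdot$const. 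By the choice of $\tau_t$ these are at most $\alpha\tau_0^2 s^{(1-\epsilon)/(1+\epsilon)}$ per round. This is too crude, so we instead bound them through the elliptical potential $\sum_s \min\{1,w_s^2\}\le 2\kappa$, which is where $\tau_0\propto\sqrt\kappa$ enters.

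\textbf{Step 2 (noise terms).} The clipped noise $\psi_{\tau_s}(\eta_s/\sigma_s)$ has magnitude at most $\tau_s$. Since $\sigma_s\ge\nu_s$, its conditional second moment is at most $\tau_s^{1-\epsilon}$, and its bias is at most $\tau_s^{-\epsilon}$. Apply Freedman's inequality with a union bound over $t$, which produces the $\log(2T^2/\delta)$ factor. Use a peeling or self-normalized treatment in which the multiplier $\langle X_s,\hat\theta_s-\theta^*\rangle/\sigma_s\le\|\hat\theta_s-\theta^*\|_{V_{s-1}} w_s\sqrt\alpha$ is bounded via the induction. The bias and variance then both sum to $\tilde O(\tau_0^2t^{(1-\epsilon)/(1+\epsilon)}\log)$ times $\beta_t$-linear factors, exactly as in \citet{wang2025heavy}. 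We also need a second event controlling how many noise realizations exceed $\tau_s/2$; each of the three probabilistic events costs $\delta$ and the union costs $4\delta$ in total.

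\textbf{Step 3 (corruption term; main obstacle).} By Cauchy--Schwarz and the induction,
\[
|c_s||\langle X_s,\hat\theta_s-\theta^*\rangle|/\sigma_s^2\le |c_s|\,\beta_{s-1}\|X_s\|_{V_{s-1}^{-1}}/\sigma_s^2 .
\]
The fourth component of $\sigma_s$ gives $\|X_s\|_{V_{s-1}^{-1}}/\sigma_s^2\le\sqrt\kappa/C$. Summing over $s$ therefore yields at most $\beta_t\sqrt\kappa\sum_s|c_s|/C=\beta_t\sqrt\kappa\lesssim\beta_t\tau_0\sqrt{\log}$. Corruption also enters through clipping, since it may push $z_s$ outside $[-\tau_s,\tau_s]$. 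We handle this by attributing any clipping to the corruption and using the same bound, so that the clipped-round curvature loss is absorbed by the same $\sqrt\kappa\beta_t$ quantity.

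\textbf{Step 4 (close the recursion).} We obtain an inequality of the form $\beta^2\le A+B\beta$ with $A=\lambda(2+4S^2)+\tilde O(\tau_0^2 t^{(1-\epsilon)/(1+\epsilon)})$ and $B=O(\log(2T^2/\delta)\tau_0t^{(1-\epsilon)/(2(1+\epsilon))})$. Solving it gives $\beta\le B+\sqrt A$, and collecting the numeric constants yields the $409$ in~\eqref{def:beta}.

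The delicate point is Step 3 together with the clipping interaction in Step 1. Corruption is adaptive, so it cannot be treated as a martingale, and the fourth component of $\sigma_t$ must exactly balance the $\|X_t\|_{V_{t-1}^{-1}}$ factor without inflating the noise analysis.
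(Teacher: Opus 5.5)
Your outline has the same architecture as the paper's proof. It uses the one-step OMD inequality (which the paper imports as Lemma~\ref{lem:key_decomp}, Lemma~3 of \citet{wang2025heavy}), induction on $\mathcal{A}_t$, and a per-round split into noise, clipping and corruption pieces. The corruption is controlled by the bound $\|X_s\|_{V_{s-1}^{-1}}/\sigma_s^2\le\sqrt{\kappa}/C$ from the fourth component of $\sigma_s$, which reproduces the paper's TERM (B.2.2). The argument closes with the recursion $\beta^2\le A+B\beta$. Two steps, however, are not closed as written.

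\textbf{Corruption in the gradient-norm term.} Corruption also enters $\|\nabla\ell_s(\hat\theta_s)\|_{V_s^{-1}}^2=\min\{|z_s(\hat\theta_s)|,\tau_s\}^2\,\alpha w_s^2/(1+w_s^2)$, because $z_s(\hat\theta_s)$ contains $c_s/\sigma_s$. Your ``elliptical potential plus moment bound'' argument only applies to the $\eta_s$ part, since $c_s$ has no moment structure. You need the paper's three-way split and, for the corruption piece, the squared form of your Step~3 inequality: $\sum_s c_s^2\|X_s\|_{V_{s-1}^{-1}}^2/\sigma_s^4\le\kappa\sum_s c_s^2/C^2\le\kappa$ (TERM (A.3)).

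\textbf{Clipped rounds.} These cannot be paid for by ``slack from $\alpha=8$'', because that slack sits on the unclipped rounds. Suppose you charge a clipped round its lost curvature via $\langle X_s,\hat\theta_s-\theta^*\rangle^2/\sigma_s^2\le\alpha w_s^2\beta_{s-1}^2\le\beta_{s-1}^2/8$. Multiplying by the available exceedance count, of order $\log(2T^2/\delta)$, produces a multiple of $\beta_t^2$ larger than $\beta_t^2$, and the recursion cannot close. The per-round cost must instead be linear in $\beta$. Use $|\langle X_s,\hat\theta_s-\theta^*\rangle|/\sigma_s\le\tau_s/2$, or $|h_s|\le\tau_s$ as in the paper's $3\tau_s$ bound. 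Then each clipped round costs $O(\tau_s\sqrt{\alpha}w_s\beta_{s-1})=O(\tau_0 s^{\frac{1-\epsilon}{2(1+\epsilon)}}\beta_{s-1})$. Count noise exceedances at threshold $\tau_s/4$, not $\tau_s/2$, since clipping is triggered by $|(\eta_s+c_s)/\sigma_s|>\tau_s/2$.

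Once you linearize, your corruption-attribution step is valid and is a slightly sharper substitute for the paper's exceedance count (TERM (B.1.2)). On rounds with $|c_s|/\sigma_s>\tau_s/4$, the bound $\tau_s<4|c_s|/\sigma_s$ turns the per-round cost into a multiple of your corruption sum. This gives $O(\sqrt{\kappa}\beta_t)$ without the extra $t^{\frac{1-\epsilon}{2(1+\epsilon)}}$ factor.

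\textbf{Noise martingale.} Here you genuinely differ from the paper. The paper applies a self-normalized vector bound (Lemma~\ref{lem:huang_C.2}) and AM--GM, which produces $\frac14\sum_s\langle X_s/\sigma_s,\hat\theta_s-\theta^*\rangle^2$. That term is cancelled exactly by the negative term, which is why $\alpha=8$ is needed: $-\frac78+\frac38+\frac12=0$. This route never invokes the induction inside a concentration bound. Your Freedman route bounds the multiplier through the induction, which is legitimate only if each increment carries the predictable indicator $\Indi_{\mathcal{A}_s}$, since the probability event must be fixed before the induction. With that indicator, the variance proxy $\alpha\beta_t^2\sum_s\tau_s^{1-\epsilon}w_s^2$ is deterministic. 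H\"older, $\sum_s w_s^2\le 2\kappa$ and the choice of $\tau_0$ make it $O(\log\frac{2T^2}{\delta}\,\tau_0^2 t^{\frac{1-\epsilon}{1+\epsilon}}\beta_t^2)$. The martingale then feeds directly into $B\beta_t$, so $\alpha$ need not be tuned for exact cancellation.
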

The overall procedure is summarized in Algorithm~\ref{alg:CR-HvtUCB}.  

\begin{algorithm}[t]
\caption{CR-Hvt-UCB}
\label{alg:CR-HvtUCB}
\begin{algorithmic}[1]
\State \textbf{Input:} time horizon $T$, corruption level $C$, moment parameter $\epsilon$, norm bounds $S,L$, confidence $\delta$, regularizer $\lambda$, algorithm parameters $\sigma_{\min}, \alpha$.
\State Set $\kappa = d \log\left(1 + \frac{L^2 T}{\sigma_{\min}^2 \lambda \alpha d}\right)$, $\tau_0$ as \eqref{eq:auxi_def}, $V_0 = \lambda I_d$, $\hat{\theta}_1 =  \mathbf{0}$, and compute $\beta_0$ by \eqref{def:beta}.
\For{$t=1,2,\dots,T$}
\State $X_t = \argmax_{\mathbf{x} \in \mathcal{X}_t} \langle \mathbf{x}, \hat{\theta}_t \rangle + \beta_{t-1} \|\mathbf{x}\|_{V_{t-1}^{-1}}$ \qquad\text{(ties broken arbitrarily.)}
\State Observe $r_t$ and $\nu_t$
\State Set $\sigma_t$ as \eqref{eq:sigma_def}
\State $w_t = \frac{1}{\sqrt{\alpha}} \left\| \frac{X_t}{\sigma_t} \right\|_{V_{t-1}^{-1}}$ 
\State $\tau_t = \tau_0  \frac{\sqrt{1 + w_t^2}}{w_t} t^{\frac{1 - \epsilon}{2(1 + \epsilon)}}$
\State $V_t = V_{t-1} + \alpha^{-1} \sigma_t^{-2} X_t X_t^\top$
\State Update $\hat{\theta}_{t+1}$ via \eqref{def:theta_OMD2}, and compute $\beta_t$ by \eqref{def:beta}
\EndFor
\end{algorithmic}
\end{algorithm}

\subsection{Regret guarantee}
\label{subsec:regret}
In this section, we now present the regret guarantee for Algorithm~\ref{alg:CR-HvtUCB}.
We first establish the result under the assumption that
the corruption level $C$ and $(1+\epsilon)$-moment bounds $\nu_t$
are known to the learner.
We then extend the analysis to the case where
one or both of these quantities are unknown.

When both $C$ and $\nu_t$ are known to the learner, our algorithm achieves the following instance-dependent regret bound.
\begin{theorem}
\label{thm:regret_instance}
Let $\sigma_{t}$, $\tau_t$, $\tau_0$, $w_t$, and $\alpha$ be set according to Lemma~\ref{lem:conf_interval}. Choose $\lambda = d$, $\sigma_{\mathrm{min}}=\frac{1}{\sqrt{T}}$, and $\delta = \frac{1}{8T}$.  
Then, with probability at least $1-1/T$, the regret of Algorithm~\ref{alg:CR-HvtUCB} satisfies
\begin{align*}
\mathrm{Reg}(T) = \tilde{\mathcal{O}}\left(dT^{\frac{1-\epsilon}{2(1+\epsilon)}} \sqrt{\sum_{t=1}^T \nu_t^2} + dT^{\frac{1-\epsilon}{2(1+\epsilon)}} \cdot 1\vee C\right).
\end{align*}
\end{theorem}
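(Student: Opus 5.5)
We condition on the event of Lemma~\ref{lem:conf_interval}; with $\delta = 1/(8T)$ it has probability at least $1-4\delta \ge 1-1/T$. On this event, $\|\hat{\theta}_t-\theta^*\|_{V_{t-1}}\le\beta_{t-1}$ for all $t$. The standard optimism argument for the rule~\eqref{eq:arm_selection} then bounds the per-round regret. Writing $x_t^*$ for the optimal action,
\[
\langle x_t^*-X_t,\theta^*\rangle \le 2\beta_{t-1}\|X_t\|_{V_{t-1}^{-1}} .
\]
Since $\langle x,\theta^*\rangle\in[-LS,LS]$, the per-round regret is also at most $2LS$. Because $\beta_t$ is nondecreasing in $t$, we obtain
\[
\mathrm{Reg}(T)\le 2\beta_T\sum_{t=1}^T \min\{1,\|X_t\|_{V_{t-1}^{-1}}\}
\]
up to constants depending on $L,S$. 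With $\lambda=d$, $\tau_0=\tilde{\mathcal O}(\sqrt d)$ and $\kappa=\tilde{\mathcal O}(d)$, we have $\beta_T=\tilde{\mathcal O}(\sqrt d\, T^{\frac{1-\epsilon}{2(1+\epsilon)}})$. The remaining task is to bound the sum of bonuses $\sum_t \|X_t\|_{V_{t-1}^{-1}}=\sum_t \sigma_t\sqrt{\alpha}\,w_t$ by $\tilde{\mathcal O}(\sqrt d\sqrt{\sum_t\nu_t^2}+\cdots)$.

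Write $w_t=\alpha^{-1/2}\|X_t/\sigma_t\|_{V_{t-1}^{-1}}$. The update of $V_t$ uses exactly $X_t/(\sqrt\alpha\sigma_t)$, so the elliptical potential lemma gives
\[
\sum_t \min\{1,w_t^2\}\le 2d\log\!\Bigl(1+\frac{L^2T}{\sigma_{\min}^2\lambda\alpha d}\Bigr)=2\kappa .
\]
I split rounds according to which of the four terms attains the maximum in~\eqref{eq:sigma_def}. Rounds with $w_t>1$ contribute at most $2\kappa$ rounds of regret $2LS$ each, which is a lower-order term. For the remaining rounds we bound $\|X_t\|_{V_{t-1}^{-1}}=\sqrt\alpha\sigma_t w_t$ and treat each case separately.

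\textbf{Case 1: $\sigma_t\in\{\nu_t,\sigma_{\min}\}$.} By Cauchy--Schwarz, $\sum_t \sigma_t w_t\le\sqrt{\sum_t(\nu_t^2+\sigma_{\min}^2)}\sqrt{2\kappa}$. Since $\sigma_{\min}^2 T=1$, this gives $\tilde{\mathcal O}(\sqrt d\sqrt{\sum_t\nu_t^2+1})$. Multiplying by $\beta_T$ yields the first term of the theorem.

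\textbf{Case 2: the third term is active.} Then $\sigma_t^2 = 2\beta_{t-1}\|X_t\|^2_{V_{t-1}^{-1}}/(\tau_0\sqrt\alpha\, t^{\frac{1-\epsilon}{2(1+\epsilon)}})$, which means
\[
\alpha w_t^2=\frac{\tau_0\sqrt\alpha\, t^{\frac{1-\epsilon}{2(1+\epsilon)}}}{2\beta_{t-1}} .
\]
Up to log factors this is a constant of order $1/\log(T^2/\delta)$. So each such round has $w_t^2\gtrsim 1/\mathrm{polylog}$, and the elliptical potential bound allows only $\tilde{\mathcal O}(\kappa)=\tilde{\mathcal O}(d)$ such rounds. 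Each contributes at most $\tilde{\mathcal O}(\beta_T)$, since the bonus $\beta_{t-1}\min\{1,\|X_t\|_{V_{t-1}^{-1}}\}$ is at most $\beta_{t-1}$. The total is $\tilde{\mathcal O}(dT^{\frac{1-\epsilon}{2(1+\epsilon)}})$, which is absorbed into the $1\vee C$ term.

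\textbf{Case 3: the corruption term is active.} Then $\sigma_t^2=C\kappa^{-1/2}\|X_t\|_{V_{t-1}^{-1}}$. Substituting into $\|X_t\|_{V_{t-1}^{-1}}^2=\alpha\sigma_t^2w_t^2$ gives
\[
\|X_t\|_{V_{t-1}^{-1}}=\alpha C\kappa^{-1/2}w_t^2 .
\]
Summing and using $\sum_t w_t^2\le2\kappa$ on rounds with $w_t\le 1$ gives $\sum_t\|X_t\|_{V_{t-1}^{-1}}\le 2\alpha C\sqrt\kappa=\tilde{\mathcal O}(\sqrt d\, C)$. Multiplying by $\beta_T$ yields $\tilde{\mathcal O}(dT^{\frac{1-\epsilon}{2(1+\epsilon)}}C)$.

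Adding the three cases and the $w_t>1$ rounds gives the bound stated in the theorem.

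The main obstacle is Case~2: showing that the self-referential third component of $\sigma_t$ pins $w_t$ to a near-constant level, so that those rounds are few. This requires checking that $\beta_{t-1}/(\tau_0 t^{\frac{1-\epsilon}{2(1+\epsilon)}})$ is bounded by $\mathrm{polylog}(T/\delta)$ plus a $\sqrt{\lambda}/\tau_0=\tilde{\mathcal O}(1)$ term. This is where the choices $\lambda=d$ and $\delta=1/(8T)$ enter. If that ratio is not uniformly controlled for small $t$, I would handle the initial rounds separately using the trivial bound $2LS$ per round.
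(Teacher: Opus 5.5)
Your proposal is correct and follows the paper's proof essentially step for step. You condition on the event of Lemma~\ref{lem:conf_interval}, use optimism to get $\mathrm{Reg}(T)\le 2\beta_T\sum_t\|X_t\|_{V_{t-1}^{-1}}=2\sqrt{\alpha}\beta_T\sum_t\sigma_t w_t$, and split rounds by the active component of $\sigma_t$, using Cauchy--Schwarz plus the elliptical potential lemma for $\sigma_t\in\{\nu_t,\sigma_{\min}\}$ and the identity $\sigma_t w_t=\sqrt{\alpha}\,C\kappa^{-1/2}w_t^2$ on corruption rounds. Your $\min\{1,\cdot\}$ truncation and round-counting in Case~2 are only cosmetic variants: the paper applies $\sum_t w_t^2\le 2\kappa$ directly and writes $\sigma_t w_t=w_t^{-2}\cdot w_t^2\|X_t\|_{\tilde V_{t-1}^{-1}}\le c_0\,w_t^2L/\sqrt{\alpha\lambda}$ (with $\tilde V_{t-1}=\alpha V_{t-1}$ and $c_0=\sqrt{\lambda(2+4S^2)}/\tau_0+409\log(2T^2/\delta)=\tilde{\mathcal O}(1)$), which is exactly the bounded-$w_t^{-2}$ fact you flag as the main obstacle.
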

A complete proof is provided in Appendix~\ref{subsec:ins_reg}.
\begin{remark}
\label{remark:comparison_regret_instance_yu_huang_wang}
Our result can be viewed as a strict generalization of existing bounds.
When $\epsilon = 1$, our regret bound coincides with the bound
established by \citet{yucorruption} for finite-variance noise. Moreover, in the absence of adversarial corruption ($C = 0$), our bound recovers
the regret order for linear contextual bandits with heavy-tailed noise \citep{huang2023tackling, wang2025heavy}. 
\end{remark}

We now consider the setting where $C$ and/or $\nu_t$
are unknown to the learner.
This includes the cases where only $C$ is unknown,
only $\nu_t$ is unknown,
or both are unknown.
In each case, the unknown quantities
are replaced by available upper bounds
$(\bar{C}, \nu)$
in the definition of $\sigma_t$,
where $\bar{C}$ and $\nu$ is a positive constant such that $\bar{C} \ge C$ and $\mathbb{E}[|\eta_t|^{1+\epsilon} \mid X_{1:t}, \eta_{1:t-1}, c_{1:t-1}]
\le \nu^{1+\epsilon}.$
Under these substitutions,
Lemma~\ref{lem:conf_interval} continues to hold.

When $C$ is unknown, the regret is bounded as follows.
\begin{corollary}
\label{col:regret_c}
Under the parameter choices of Theorem~\ref{thm:regret_instance}, replace $C$ by $\bar{C}$ in the definition of $\sigma_t$.  
Then, with probability at least $1-1/T$,
\begin{align*}
\mathrm{Reg}(T) = \tilde{\mathcal{O}}\left(dT^{\frac{1-\epsilon}{2(1+\epsilon)}} \sqrt{\sum_{t=1}^T \nu_t^2}   +   dT^{\frac{1-\epsilon}{2(1+\epsilon)}} \cdot 1\vee\bar{C}\right).
\end{align*}
\end{corollary}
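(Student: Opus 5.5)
The plan is to rerun the proof of Theorem~\ref{thm:regret_instance} verbatim with $C$ replaced by $\bar{C}$ wherever it enters $\sigma_t$. The only place the true corruption level is used in the analysis is the confidence-set argument of Lemma~\ref{lem:conf_interval}, so first I check that the lemma survives this substitution.

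\textbf{Step 1: the confidence set still holds.} In Lemma~\ref{lem:conf_interval}, the corruption-dependent part of $\sigma_t$ is used to bound the cumulative corruption contribution to the OMD error. The relevant term is $\sum_s |c_s|\,\|X_s/\sigma_s\|_{V_{s-1}^{-1}}\cdot(\text{gradient scale})/\sigma_s$. The lower bound $\sigma_s \ge \sqrt{C}\kappa^{-1/4}\|X_s\|_{V_{s-1}^{-1}}^{1/2}$ gives $\|X_s\|_{V_{s-1}^{-1}}/\sigma_s^2 \le \sqrt{\kappa}/C$. Summing against $|c_s|$ then gives at most $\sqrt{\kappa}\sum_s|c_s|/C=\sqrt{\kappa}$, which is absorbed into $\beta_t$.

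With $\bar C\ge C$ the analogous ratio becomes $\sqrt{\kappa}\,C/\bar C\le\sqrt{\kappa}$. So the same bound holds, and $\beta_t$ is unchanged. All other components of $\sigma_t$ and the definition of $\tau_t$ are untouched, so the heavy-tailed concentration part of the lemma is identical. This is the claim made just before the corollary.

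\textbf{Step 2: the standard regret decomposition.} On the good event, optimism from \eqref{eq:arm_selection} gives
\[
\mathrm{Reg}(T)\le \sum_t \min\{2LS,\,2\beta_{t-1}\|X_t\|_{V_{t-1}^{-1}}\}\le 2\beta_T\sum_t \min\{1,\|X_t\|_{V_{t-1}^{-1}}\}\cdot O(1).
\]
Write $\|X_t\|_{V_{t-1}^{-1}}=\sigma_t\sqrt{\alpha}\,w_t$, and split the rounds according to which of the four terms attains the maximum in $\sigma_t$.

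\emph{$\nu_t$ or $\sigma_{\min}$ attains the maximum.} Cauchy--Schwarz with the elliptical potential lemma ($\sum_t\min\{1,w_t^2\}\le 2\kappa$) gives $\beta_T\sqrt{\kappa}\sqrt{\sum_t(\nu_t^2+\sigma_{\min}^2)}$. Since $\beta_T=\tilde O(\sqrt d\,T^{\frac{1-\epsilon}{2(1+\epsilon)}})$, this yields the first term plus a lower-order $dT^{\frac{1-\epsilon}{2(1+\epsilon)}}$.

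\emph{The $\beta$-dependent term attains the maximum.} Here $\|X_t\|_{V^{-1}_{t-1}}=\tfrac{2\beta_{t-1}}{\tau_0\sqrt\alpha\, t^{(1-\epsilon)/(2(1+\epsilon))}}\,\alpha w_t^2$. The contribution is therefore $\beta_T^2/(\tau_0 T^{\cdot})\cdot\sum_t w_t^2$, which is $\tilde O(dT^{\frac{1-\epsilon}{2(1+\epsilon)}})$.

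\emph{The corruption term attains the maximum.} Here $\|X_t\|_{V_{t-1}^{-1}}=\bar C\kappa^{-1/2}\alpha w_t^2$, so the sum is at most $\beta_T\bar C\kappa^{-1/2}\cdot O(\kappa)=\tilde O(dT^{\frac{1-\epsilon}{2(1+\epsilon)}}\bar C)$.

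\textbf{Step 3: combine.} Add the three contributions and take the union bound with $\delta=1/(8T)$, using $\lambda=d$ and $\sigma_{\min}=1/\sqrt T$. This gives the stated bound with $1\vee\bar C$.

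The main obstacle is Step 1. I must verify that in the proof of Lemma~\ref{lem:conf_interval}, $C$ enters only through the lower bound on $\sigma_t$ and is normalized as $\sum|c_s|/C$, and never appears elsewhere (e.g.\ in $\tau_t$ or $\beta_t$). Monotonicity in $\bar C$ then closes the argument. Everything else is identical to the proof of Theorem~\ref{thm:regret_instance}, with $\bar C$ in place of $C$.
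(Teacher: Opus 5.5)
\textbf{Verdict.} Your route is the paper's: reuse the confidence lemma, then split rounds by which component of $\sigma_t$ is active. Your $\mathcal{J}_1$ and $\mathcal{J}_3$ cases are correct, but the $\beta$-dependent case fails as written and costs an extra $\sqrt{d}$.

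\textbf{Step 1.} This is fine; the paper only asserts it. Besides TERM (B.2.2), $C$ also enters through TERM (A.3), via $\sum_s c_s^2/C^2$, and TERM (B.1.2), via $\sum_s |c_s|/C$. All three bounds go through verbatim because $\sigma_s\ge\sqrt{\bar C}\kappa^{-1/4}\|X_s\|_{V_{s-1}^{-1}}^{1/2}\ge\sqrt{C}\kappa^{-1/4}\|X_s\|_{V_{s-1}^{-1}}^{1/2}$.

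\textbf{The gap in the $\beta$-dependent case.} Let $p:=\frac{1-\epsilon}{2(1+\epsilon)}$ and $K_t:=\frac{2\beta_{t-1}}{\tau_0\sqrt{\alpha}\,t^{p}}$. On these rounds $\sigma_t^2=K_t\|X_t\|_{V_{t-1}^{-1}}^2$, so $\alpha w_t^2=\|X_t\|_{V_{t-1}^{-1}}^2/\sigma_t^2=1/K_t$. The choice of $\sigma_t$ pins $w_t$ and says nothing about $\|X_t\|_{V_{t-1}^{-1}}$. Your identity $\|X_t\|_{V_{t-1}^{-1}}=K_t\,\alpha w_t^2$ therefore asserts $\|X_t\|_{V_{t-1}^{-1}}=1$. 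The trick you carried over from the corruption case works there only because $\sigma_t$ scales like $\|X_t\|_{V_{t-1}^{-1}}^{1/2}$.

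Even taken at face value, or via your truncation $\min\{1,\cdot\}\le 1=K_t\alpha w_t^2$, the resulting bound is too large. Since $\tau_0=\tilde\Theta(\sqrt{\kappa})=\tilde\Theta(\sqrt d)$ and $\beta_T=\tilde\Theta(\sqrt d\,T^{p})$, your expression $\beta_T^2/(\tau_0T^{p})\cdot\sum_t w_t^2$ is $\tilde{\mathcal{O}}(\sqrt d\,T^{p})\cdot\tilde{\mathcal{O}}(d)=\tilde{\mathcal{O}}(d^{3/2}T^{p})$, not $\tilde{\mathcal{O}}(dT^{p})$. That is not covered by the corollary's bound when, for example, $\sum_t\nu_t^2$ and $\bar C$ are $\mathcal{O}(1)$.

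\textbf{The fix.} The missing ingredient is $V_{t-1}\succeq\lambda I_d$ with $\lambda=d$. The correct relation is $\|X_t\|_{V_{t-1}^{-1}}=\|X_t\|_{V_{t-1}^{-1}}\cdot K_t\alpha w_t^2\le\frac{L}{\sqrt{\lambda}}\,K_t\,\alpha w_t^2$. Moreover $K_t\le\frac{2}{\sqrt{\alpha}}\bigl(409\log\frac{2T^2}{\delta}+\sqrt{\lambda(2+4S^2)}/\tau_0\bigr)=\tilde{\mathcal{O}}(1)$, which is the paper's $c_0$ up to constants. Hence $\sum_{t\in\mathcal{J}_2}\beta_{t-1}\|X_t\|_{V_{t-1}^{-1}}\le\beta_T\cdot\frac{L}{\sqrt d}\cdot\tilde{\mathcal{O}}(1)\cdot 2\alpha\kappa=\tilde{\mathcal{O}}(dT^{p})$. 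This matches the paper's bound $\frac{2c_0\kappa L}{\sqrt{\alpha\lambda}}$ on $\sum_{t\in\mathcal{J}_2}\sigma_t w_t$.

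Intuitively, $w_t^2=\tilde\Omega(1)$ on $\mathcal{J}_2$, so the elliptical potential lemma allows only $\tilde{\mathcal{O}}(d)$ such rounds, each costing at most $\beta_T L/\sqrt{d}$. The factor $1/\sqrt{d}$ from $\lambda=d$ is exactly what your computation drops.
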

A complete proof is provided in Appendix~\ref{subsec:unknown_c_reg}.

When $\nu_t$ is unknown, the regret satisfies the following upper bound.
\begin{corollary}
\label{col:regret_nu}
Under the parameter choices of Theorem~\ref{thm:regret_instance}, replace $\nu_t$ with $\nu$ in the definition of $\sigma_t$.  
Then, with probability at least $1-1/T$,
\begin{equation*}
\mathrm{Reg}(T)= \tilde{\mathcal{O}}\left(dT^{\frac{1}{1+\epsilon}}   +   dT^{\frac{1-\epsilon}{2(1+\epsilon)}} \cdot 1\vee C\right).
\end{equation*}
\end{corollary}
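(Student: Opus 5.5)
The plan is to reduce Corollary~\ref{col:regret_nu} to the argument behind Theorem~\ref{thm:regret_instance}, with the known but time-varying bounds $\nu_t$ replaced by the uniform bound $\nu$.

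\textbf{Step 1: confidence set.} Since $\mathbb{E}[|\eta_t|^{1+\epsilon}\mid X_{1:t},\eta_{1:t-1},c_{1:t-1}]\le\nu^{1+\epsilon}$, the constant sequence $\nu_t':=\nu$ is itself a valid sequence of moment bounds in Assumption~\ref{ass:noise}. Using $\nu$ in place of $\nu_t$ in \eqref{eq:sigma_def} is therefore exactly the algorithm of Theorem~\ref{thm:regret_instance} run on the moment sequence $\nu_t'$. As the paper notes, Lemma~\ref{lem:conf_interval} continues to hold under this substitution. So with $\lambda=d$, $\sigma_{\min}=1/\sqrt T$ and $\delta=1/(8T)$, we have $\|\hat\theta_{t+1}-\theta^*\|_{V_t}\le\beta_t$ for all $t$ with probability at least $1-4\delta\ge 1-1/T$. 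On this event,
\[
\beta_t=\tilde{\mathcal O}\bigl(\sqrt d\,T^{\frac{1-\epsilon}{2(1+\epsilon)}}+\sqrt d\bigr).
\]

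\textbf{Step 2: apply the theorem's bound.} The proof of Theorem~\ref{thm:regret_instance} yields a bound in which $\nu_t$ enters only through $\sigma_t$. Applying that same bound to the sequence $\nu_t'=\nu$ gives
\[
\mathrm{Reg}(T)=\tilde{\mathcal O}\Bigl(dT^{\frac{1-\epsilon}{2(1+\epsilon)}}\sqrt{T\nu^2}+dT^{\frac{1-\epsilon}{2(1+\epsilon)}}\cdot 1\vee C\Bigr).
\]
For the first term, $T^{\frac{1-\epsilon}{2(1+\epsilon)}}\cdot T^{1/2}=T^{\frac{1-\epsilon+1+\epsilon}{2(1+\epsilon)}}=T^{\frac{1}{1+\epsilon}}$. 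Treating $\nu$ as a constant, this gives $dT^{1/(1+\epsilon)}$ plus the unchanged corruption term, which is the claimed bound.

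\textbf{Step 3: check that the theorem's proof transfers.} To be safe, we would retrace that proof. Optimism and the confidence set give $\mathrm{Reg}(T)\le 2\sum_t\min\{1,\beta_{t-1}\|X_t\|_{V_{t-1}^{-1}}\}$ up to constants. Write $\|X_t\|_{V_{t-1}^{-1}}=\sigma_t\sqrt\alpha\,w_t$ and split the rounds according to which term attains the maximum in $\sigma_t$.
\begin{itemize}
\item $\nu$ or $\sigma_{\min}$ terms: Cauchy--Schwarz and the elliptical potential lemma $\sum_t\min\{1,w_t^2\}\le 2\kappa$ give $\beta_T\sqrt{\kappa T\nu^2}$.
\item $\beta$-dependent term: here $\sigma_t^2\propto\beta_{t-1}\|X_t\|_{V_{t-1}^{-1}}^2/(\tau_0 t^{\frac{1-\epsilon}{2(1+\epsilon)}})$, which is self-bounding. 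It yields $\tilde{\mathcal O}(\beta_T^2\kappa/(\tau_0T^{\frac{1-\epsilon}{2(1+\epsilon)}}))=\tilde{\mathcal O}(dT^{\frac{1-\epsilon}{2(1+\epsilon)}})$.
\item $C$ term: $\sigma_t^2=C\kappa^{-1/2}\|X_t\|_{V_{t-1}^{-1}}$ gives $\|X_t\|_{V_{t-1}^{-1}}\lesssim C\kappa^{-1/2}w_t^2$, and hence a contribution $\tilde{\mathcal O}(\beta_T C\sqrt\kappa)=\tilde{\mathcal O}(dT^{\frac{1-\epsilon}{2(1+\epsilon)}}C)$.
\end{itemize}
None of these steps uses anything about $\nu_t$ beyond it being a valid moment bound.

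\textbf{Main obstacle.} The one delicate point is confirming that Lemma~\ref{lem:conf_interval} really holds when $\sigma_t$ uses the looser bound $\nu\ge\nu_t$. The lemma's concentration step needs $\mathbb{E}[|\eta_t/\sigma_t|^{1+\epsilon}]\le 1$, which holds because $\sigma_t\ge\nu$. Only the upper bound direction is used, so enlarging $\sigma_t$ does no harm, and the regret bound follows.
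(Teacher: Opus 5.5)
Your proposal is correct and matches the paper's proof. The paper likewise reruns the proof of Theorem~\ref{thm:regret_instance} with the same three-way split of rounds, changing only the $\mathcal{J}_1$ bound to $\sqrt{2\kappa}\,\nu\sqrt{T+1}$; multiplied by $\beta_T=\tilde{\mathcal{O}}(\sqrt{d}\,T^{\frac{1-\epsilon}{2(1+\epsilon)}})$, this gives $dT^{\frac{1}{1+\epsilon}}$, exactly as in your Step~2. There is one small slip in your Step~3 check: $\beta_T^2\kappa/(\tau_0T^{\frac{1-\epsilon}{2(1+\epsilon)}})$ is actually $\tilde{\mathcal{O}}(d^{3/2}T^{\frac{1-\epsilon}{2(1+\epsilon)}})$, and reaching $dT^{\frac{1-\epsilon}{2(1+\epsilon)}}$ needs the factor $\|X_t\|_{V_{t-1}^{-1}}\le L/\sqrt{\lambda}=L/\sqrt{d}$, as in the paper's $\mathcal{J}_2$ bound; this does not affect your main reduction.
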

A complete proof is provided in Appendix~\ref{subsec:unknown_nu_reg}.
\begin{remark}
\label{remark:comparison_he}
If $C=\mathcal{O}(\sqrt{T})$, the regret reduces to 
$\tilde{\mathcal{O}}(dT^{\frac{1}{1+\epsilon}})$,
which coincides with the uncorrupted rate ($C=0$).
This bound does not match the lower bound of order $\tilde{\Omega}\left(d^{\frac{2\epsilon}{1+\epsilon}} T^{\frac{1}{1+\epsilon}}\right)$
under bounded $(1+\epsilon)$-moment assumptions \citep{tajdini2025improved}.
Nevertheless, it matches the best known regret order achieved by algorithms for linear contextual bandits with heavy-tailed noise.
\end{remark}

When both $C$ and $\nu_t$ are unknown, we obtain the following regret guarantee.
\begin{corollary}
\label{col:regret_nu_c}
Under the parameter choices of Theorem~\ref{thm:regret_instance}, replace $(C, \nu_t)$ by $(\bar{C},\nu)$ in the definition of $\sigma_t$.  
Then, with probability at least $1-1/T$,
\begin{equation*}
\mathrm{Reg}(T)= \tilde{\mathcal{O}}\left(dT^{\frac{1}{1+\epsilon}}   +   dT^{\frac{1-\epsilon}{2(1+\epsilon)}} \cdot 1\vee\bar{C}\right).
\end{equation*}
\end{corollary}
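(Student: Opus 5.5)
We prove Corollary~\ref{col:regret_nu_c} by reusing the regret decomposition behind Theorem~\ref{thm:regret_instance}, with $\sigma_t$ now built from $(\bar{C},\nu)$. The statement after Corollary~\ref{col:regret_c} says that Lemma~\ref{lem:conf_interval} still holds under this substitution. The reason is that the confidence analysis uses $\nu_t$ and $C$ only through upper bounds: the conditional moment bound $\mathbb{E}[|\eta_t|^{1+\epsilon}\mid\cdot]\le\nu^{1+\epsilon}$, and $\sum_t|c_t|\le C\le\bar{C}$, which controls the corruption contribution $\sum_t |c_t|\,\|X_t\|_{V_{t-1}^{-1}}/\sigma_t^2$. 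The latter is at most $\sum_t |c_t|\kappa^{1/2}/\bar{C}\le\sqrt{\kappa}$, since $\sigma_t^2\ge \bar C\kappa^{-1/2}\|X_t\|_{V_{t-1}^{-1}}$. So, with probability at least $1-4\delta$, we have $\|\hat\theta_t-\theta^*\|_{V_{t-1}}\le\beta_{t-1}$ for all $t$.

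On this event, the UCB rule \eqref{eq:arm_selection} gives the standard per-round bound
\[
\max_{\mathbf{x}\in\mathcal{X}_t}\langle\mathbf{x},\theta^*\rangle-\langle X_t,\theta^*\rangle\le \min\{2LS,\;2\beta_{t-1}\|X_t\|_{V_{t-1}^{-1}}\}.
\]
We write $\|X_t\|_{V_{t-1}^{-1}}=\sigma_t\|X_t/\sigma_t\|_{V_{t-1}^{-1}}$ and split the rounds according to which of the four terms attains the maximum in \eqref{eq:sigma_def}. Each case is handled with Cauchy--Schwarz and the elliptical potential lemma,
\[
\sum_t\min\{1,\alpha^{-1}\|X_t/\sigma_t\|^2_{V_{t-1}^{-1}}\}\le 2\kappa .
\]

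\begin{enumerate}[label=(\roman*)]
\item \emph{$\sigma_t=\nu$ or $\sigma_{\min}$.} The contribution is $\tilde{\mathcal O}\bigl(\beta_T\sqrt{\kappa\, T\nu^2}\bigr)$. Since $\beta_T=\tilde{\mathcal O}\bigl(\sqrt d\,T^{\frac{1-\epsilon}{2(1+\epsilon)}}\bigr)$, this equals $\tilde{\mathcal O}\bigl(d\,T^{\frac{1-\epsilon}{2(1+\epsilon)}}\sqrt T\bigr)=\tilde{\mathcal O}\bigl(dT^{\frac{1}{1+\epsilon}}\bigr)$, because $\frac{1-\epsilon}{2(1+\epsilon)}+\frac12=\frac1{1+\epsilon}$. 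The $\sigma_{\min}=1/\sqrt T$ case gives $\tilde{\mathcal O}(d\,T^{\frac{1-\epsilon}{2(1+\epsilon)}})$.
\item \emph{$\sigma_t$ equals the $\beta$-term.} Then $\|X_t/\sigma_t\|^2_{V_{t-1}^{-1}}$ equals a constant times $\tau_0\sqrt\alpha\, t^{\frac{1-\epsilon}{2(1+\epsilon)}}/\beta_{t-1}$. Hence $\beta_{t-1}\|X_t\|_{V_{t-1}^{-1}}$ is at most $\beta_{t-1}\sigma_t^2$ times the squared weighted norm, up to constants. The sum is therefore bounded by $\tilde{\mathcal O}\bigl(\tau_0T^{\frac{1-\epsilon}{2(1+\epsilon)}}\cdot\kappa\bigr)$, a lower-order term.
\item \emph{$\sigma_t$ equals the corruption term.} Then $\|X_t\|_{V_{t-1}^{-1}}=\sigma_t^2\|X_t/\sigma_t\|^2_{V_{t-1}^{-1}}\cdot\kappa^{1/2}/\bar C$ up to constants. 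The sum becomes $\beta_T\sqrt{\kappa}\,\bar C\cdot 2\alpha\kappa/\kappa$, which is $\tilde{\mathcal O}\bigl(d\,T^{\frac{1-\epsilon}{2(1+\epsilon)}}\bar C\bigr)$.
\end{enumerate}

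The $\min\{\cdot,2LS\}$ truncation handles rounds where the weighted norm exceeds $\sqrt\alpha$. Their number is $\mathcal O(\kappa)$, so they contribute a $\tilde{\mathcal O}(d)$ term.

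Summing the cases, taking $\delta=1/(8T)$ so the failure probability is at most $1/T$, and using $\lambda=d$ gives
\[
\mathrm{Reg}(T)=\tilde{\mathcal O}\bigl(dT^{\frac1{1+\epsilon}}+dT^{\frac{1-\epsilon}{2(1+\epsilon)}}\cdot 1\vee\bar C\bigr).
\]
In fact this argument is exactly the proof of Corollary~\ref{col:regret_c} with $\nu_t\equiv\nu$ substituted, since $\sqrt{\sum_t\nu^2}=\nu\sqrt T$. Equivalently, it is Corollary~\ref{col:regret_nu} with $C$ replaced by $\bar C$.

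The main obstacle is verifying that Lemma~\ref{lem:conf_interval} survives using $\bar C$ and $\nu$. This is a monotonicity check rather than new analysis. Larger $\sigma_t$ only decreases the bias and variance terms in the Huber/OMD concentration argument. We still need to confirm that the heavy-tailed martingale bound depends on $\sigma_t$ only via $\nu_t/\sigma_t\le1$, and the corruption bound only via $\sum|c_t|/\bar C\le1$.
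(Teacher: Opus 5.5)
Your proposal follows the paper's proof: you keep Lemma~\ref{lem:conf_interval} under the substitution $(C,\nu_t)\to(\bar C,\nu)$ and split the rounds by which term attains $\sigma_t$, which is exactly the combination of the proofs of Corollaries~\ref{col:regret_c} and~\ref{col:regret_nu}. Your check that the corruption terms only involve $\sum_s|c_s|/\bar C\le 1$ and the noise terms only involve $\nu/\sigma_t\le 1$ is the reason the paper can simply assert that the lemma survives. Two intermediate computations are misstated, although the final orders survive.

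In case (iii), the correct identity is $\|X_t\|_{V_{t-1}^{-1}}=\bar C\kappa^{-1/2}\|X_t/\sigma_t\|^2_{V_{t-1}^{-1}}$, equivalently $\sigma_t^2\kappa^{1/2}/\bar C$. This is in fact what your final expression $2\alpha\beta_T\sqrt{\kappa}\,\bar C$ uses.

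In case (ii), the claim $\beta_{t-1}\|X_t\|_{V_{t-1}^{-1}}\lesssim\beta_{t-1}\sigma_t^2\|X_t/\sigma_t\|^2_{V_{t-1}^{-1}}=\beta_{t-1}\|X_t\|^2_{V_{t-1}^{-1}}$ fails whenever $\|X_t\|_{V_{t-1}^{-1}}<1$. The fix, as in the paper, is to write $\|X_t\|_{V_{t-1}^{-1}}=w_t^{-2}\cdot w_t^2\,\|X_t\|_{V_{t-1}^{-1}}$. On these rounds $w_t^{-2}=2\sqrt{\alpha}\,\beta_{t-1}/(\tau_0 t^{\frac{1-\epsilon}{2(1+\epsilon)}})=\tilde{\mathcal O}(1)$. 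Combining this with $\|X_t\|_{V_{t-1}^{-1}}\le L/\sqrt{\lambda}$ and $\lambda=d$ gives $\tilde{\mathcal O}(dT^{\frac{1-\epsilon}{2(1+\epsilon)}})$. Your stated bound $\tilde{\mathcal O}(\tau_0T^{\frac{1-\epsilon}{2(1+\epsilon)}}\kappa)=\tilde{\mathcal O}(d^{3/2}T^{\frac{1-\epsilon}{2(1+\epsilon)}})$ is looser by $\sqrt d$. It is absorbed into $dT^{\frac{1}{1+\epsilon}}$ only when $d\lesssim T$.
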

\begin{proof}
The claim follows by combining the proofs of Corollary~\ref{col:regret_c} and Corollary~\ref{col:regret_nu}.
\end{proof}
\section{Proof Sketch}
\label{sec:proof_sketch}
In this section, we focus on the technical ideas underlying the proof of Lemma~\ref{lem:conf_interval}. 
Lemma~\ref{lem:conf_interval} is the key ingredient for establishing our regret bound.
Once this lemma is obtained, the regret guarantee follows from standard UCB arguments.

Lemma~\ref{lem:conf_interval} is established by adapting the OMD-based error decomposition introduced in Lemma~3 of \citet{wang2025heavy}. However, in our setting, adversarial corruption—absent in \citet{wang2025heavy}—enters the gradient of the loss~\eqref{def:huber_loss2} through the input $z_t(\theta)$. The key idea is to decompose the gradient appearing in the OMD analysis into components that are independent of the corruption and those that depend on it.
This separation allows us to explicitly isolate and control the contribution of adversarial corruption in the estimation error. 

In our setting, the estimation error can be decomposed as follows (formal statement in Lemma~\ref{lem:key_decomp}):
\begin{align}
&\|\hat{\theta}_{t+1}-\theta^*\|_{V_t}^2 \le
\underbrace{4\lambda S^2}_{\text{regularization term}}
+ \underbrace{\sum_{s=1}^t
\bigl\|\nabla \ell_s(\hat{\theta}_s)\bigr\|_{V_s^{-1}}^2}_{\text{stability term}}
\notag \\
&\quad + \underbrace{2 \sum_{s=1}^t
\left\langle
\nabla\tilde{\ell}_s(\hat{\theta}_s)
- \nabla\ell_s(\hat{\theta}_s),
\hat{\theta}_s-\theta^*
\right\rangle}_{\text{generalization-gap term}}
+ \underbrace{\left(\frac{1}{\alpha}-1\right)
\sum_{s=1}^t
\bigl\|\hat{\theta}_s-\theta^*\bigr\|_{X_sX_s^\top/\sigma_s^2}}_{\text{negative term}}.
\label{equ:lem3_informal}
\end{align}
Here, $\ell_s(\theta)$ is the loss in \eqref{def:huber_loss2},
while $\tilde{\ell}_s(\theta)$ denotes the
noise- and corruption-free Huber loss~\eqref{def:huber_loss1},
evaluated at
$\tilde z_s(\theta)
= \langle X_s, \theta^* - \theta \rangle / \sigma_s$.

From this decomposition, the corruption appears in the stability and generalization-gap terms because of $\nabla \ell_t(\hat{\theta})$. Therefore, by isolating the corruption
and explicitly controlling them,
the analysis of \citet{wang2025heavy} can be extended to our setting.
We now explain how each term is decomposed and controlled.
\paragraph{Stability Term analysis.}
To isolate the contribution of adversarial corruption,
we first decompose the squared gradient norm at each round $s$ as follows:
\begin{align*}
\bigl\|\nabla \ell_s(\hat{\theta}_s)\bigr\|_{V_s^{-1}}^2
&\le
3 \underbrace{
\left(
   \min\left\{
      \left|\frac{\eta_s}{\sigma_s}\right|,
      \tau_s
   \right\}
\right)^{2}
   \left\|
      \frac{X_s}{\sigma_s}
   \right\|_{V_s^{-1}}^{2}
}_{\text{stochastic term}} + 3 \underbrace{
\left(
   \frac{\langle X_s, \theta^* - \hat{\theta}_s\rangle}{\sigma_s}
\right)^{2}
   \left\|
      \frac{X_s}{\sigma_s}
   \right\|_{V_s^{-1}}^{2}
}_{\text{deterministic term}} \\
&\quad
+ 3 \underbrace{
\left(
   \frac{c_s}{\sigma_s}
\right)^{2}
   \left\|
      \frac{X_s}{\sigma_s}
   \right\|_{V_s^{-1}}^{2}
}_{\text{corruption term}}.
\end{align*}
The sum of the first two terms over $s=1,\ldots,t$ coincides with the corresponding terms in \citet{wang2025heavy}, and is therefore bounded by $\tilde{\mathcal O}(dt^{\frac{1-\epsilon}{1+\epsilon}})$. The sum of the third term does not appear in \citet{wang2025heavy}
and captures the effect of adversarial corruption. It is uniformly bounded by $\tilde{\mathcal O}(d)$, using $\sigma_s \ge \sqrt{C}\kappa^{-1/4}\|X_s\|_{V_{s-1}}^{1/2}$ together with $\sum_{s=1}^t c_s^2 \le C^2$.
The stability term is therefore dominated by the first two terms,
and retains the same order
$\tilde{\mathcal O}(dt^{\frac{1-\epsilon}{1+\epsilon}})$
as in \citet{wang2025heavy}. The complete proof of this analysis is given in Appendix~\ref{sec:stability_proof}.
\paragraph{Generalization Gap Term analysis.}
Following \citet{wang2025heavy}, we first decompose the generalization-gap term into the Huber loss term and the self-regularization term at each round $s$ as follows:
\begin{align*}
\left\langle \nabla \tilde{\ell}_s(\hat{\theta}_s) - \nabla \ell_s(\hat{\theta}_s), \hat{\theta}_s - \theta^* \right\rangle 
&=
\underbrace{
\left\langle 
\nabla \tilde{\ell}_s(\hat{\theta}_s) + \nabla \ell_s(\theta^*) - \nabla \ell_s(\hat{\theta}_s), \hat{\theta}_s - \theta^*
\right\rangle 
}_{\text{Huber loss term}} \\
&\quad + 
\underbrace{
 \left\langle 
-\nabla \ell_s(\theta^*), \hat{\theta}_s - \theta^*
\right\rangle  
}_{\text{Self-regularization term}}.
\end{align*}
Since the corruption appears in both the Huber loss and self-regularization terms through $\nabla \ell_s(\theta^*)$,
we furthuer decompose them into corruption-independent and corruption-dependent components,
and control each part separately.
As in \citet{wang2025heavy}, the Huber loss term is supported only on the event $\bigl\{|z_s(\theta^*)|>\tfrac{\tau_s}{2}\bigr\}.$
Since $z_s(\theta^*) = (\eta_s + c_s)/\sigma_s$ contains the corruption term additively,
we apply a union bound to decompose the event
into corruption-independent and corruption-dependent parts.
\begin{equation*}
\Indi\Bigl\{|z_s(\theta^*)|>\tfrac{\tau_s}{2}\Bigr\}
\le
\Indi\Bigl\{\Bigl|\tfrac{\eta_s}{\sigma_s}\Bigr|>\tfrac{\tau_s}{4}\Bigr\}
+
\Indi\Bigl\{\Bigl|\tfrac{c_s}{\sigma_s}\Bigr|>\tfrac{\tau_s}{4}\Bigr\}.
\end{equation*}
The sum of corruption-independent part over $s=1,\ldots,t$ admits the same bound as in
\citet{wang2025heavy}, namely $\tilde{\mathcal O}(\sqrt{d}t^{\frac{1-\varepsilon}{2(1+\varepsilon)}}\beta_t)$. The sum of corruption-dependent part can be bounded $\tilde{\mathcal O}(\sqrt{d}t^{\frac{1-\varepsilon}{2(1+\varepsilon)}}\beta_t)$ by the definition of $\tau_s$
together with $\sigma_s \ge \sqrt{C}\kappa^{-1/4}\|X_s\|_{V_{s-1}}^{1/2}$. 
Since the corruption-dependent part is of the same order as the
corruption-independent part,
the Huber loss term admits the same order
$\tilde{\mathcal O}(\sqrt{d}t^{\frac{1-\varepsilon}{2(1+\varepsilon)}}\beta_t)$
as in \citet{wang2025heavy}.
For the self-regularization term, by the chain rule, we have $\nabla \ell_s(\theta)=-h_s\left(z_s(\theta)\right)\frac{X_s}{\sigma_s},$
where $h_s:\mathbb R\to\mathbb R$ denotes the derivative of the Huber loss.
Since $h_s$ is $1$-Lipschitz, \(
\bigl|h_s((\eta_s+c_s)/\sigma_s)-h_s(\eta_s/\sigma_s)\bigr|
\le |c_s|/\sigma_s.
\) which allows us to separate the corruption-dependent component as follows:
\begin{align*}
\left\langle 
-\nabla \ell_s(\theta^*), \hat{\theta}_s - \theta^*
\right\rangle
&=\left\langle 
h_s\left(\frac{\eta_s+c_s}{\sigma_s}\right), \hat{\theta}_s - \theta^*
\right\rangle \\
&\le
h_s\!\left(\frac{\eta_s}{\sigma_s}\right)
\left\langle 
\frac{X_s}{\sigma_s}, \hat{\theta}_s - \theta^*
\right\rangle  
+
\frac{|c_s|}{\sigma_s}
\left\langle 
\frac{X_s}{\sigma_s}, \hat{\theta}_s - \theta^*
\right\rangle .
\end{align*}
The sum of the corruption-independent part over $s=1,\dots,t$
is bounded as in \citet{wang2025heavy}, namely
$\tilde{\mathcal{O}}(dt^{\frac{1-\epsilon}{1+\epsilon}})$.
The sum of the corruption-dependent part is of order
$\tilde{\mathcal O}(\sqrt{d}\beta_t)$,
which follows from $\sigma_s \ge \sqrt{C}\kappa^{-1/4}\|X_s\|_{V_{s-1}}^{1/2}$.
Since this part is dominated by the Huber loss term,
the overall bound coincides with that of \citet{wang2025heavy}. The complete proof of this analysis is given in Appendeix~\ref{sec:gene_gap_proof}.

Therefore, despite the presence of adversarial corruption,
both the stability and generalization-gap terms retain the same order
as in \citet{wang2025heavy}.
Consequently, the overall decomposition preserves the same order,
and a confidence radius of
$\tilde{\mathcal{O}}(\sqrt{d}t^{\frac{1-\epsilon}{2(1+\epsilon)}})$
suffices to establish Lemma~\ref{lem:conf_interval}.
The complete proof of Lemma~\ref{lem:conf_interval} is given in
Appendix~\ref{sec:conf_interval_proof}.
\section{Experiments}
\label{sec:experiments}

\begin{figure*}[t] \centering \begin{subfigure}{0.33\textwidth} \centering \includegraphics[width=\linewidth]{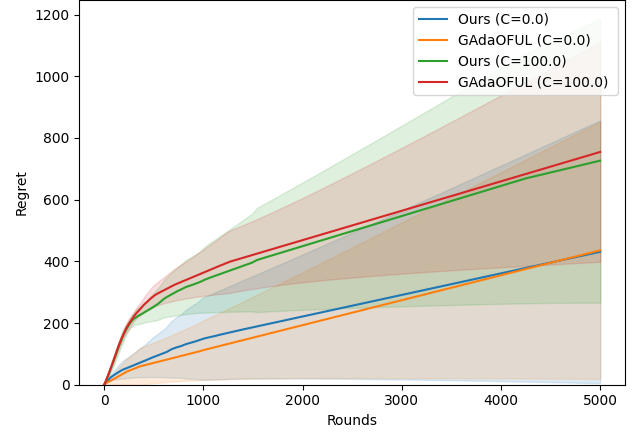} \caption{Regret ($\epsilon=1$)} \label{fig:tdist_regret} \end{subfigure}\hfill \begin{subfigure}{0.33\textwidth} \centering \includegraphics[width=\linewidth]{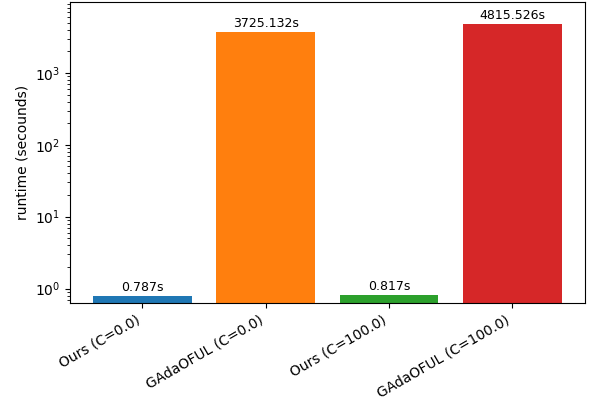} \caption{Runtime ($\epsilon=1$)} \label{fig:tdist_runtime} \end{subfigure}\hfill \begin{subfigure}{0.33\textwidth} \centering \includegraphics[width=\linewidth]{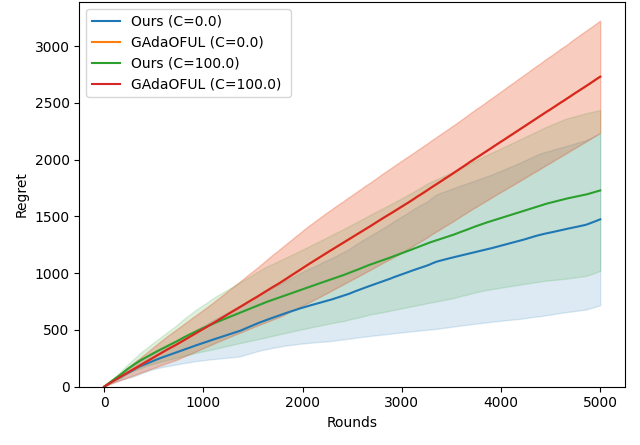} \caption{Regret ($\epsilon<1$)} \label{fig:pareto_regret} \end{subfigure} \caption{Comparison ours and GAdaOFUL} \label{fig:exp} \end{figure*}

In this section, we numerically evaluate the effectiveness of our algorithm (Algorithm~\ref{alg:CR-HvtUCB}) by comparing it with the algorithm of \citet{yucorruption}.
We consider a linear contextual bandit with $d=10$. We sample $\theta^*$ uniformly from the unit sphere in $\mathbb{R}^d$.
At each round $t$, the decision set $\mathcal{X}_t$ consists of $20$
independent random unit vectors in $\mathbb{R}^d$ generated in the same manner.
Thus, in accordance with Assumption~\ref{ass:norm}, we set $L=1$ and $S=1$.
The stochastic reward $r_t^{\prime}$ is generated as in~\eqref{eq:reward}.
Following \citet{bogunovic2021stochastic}, we adopt the $\theta$-flipping corruption mechanism.
The adversary continues flipping the sign of $\langle X_t,\theta^*\rangle$
until the cumulative corruption reaches $C$.
Specifically, for corrupted rounds we observe
$r_t = -\langle  X_t , \theta^*\rangle + \eta_t$,
while the remaining rounds are left uncorrupted, i.e., $r_t = r_t^{\prime}$.
In our experiments, we consider the cases $C=0$ and $C=100$.
We investigate two noise models:
\begin{itemize}
  \item \textbf{In the case of $\epsilon=1$:} Following \citet{yucorruption}, we generate the noise from a Student's $t$-distribution with $3$
degrees of freedom.
This distribution has finite variance ($\mathbb{E}[\eta_t^2] = 3$) and satisfies Assumption~\ref{ass:noise} with $\epsilon = 1$ and $\nu_t = \nu = \sqrt{3}$.

  \item \textbf{In the case of $\epsilon<1$:}
  We generate the noise from a Pareto distribution with shape parameter $\alpha=1.5$ and minimum value $x_{\min}=1.$ This distribution has a finite $(1+\epsilon)$-th moment for $\epsilon=0.4$, and satisfies Assumption~\ref{ass:noise} with $\nu_t=\nu = 15^{\frac{1}{1+\epsilon}}$.
\end{itemize}
We first compare ours (CR-Hvt-UCB) with GAdaOFUL~\citep{yucorruption} when $\epsilon=1$ (Figures~\ref{fig:tdist_regret} and~\ref{fig:tdist_regret}).
We next investigate the behavior of ours and GAdaOFUL when $\epsilon<1$ (Figure~\ref{fig:pareto_regret}).
Each experiment is conducted for $T=5000$ and averaged over $10$ independent runs.

As shown in Figure~\ref{fig:tdist_regret}, we observe that ours achieves regret comparable to that of GAdaOFUL
throughout the time horizon $T$, both in the absence of corruption ($C=0$)
and under adversarial corruption ($C=100$). In contrast, our algorithm achieves substantially lower runtime. 
As shown in Figure~\ref{fig:tdist_runtime}, it is several orders of magnitude faster than GAdaOFUL in both settings.

Figure~\ref{fig:pareto_regret} reveals a clear contrast between the two methods.
For GAdaOFUL, the regret exhibits nearly linear growth both
in the absence of corruption ($C=0$) and under corruption ($C=100$).
This indicates that GAdaOFUL fails to achieve sublinear regret
even without corruption in the case of $\epsilon<1$.
In contrast, although corruption worsens the regret,
our algorithm achieves sublinear regret
for both settings.
\section{Conclusion}
\label{sec:conclusion}
In this work, we study linear contextual bandits under the simultaneous presence
of adversarial corruption and heavy-tailed noise. We highlight the limitations of existing approaches in terms of robustness to heavy-tailed noise and
computational efficiency. 
To address these limitations, we propose CR-Hvt-UCB (Algorithm~\ref{alg:CR-HvtUCB}).
The algorithm is designed to be robust to both adversarial corruption and heavy-tailed noise and is computationally efficient via OMD update. The central mechanism underlying our algorithm is the adaptive scaling
$\sigma_t$, which mitigates the effect of adversarial corruption on parameter estimation.
When both the corruption
level $C$ and the $(1+\epsilon)$-moment upper bounds $\nu_t$ are known, our algorithm achieves
$\tilde{\mathcal{O}}(dT^{\frac{1-\epsilon}{2(1+\epsilon)}} \sqrt{\sum_{t=1}^T \nu_t^2} + dT^{\frac{1-\epsilon}{2(1+\epsilon)}}( 1 \vee C))$ regret.
We further provide regret guarantees for the settings where
$C$ or $\nu_t$, or both, are unknown.

In the corruption-free case ($C = 0$), our regret rate is optimal
among algorithms designed for linear contextual bandits
under bounded $(1+\epsilon)$-moment assumptions;
however, it does not attain the lower bound
for heavy-tailed linear bandits~\citep{tajdini2025improved}. Achieving this lower bound remains an important direction for future work.
Another direction is to extend our framework beyond linear
models to GLBs \citep{yucorruption}.

\bibliographystyle{plainnat}
\bibliography{refs}

\appendix
\section{Additional Related Work}
\label{sec:add_related_work}

This appendix provides a broader overview of related work beyond the main comparisons presented in Section~\ref{sec:related_work}.
Rather than focusing only on the closest comparisons,
we summarize existing studies according to three themes:
(1) adversarial corruption,
(2) heavy-tailed noise, and
(3) computational efficiency.

\subsection{Bandit with Adversarial Corruption}

Adversarial corruption was first studied in the multi-armed bandit setting
\citep{lykouris2018stochastic, gupta2019better}.
\citet{lykouris2018stochastic} derived regret bounds in which the total corruption level $C$ appears multiplicatively with the time horizon $T$. In contrast, \citet{gupta2019better} established that the optimal dependence admits an additive decomposition of the form $\mathrm{Reg}(T) = o(T) + \mathcal{O}(C).$
This additive regret bound has since become a standard benchmark for corruption-robust bandit algorithms.
The study of adversarial corruption was later extended to linear bandits
\citep{li2019stochastic, bogunovic2021stochastic}.
\citet{li2019stochastic} and \citet{bogunovic2021stochastic} proposed corruption-robust algorithms based on arm-elimination techniques. While these methods guarantee robustness to adversarial corruption, they fundamentally rely on a fixed and finite decision set, and therefore do not directly extend to the general linear contextual bandit setting considered in this work.
\citet{bogunovic2021stochastic} further studied extensions to linear contextual bandits under additional structural assumptions on the contexts. In contrast, we impose no such structural assumptions.
Subsequent works \citep{ding2022robust, zhao2021linear} proposed OFUL-type algorithms that are robust to adversarial corruption. However, their regret guarantees are not optimal in the corruption-dependent term. In particular, they do not attain the optimal joint dependence on $T$ and the feature dimension $d$ established by \citet{bogunovic2021stochastic}.
A major advance was made by \citet{he2022nearly}, who proposed an OFUL-based algorithm for linear contextual bandits under adversarial corruption and achieved the optimal regret bound $\tilde{\mathcal{O}}(d\sqrt{T} + dC).$
Most existing works on linear contextual bandits under adversarial corruption assume sub-Gaussian noise, with relatively limited results beyond this regime. 
A notable exception is \citet{yucorruption}, which studies GLBs under both adversarial corruption and heavy-tailed noise under a finite-variance assumption. We adopt the same additive corruption model as in \citep{yucorruption, he2022nearly}, while further relaxing the noise assumption.

\subsection{Bandit with Heavy-tailed Noise}

Multi-armed bandits with heavy-tailed rewards were first studied by \citet{bubeck2013bandits} under the assumption that the noise admits a bounded $(1+\epsilon)$-th moment for some $\epsilon \in (0,1]$. They proposed robust algorithms based on truncation and median-of-means (MoM) estimators. These techniques were later extended to linear bandits by \citet{medina2016no}.
Under the same bounded $(1+\epsilon)$-th moment assumption, \citet{shao2018almost} established a lower bound of order $\Omega(dT^{\frac{1}{1+\epsilon}})$ for heavy-tailed linear contextual bandits. This was the best-known lower bound at the time. They also proposed truncation- and MoM-based algorithms achieving regret of the same order.
\citet{xue2023efficient} later achieved the same order of regret in GLBs, also using truncation and MoM techniques to handle heavy-tailed noise.
For linear contextual bandits with a finite decision set, \citet{xue2020nearly} derived matching lower bounds under the same moment assumption and proposed an algorithm attaining the optimal rate in this restricted setting. However, their approach relies on the finiteness of the decision set and does not extend directly to general contextual bandits.
An alternative line of work is based on Huber-type estimators. \citet{li2023variance} proposed an algorithm based on adaptive Huber regression \citep{sun2020adaptive}, assuming finite-variance noise and deriving variance-aware regret guarantees. This direction was further developed by \citet{huang2023tackling} and \citet{wang2025heavy}, who relaxed the finite-variance assumption to bounded $(1+\epsilon)$-th moments. Under this weaker assumption, they obtained regret bounds of order $\tilde{\mathcal{O}}\left(dT^{\frac{1-\epsilon}{2(1+\epsilon)}} \sqrt{\sum_{t=1}^T \nu_t^2} + dT^{\frac{1-\epsilon}{2(1+\epsilon)}}\right)$, where $\nu_t$ denotes an upper bound on the $(1+\epsilon)$-th moment of the noise at round~$t$.
More recently, \citet{tajdini2025improved} derived an improved lower bound of order $\Omega(d^{\frac{2\epsilon}{1+\epsilon}} T^{\frac{1}{1+\epsilon}})$ for linear contextual bandits with heavy-tailed noise, and proposed an algorithm achieving $\tilde{\mathcal{O}}(d^{\frac{1+3\epsilon}{2(1+\epsilon)}} T^{\frac{1}{1+\epsilon}})$. However, their algorithm assumes a fixed decision set and therefore cannot be directly applied to general contextual settings. Consequently, among algorithms applicable to general linear contextual bandits, the order $\tilde{\mathcal{O}}(dT^{\frac{1}{1+\epsilon}})$ remains the best-known regret rate.
Most existing works on linear contextual bandit with heavy-tailed noise under bounded moment assumptions primarily focus on robustness to heavy-tailed noise and do not account for robustness to adversarial corruption. 
An exception is \citet{yucorruption}, which studies GLBs under both adversarial corruption and heavy-tailed noise. 
Their analysis, however, assumes finite-variance noise. 
In contrast, following the bounded $(1+\epsilon)$-moment framework of \citet{huang2023tackling, wang2025heavy}, we relax the noise assumption to bounded $(1+\epsilon)$-th moments while simultaneously ensuring robustness to adversarial corruption within a unified framework.

\subsection{Computational Efficiency}

In linear bandits, the OFUL algorithm of \citet{abbasi2011improved} maintains a
regularized least-squares estimator with a closed-form update, which admits
$O(1)$ computational cost per round.
Similar closed-form estimators are used in OFUL-based methods
for contextual bandits under adversarial corruption
\citep{zhao2021linear, ding2022robust, he2022nearly}.
In linear contextual bandit problems with heavy-tailed noise,
\citet{xue2023efficient} proposed algorithms based on truncation and
MoM estimators that admit $O(1)$ per-round computational cost.
In contrast, algorithms based on Huber-type loss minimization
generally require iterative optimization
\citep{huang2023tackling, li2023variance}.
In particular, \citet{yucorruption} incur a per-round cost of
$O(t \log T)$ in the combined corruption and heavy-tailed setting.
Recent work has shown that OMD
can enable efficient one-pass updates in bandit problems
\citep{zhang2024online, zhang2025generalized}.
Building on this idea, \citet{wang2025heavy}
developed an OMD-based estimator for heavy-tailed linear contextual bandits.
Our work extends this line of research to both heavy-tailed noise and adversarial corruption.
\section{Analysis for the Confidence Interval}
We first introduce the notation used in the proof of Lemma~\ref{lem:conf_interval}. In particular, we formalize the properties of the Huber loss, define the clean (corruption- and noise-free) counterpart of the loss, and specify the scale and threshold parameters that control the effect of heavy-tailed noise and adversarial corruption.
\begin{property}[Property 1 of \citep{huang2023tackling, wang2025heavy}]
Let $f_\tau(x)$ be the Huber loss defined in~\eqref{def:huber_loss1}. 
The following properties hold:
\begin{enumerate}
    \item $\lvert f'_\tau(x) \rvert = \min\{\lvert x \rvert, \tau\}$. 
    \label{property:huber1}

    \item $f'_\tau(x) = \tau f'_1\left(\frac{x}{\tau}\right)$. 
    \label{property:huber2}

    \item For any $\epsilon \in (0,1]$, $f'_1(x)$ satisfies $-\log\left(1 - x + |x|^{1+\epsilon}\right)
    \le f'_1(x)
    \le\log\left(1 + x + |x|^{1+\epsilon}\right).$
    \label{property:huber3}

    \item $f''_\tau(x) = \Indi\{\lvert x \rvert \le \tau\}$. 
    \label{property:huber4}
\end{enumerate}
\label{property:huber}
\end{property}

Recall that $z_t(\theta) := \frac{r_t - \langle \theta, X_t \rangle}{\sigma_t}$
is the normalized prediction error. Based on Property~\ref{property:huber}, the gradient of the loss~\eqref{def:huber_loss2} can be written as
\begin{equation}
\nabla \ell_t(\theta) =
\begin{cases}
- z_t(\theta)\dfrac{X_t}{\sigma_t}, & \text{if } \lvert z_t(\theta) \rvert \le \tau_t, \\
- \tau_t\dfrac{X_t}{\sigma_t},      & \text{if } z_t(\theta) > \tau_t, \\
  \tau_t\dfrac{X_t}{\sigma_t},      & \text{if } z_t(\theta) < -\tau_t.
\end{cases}
\label{def:huber_gra}
\end{equation}
Moreover, the norms of the gradient and the Hessian are given by
\begin{align}
\bigl\|\nabla \ell_t(\theta)\bigr\|_2
&= \left\|\min\bigl\{\lvert z_t(\theta)\rvert,   \tau_t\bigr\}
   \frac{X_t}{\sigma_t}\right\|_2,
\label{equ:grad_norm}\\
\nabla^2 \ell_t(\theta)
&= \Indi\bigl\{\lvert z_t(\theta)\rvert \le \tau_t\bigr\}
   \frac{X_t X_t^\top}{\sigma_t^2}.
\label{equ:hessian}
\end{align}

\paragraph{Clean Huber loss.}We introduce the clean (noise-free
and corruption-free) loss to separate the stochastic noise and adversarial corruption effects,.
We define
\begin{equation*}
    \tilde{z}_t(\theta) :=
    \frac{X_t^\top(\theta^* - \theta)}{\sigma_t},
\end{equation*}
which we call the clean normalized prediction error.
Based on $\tilde{z}_t(\theta)$, we define the corresponding clean 
Huber loss:
\begin{equation*}
\tilde{\ell}_t(\theta) :=
\begin{cases}
\frac{1}{2} \tilde{z}_t(\theta)^2,
   & \text{if } \lvert \tilde{z}_t(\theta)\rvert \le \tau_t, \\
\tau_t \lvert \tilde{z}_t(\theta)\rvert - \frac{\tau_t^2}{2},
   & \text{if } \lvert \tilde{z}_t(\theta)\rvert > \tau_t.
\end{cases}
\end{equation*}

Similarly to $\ell_t(\theta)$, the gradient and Hessian of $\tilde{\ell}_t(\theta)$ take the form
\begin{align}
\nabla \tilde{\ell}_t(\theta)
&=
\begin{cases}
- \tilde{z}_t(\theta)\dfrac{X_t}{\sigma_t},
   & \text{if } \lvert \tilde{z}_t(\theta)\rvert \le \tau_t, \\
- \tau_t\dfrac{X_t}{\sigma_t},
   & \text{if } \tilde{z}_t(\theta) > \tau_t, \\
    \tau_t\dfrac{X_t}{\sigma_t},
   & \text{if } \tilde{z}_t(\theta) < -\tau_t,
\end{cases}
\label{def:huber_gra_clean}\\
\nabla^2 \tilde{\ell}_t(\theta)
&=
\Indi\{\lvert \tilde{z}_t(\theta)\rvert \le \tau_t\}
\frac{X_t X_t^\top}{\sigma_t^2}.
\label{def:huber_hessian_clean}
\end{align}

\paragraph{Huber threshold parameter.}
Recall that the threshold parameter $\tau_t$ is defined as
\begin{equation}
\tau_t
:= \tau_0 \frac{\sqrt{1 + w_t^2}}{w_t}
t^{\frac{1 - \epsilon}{2(1 + \epsilon)}},
\qquad
\tau_0
:= \frac{ \sqrt{2\kappa}
(\log 3T)^{\frac{1 - \epsilon}{2(1 + \epsilon)}} }
{ \left( \log \frac{2T^2}{\delta} \right)^{\frac{1}{1 + \epsilon}} } .
\label{def:tau}
\end{equation}

Here $w_t$ is given by
\begin{equation}
w_t := \frac{1}{\sqrt{\alpha}}
\left\| \frac{X_t}{\sigma_t} \right\|_{V_{t-1}^{-1}} .
\label{def:wt}
\end{equation}

\paragraph{Scale parameter.}
Recall that the scale parameter $\sigma_t$ is defined as
\begin{equation}
\sigma_t 
:= \max\Biggl\{
\nu_t,
 \sigma_{\min},
 \sqrt{\frac{2\beta_{t-1}}
            {\tau_0 \sqrt{\alpha}
             t^{\frac{1-\epsilon}{2(1+\epsilon)}}}}
 \|X_t\|_{V_{t-1}^{-1}}, 
 \sqrt{C}\kappa^{-1/4}
   \|X_t\|_{V_{t-1}^{-1}}^{1/2}
\Biggr\}.
\label{def:sigma}
\end{equation}

\paragraph{Good event.}
To state high-probability guarantees,
we define the following confidence event.
\begin{equation}
\mathcal{A}_t
:= \left\{
\forall s \in [t],  
\left\| \hat{\theta}_s - \theta^* \right\|_{V_{s-1}}
\le \beta_{s-1}
\right\}.
\label{def:At}
\end{equation}

\subsection{Key Lemma}
\label{sec:key_lemma}
In this section, we present three lemmas
that are key to the proof of Lemma~\ref{lem:conf_interval}.
The Lemma~\ref{lem:key_decomp} established by Lemma~3 of~\citet{wang2025heavy},
provides a decomposition of the estimation error into four components.
Based on this decomposition, Lemma~\ref{lem:stability} analyzes the stability term,
while Lemma~\ref{lem:gene_gap} analyzes the generalization-gap term.

\begin{lemma}[Lemma 3 in \citet{wang2025heavy}]
\label{lem:key_decomp}
Suppose the good event $\mathcal{A}_t$ holds, and let $\tau_t$ be
chosen as in~\eqref{def:tau}. 
Assume that $\sigma_t$ satisfies
\begin{equation}
\sigma_t^2
  \ge  
\frac{
2\|X_t\|_{V_{t-1}^{-1}}^2\beta_{t-1}
}{
\sqrt{\alpha}\tau_0
t^{\frac{1-\epsilon}{2(1+\epsilon)}} } .
\label{cond:sigma_key}
\end{equation}
Then the estimation error admits the decomposition
\begin{align}
&\|\hat{\theta}_{t+1}-\theta^*\|_{V_t}^2 \notag \\
&\le
\underbrace{4\lambda S^2}_{\text{(i) regularization term}}
+ \underbrace{\sum_{s=1}^t
\bigl\|\nabla \ell_s(\hat{\theta}_s)\bigr\|_{V_s^{-1}}^2}_{\text{(ii) stability term}}
\notag 
+ \underbrace{2 \sum_{s=1}^t
\left\langle
\nabla\tilde{\ell}_s(\hat{\theta}_s)
- \nabla\ell_s(\hat{\theta}_s),
\hat{\theta}_s-\theta^*
\right\rangle}_{\text{(iii) generalization-gap term}}
\notag \\
&\quad
+ \underbrace{\left(\frac{1}{\alpha}-1\right)
\sum_{s=1}^t
\bigl\|\hat{\theta}_s-\theta^*\bigr\|_{X_sX_s^\top/\sigma_s^2}}_{\text{(iv) negative term}}.
\label{eq:key_decomp}
\end{align}
\end{lemma}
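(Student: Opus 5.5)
We follow the standard one-step OMD analysis, as in Lemma~3 of \citet{wang2025heavy}, and then telescope; corruption enters only through $\nabla\ell_s$, which is kept symbolic, so the argument carries over unchanged.

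\textbf{Step 1 (one-step inequality).} By the first-order optimality of the projection in \eqref{def:theta_OMD2} against the comparator $\theta^*\in\Theta$, and the three-point identity for the quadratic Bregman divergence $\mathcal{D}_{\psi_s}(\theta,\theta')=\frac12\|\theta-\theta'\|_{V_s}^2$, we obtain
\[
\|\hat\theta_{s+1}-\theta^*\|_{V_s}^2 \le \|\hat\theta_s-\theta^*\|_{V_s}^2 + 2\langle \nabla\ell_s(\hat\theta_s),\theta^*-\hat\theta_s\rangle + \|\nabla\ell_s(\hat\theta_s)\|_{V_s^{-1}}^2 .
\]
Here the last term comes from bounding $\|\tilde\theta_{s+1}-\hat\theta_s\|_{V_s}^2$.

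\textbf{Step 2 (add and subtract the clean gradient).} We write $\langle\nabla\ell_s(\hat\theta_s),\theta^*-\hat\theta_s\rangle = \langle\nabla\tilde\ell_s(\hat\theta_s)-\nabla\ell_s(\hat\theta_s),\hat\theta_s-\theta^*\rangle - \langle\nabla\tilde\ell_s(\hat\theta_s),\hat\theta_s-\theta^*\rangle$. The first piece is the generalization-gap term. For the second piece, since $\tilde z_s(\theta^*)=0$, we have $\nabla\tilde\ell_s(\theta^*)=0$. The clean loss is convex, with Hessian $\Indi\{|\tilde z_s|\le\tau_s\}X_sX_s^\top/\sigma_s^2$. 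Along the segment from $\theta^*$ to $\hat\theta_s$, $|\tilde z_s|\le|\tilde z_s(\hat\theta_s)|$.

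We need $|\tilde z_s(\hat\theta_s)|\le\tau_s$. Under $\mathcal{A}_t$,
\[
|\tilde z_s(\hat\theta_s)|\le \|X_s\|_{V_{s-1}^{-1}}\beta_{s-1}/\sigma_s .
\]
We use condition \eqref{cond:sigma_key} together with $\tau_s\ge\tau_0 s^{\frac{1-\epsilon}{2(1+\epsilon)}}/w_s$ and $w_s=\|X_s\|_{V_{s-1}^{-1}}/(\sqrt\alpha\sigma_s)$. These give $\tau_s\ge \tau_0 s^{\frac{1-\epsilon}{2(1+\epsilon)}}\sqrt\alpha\sigma_s/\|X_s\|_{V_{s-1}^{-1}}$, and \eqref{cond:sigma_key} makes this at least $2\beta_{s-1}\|X_s\|_{V_{s-1}^{-1}}/\sigma_s$. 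Hence $|\tilde z_s|\le\tau_s/2$ on the whole segment, the Hessian there equals $X_sX_s^\top/\sigma_s^2$, and
\[
\langle\nabla\tilde\ell_s(\hat\theta_s),\hat\theta_s-\theta^*\rangle=\|\hat\theta_s-\theta^*\|^2_{X_sX_s^\top/\sigma_s^2}.
\]
The condition is applied at each $s\le t$; since $\sigma_s$ is defined by \eqref{def:sigma}, it holds at every round.

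\textbf{Step 3 (telescope).} We use $V_s=V_{s-1}+\alpha^{-1}X_sX_s^\top/\sigma_s^2$, so that $\|\hat\theta_s-\theta^*\|_{V_s}^2=\|\hat\theta_s-\theta^*\|_{V_{s-1}}^2+\frac1\alpha\|\hat\theta_s-\theta^*\|^2_{X_sX_s^\top/\sigma_s^2}$. Combining with Step~2, the quadratic contributions per round are $\frac1\alpha-2$ times the local norm. Summing over $s=1,\dots,t$ telescopes the $V_{s-1}$-norms down to $\|\hat\theta_1-\theta^*\|_{V_0}^2=\lambda\|\theta^*\|^2\le \lambda S^2\le4\lambda S^2$, which is the regularization term. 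The resulting coefficient $\frac1\alpha-2$ is at most the stated $\frac1\alpha-1$ because the local norms are nonnegative, so it can be relaxed to $\frac1\alpha-1$. The factor $4\lambda S^2$ similarly absorbs a diameter bound, in case one prefers $\|\hat\theta_1-\theta^*\|\le 2S$ for general initialization.

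\textbf{Main obstacle.} The key step is Step~2: verifying that the clean loss is exactly quadratic on the segment, which requires converting \eqref{cond:sigma_key} and $\mathcal{A}_t$ into $|\tilde z_s|\le\tau_s$. The rest is bookkeeping. Note that the argument never uses the noise or corruption structure, which is why the lemma from \citet{wang2025heavy} transfers verbatim.
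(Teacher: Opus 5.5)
Your proposal is correct and is essentially the argument the paper defers to (Lemma~3 of \citet{wang2025heavy}): a one-step OMD inequality, adding and subtracting the clean gradient, using $\mathcal{A}_t$ together with the $\sigma$-condition to keep the clean Huber loss in its quadratic region, and telescoping. As the paper does, you rightly observe that corruption enters only through the symbolic $\nabla\ell_s$; your exact evaluation of the clean-gradient inner product gives the slightly sharper coefficient $\frac{1}{\alpha}-2$, which you validly relax to the stated $\frac{1}{\alpha}-1$.
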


The four terms respectively correspond to 
(i) the regularization term reflecting prior knowledge,
(ii) the stability term, which quantifies how stably the iterates evolve,
(iii) the generalization-gap term, which measures the discrepancy between the clean and corrupted gradients and captures the effect of adversarial corruption and heavy-tailed noise,
and (iv) a negative term characteristic of OMD analyses, involving the estimation error.

The proof follows the same argument as in \citet{wang2025heavy}.
Although our model additionally allows adversarial corruption,
the Lemma~\ref{lem:key_decomp} depends only on the good event $\mathcal{A}_t$,
condition~\eqref{cond:sigma_key}, and the construction of $\tau_t$,
all of which are independent of the adversarial corruption.
Hence, the same argument applies.

\begin{lemma}[Stability Lemma]
\label{lem:stability}
Let $\tau_t$ be defined in~\eqref{def:tau} and choose $\sigma_t$ as
\begin{equation}
\sigma_t
= \max\Biggl\{\nu_t,  \sigma_{\min},  
 2\sqrt{2}\|X_t\|_{V_{t-1}^{-1}},  \sqrt{C}\kappa^{-1/4}
   \|X_t\|_{V_{t-1}^{-1}}^{1/2}
\Biggr\},
\label{def:sigma_stab_en}
\end{equation}
where $\kappa
= d \log\!\left(
1+\frac{L^2 T}{\sigma_{\min}^2 \lambda \alpha d}
\right).$
Let $\alpha > 0$ be a constant stepsize parameter in the OMD update.
Then with probability at least $1-\delta$, for all $t\ge1$,
\begin{align*}
\sum_{s=1}^t \bigl\|\nabla \ell_s(\hat{\theta}_s)\bigr\|^2_{V_s^{-1}} 
&\le
18\alpha
\Biggl[
t^{\frac{1-\epsilon}{2(1+\epsilon)}}
\sqrt{2\kappa}
(\log 3T)^{\frac{1-\epsilon}{2(1+\epsilon)}}
\bigl(\log(2T^2/\delta)\bigr)^{\frac{\epsilon}{1+\epsilon}}
\Biggr]^2 
+ \frac{3}{8}\sum_{s=1}^t
\bigl\|\theta^* - \hat{\theta}_s\bigr\|^2_{\frac{X_s X_s^\top}{\sigma_s^2}}
+ 3\kappa.
\end{align*}
\end{lemma}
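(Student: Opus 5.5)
We bound each summand by splitting the gradient into a noise part, a parameter-error part and a corruption part. By \eqref{equ:grad_norm}, $\|\nabla\ell_s(\hat\theta_s)\|_{V_s^{-1}}^2=\min\{|z_s(\hat\theta_s)|,\tau_s\}^2\|X_s/\sigma_s\|_{V_s^{-1}}^2$, and $z_s(\hat\theta_s)=\eta_s/\sigma_s+\tilde z_s(\hat\theta_s)+c_s/\sigma_s$. We use $\min\{|a+b+c|,\tau\}\le\min\{|a|,\tau\}+|b|+|c|$ and $(x+y+z)^2\le 3(x^2+y^2+z^2)$. This gives three sums, each with the common factor $\|X_s/\sigma_s\|_{V_s^{-1}}^2$: the stochastic term $\min\{|\eta_s/\sigma_s|,\tau_s\}^2$, the deterministic term $\tilde z_s(\hat\theta_s)^2$, and the corruption term $(c_s/\sigma_s)^2$.

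\textbf{Deterministic term.} Since $\sigma_s\ge 2\sqrt2\|X_s\|_{V_{s-1}^{-1}}$ and $V_s\succeq V_{s-1}$,
\[
\|X_s/\sigma_s\|_{V_s^{-1}}^2\le \|X_s\|_{V_{s-1}^{-1}}^2/\sigma_s^2\le 1/8 .
\]
Hence $3\tilde z_s^2\|X_s/\sigma_s\|^2_{V_s^{-1}}\le \tfrac38\|\theta^*-\hat\theta_s\|^2_{X_sX_s^\top/\sigma_s^2}$, which is exactly the middle term of the claim.

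\textbf{Corruption term.} Use $\|X_s/\sigma_s\|_{V_s^{-1}}^2\le\|X_s\|_{V_{s-1}^{-1}}^2/\sigma_s^2$ and $\sigma_s^2\ge C\kappa^{-1/2}\|X_s\|_{V_{s-1}^{-1}}$. This gives
\[
\frac{c_s^2}{\sigma_s^2}\cdot\frac{\|X_s\|_{V_{s-1}^{-1}}^2}{\sigma_s^2}
\le \frac{c_s^2\,\kappa}{C^2}.
\]
Summing and using $\sum_s c_s^2\le(\sum_s|c_s|)^2=C^2$ yields at most $\kappa$, so $3\kappa$ after the factor 3. If $C=0$ the term vanishes.

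\textbf{Stochastic term (main obstacle).} We follow \citet{wang2025heavy}, since this term involves no corruption. First, $\|X_s/\sigma_s\|^2_{V_s^{-1}}\le\alpha\, w_s^2/(1+w_s^2)$ by Sherman--Morrison, because $V_s=V_{s-1}+\alpha^{-1}X_sX_s^\top/\sigma_s^2$. Write $\tau_s=\tau_0 t_s\sqrt{1+w_s^2}/w_s$ with $t_s=s^{\frac{1-\epsilon}{2(1+\epsilon)}}$. Then
\[
\min\{|\eta_s/\sigma_s|,\tau_s\}^2\le \tau_s^{1-\epsilon}|\eta_s/\sigma_s|^{1+\epsilon}.
\]
Consequently each stochastic summand is bounded by
\[
\alpha\,(\tau_0t_s)^{1-\epsilon}\Bigl(\tfrac{w_s^2}{1+w_s^2}\Bigr)^{\frac{1+\epsilon}{2}}|\eta_s/\sigma_s|^{1+\epsilon}.
\]
Its conditional mean is controlled using $\mathbb E|\eta_s/\sigma_s|^{1+\epsilon}\le1$ (because $\sigma_s\ge\nu_s$). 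We then apply a Bernstein/Freedman-type inequality for nonnegative summands with bounded increments, using a stitching/union bound over $t$ (the source of $\log(2T^2/\delta)$). Each increment is bounded by $\alpha\tau_0^2t_s^2\,w_s^2/(1+w_s^2)$ since it is at most $\alpha\tau_s^2 w_s^2/(1+w_s^2)$.

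To sum the mean part we use Hölder with exponents $\frac{2}{1+\epsilon}$ and $\frac{2}{1-\epsilon}$. Combined with the elliptical potential $\sum_s \min\{1,w_s^2\}\lesssim\alpha\kappa$ (with $\kappa$ defined via $\sigma_{\min}$), this gives $\sum_s (w_s^2/(1+w_s^2))^{\frac{1+\epsilon}{2}}\le t^{\frac{1-\epsilon}{2}}(2\kappa)^{\frac{1+\epsilon}{2}}$. The factor $(\log 3T)$ appears through a peeling step.

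Collecting powers, the choice of $\tau_0$ in \eqref{def:tau} is designed so that the mean part and the deviation part are both of order $\alpha\,[t^{\frac{1-\epsilon}{2(1+\epsilon)}}\sqrt{2\kappa}(\log3T)^{\frac{1-\epsilon}{2(1+\epsilon)}}(\log\frac{2T^2}\delta)^{\frac{\epsilon}{1+\epsilon}}]^2$. Multiplying by 3 gives the constant 18. The delicate part is this exponent bookkeeping and the uniform-in-$t$ concentration. Since the noise structure and $\tau_s$ coincide with those of \citet{wang2025heavy} (our extra $\sigma_s$ lower bound only shrinks $w_s$ and $|\eta_s/\sigma_s|$), I would invoke their stability lemma's stochastic bound verbatim and only verify monotonicity in $\sigma_s$.
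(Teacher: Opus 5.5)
Your proof is correct and follows the paper's proof essentially step for step. The paper uses the same three-way split of the clipped gradient with the factor $3$. It gets the $\tfrac38$ deterministic term from $\sigma_s\ge 2\sqrt{2}\|X_s\|_{V_{s-1}^{-1}}$, and the $3\kappa$ corruption term from $\sigma_s\ge\sqrt{C}\kappa^{-1/4}\|X_s\|_{V_{s-1}^{-1}}^{1/2}$ together with $\sum_s c_s^2\le C^2$. It handles the stochastic term by citing an existing bound: Lemma~C.5 of \citet{huang2023tackling} with $b=\max_t\nu_t/\sigma_t\le 1$, which with the choice of $\tau_0$ gives $6\alpha[\cdot]^2$ before the factor $3$.

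Two small points about your optional sketch of the stochastic term. First, $\alpha\tau_s^2 w_s^2/(1+w_s^2)$ equals $\alpha\tau_0^2 t_s^2$, not $\alpha\tau_0^2 t_s^2\,w_s^2/(1+w_s^2)$. Second, the reason you may reuse the cited bound is not monotonicity in $\sigma_s$; changing $\sigma_s$ changes the whole trajectory. The reason is that $\sigma_s$ is $\mathcal{F}_s$-measurable, where $\mathcal{F}_s$ includes $c_{1:s-1}$, and satisfies $\sigma_s\ge\nu_s$ and $\sigma_s\ge\sigma_{\min}$.
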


\begin{lemma}[Generalization-gap Lemma]
\label{lem:gene_gap}
Let $\tau_t$ be defined in~\eqref{def:tau}. We choose $\sigma_t$ as
\begin{equation}
\sigma_t
=
\max\Biggl\{
  \nu_t,
  \sigma_{\min},
  2\sqrt{2}\|X_t\|_{V_{t-1}^{-1}},
  \sqrt{\frac{
      2 \|X_t\|_{V_{t-1}^{-1}}^2 \beta_{t-1}
  }{
      \sqrt{\alpha}\tau_0
      t^{\frac{1-\epsilon}{2(1+\epsilon)}}
  }},
  \sqrt{C}\kappa^{-1/4}\|X_t\|_{V_{t-1}^{-1}}^{1/2}
\Biggr\}.
\label{def:sigma_gap}
\end{equation}
The event $\mathcal{A}_t$ is defined as in~(\ref{def:At}).
Then, with probability at least $1-3\delta$, for any $t \ge 1$,
\begin{align*}
&\sum_{s=1}^t \left\langle 
\nabla \tilde{\ell}_s(\hat{\theta}_s) - \nabla \ell_s(\hat{\theta}_s), 
\hat{\theta}_s - \theta^*
\right\rangle \Indi_{\mathcal{A}_s} \\
&\le 65 \sqrt{\alpha + \tfrac{1}{8}}
\log\frac{2T^2}{\delta}
\tau_0 t^{\frac{1-\epsilon}{2(1+\epsilon)}} 
\max_{u \in [t+1]} \beta_{u-1} 
+ 12 \sqrt{1+\tfrac{1}{8\alpha}}
\sqrt{\kappa}
t^{\frac{1-\epsilon}{2(1+\epsilon)}} 
\max_{u \in [t+1]} \beta_{u-1} \\
&\quad+ \frac{1}{4}
\left(
\lambda \alpha + 
\sum_{s=1}^t 
\left\langle \frac{X_s}{\sigma_s}, \hat{\theta}_s - \theta^* \right\rangle^2
\right) 
+ \left(
8 t^{\frac{1-\epsilon}{2(1+\epsilon)}}
\sqrt{2\kappa}(\log 3T)^{\frac{1-\epsilon}{2(1+\epsilon)}}
\left( 
\log \frac{2T^2}{\delta}
\right)^{\frac{\epsilon}{1+\epsilon}}
\right)^{2} \\
&\qquad + \sqrt{\kappa}\max_{u\in [t+1]} \beta_{u-1}.
\end{align*}
\end{lemma}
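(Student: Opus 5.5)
We decompose each summand as in Section~\ref{sec:proof_sketch}, writing $\Delta_s := \hat{\theta}_s-\theta^*$ and $h_s := f'_{\tau_s}$, so that $\nabla \ell_s(\theta) = -h_s(z_s(\theta))X_s/\sigma_s$. We split
\[
\bigl\langle \nabla\tilde\ell_s(\hat\theta_s)-\nabla\ell_s(\hat\theta_s),\Delta_s\bigr\rangle
= \underbrace{\bigl\langle \nabla\tilde\ell_s(\hat\theta_s)+\nabla\ell_s(\theta^*)-\nabla\ell_s(\hat\theta_s),\Delta_s\bigr\rangle}_{\text{Huber loss term}}
+\underbrace{\bigl\langle -\nabla\ell_s(\theta^*),\Delta_s\bigr\rangle}_{\text{self-regularization term}} .
\]
Each term is further separated into a noise part and a corruption part, and the bound on $\mathcal{A}_s$ is assembled from four pieces. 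On $\mathcal{A}_s$ we use $|\langle X_s/\sigma_s,\Delta_s\rangle|\le \|X_s/\sigma_s\|_{V_{s-1}^{-1}}\beta_{s-1}=\sqrt{\alpha}w_s\beta_{s-1}$. By the $\sigma_s$ condition in \eqref{def:sigma_gap}, this gives $|\tilde z_s(\hat\theta_s)|\le \tau_s/2$. Hence, by Property~\ref{property:huber}, $\nabla\tilde\ell_s(\hat\theta_s)=\tilde z_s(\hat\theta_s)X_s/\sigma_s$.

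\textbf{Huber loss term.} Since $z_s(\hat\theta_s)=z_s(\theta^*)+\tilde z_s(\hat\theta_s)$ and $h_s$ is linear on $[-\tau_s,\tau_s]$, the integrand vanishes whenever $|z_s(\theta^*)|\le\tau_s/2$. Otherwise it is bounded by $2|\tilde z_s(\hat\theta_s)|\,|\langle X_s/\sigma_s,\Delta_s\rangle|\lesssim \alpha w_s^2\beta_{s-1}^2$, or more crudely by $\tau_s\sqrt\alpha w_s\beta_{s-1}$. We then use the union bound $\Indi\{|z_s(\theta^*)|>\tau_s/2\}\le \Indi\{|\eta_s|/\sigma_s>\tau_s/4\}+\Indi\{|c_s|/\sigma_s>\tau_s/4\}$.

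For the noise indicator we follow \citet{wang2025heavy}. Markov's inequality gives $\Pr(|\eta_s|/\sigma_s>\tau_s/4)\le (4/\tau_s)^{1+\epsilon}$, using $\sigma_s\ge\nu_s$. A Freedman/Bernstein-type martingale bound, union-bounded over $t$ (probability $\delta$), together with $\tau_s w_s\approx \tau_0 s^{\frac{1-\epsilon}{2(1+\epsilon)}}\sqrt{1+w_s^2}$ and the elliptical potential bound $\sum_s\min\{1,w_s^2\}\le 2\kappa$, yields the $65\sqrt{\alpha+1/8}\log\frac{2T^2}{\delta}\tau_0 t^{\frac{1-\epsilon}{2(1+\epsilon)}}\max_u\beta_{u-1}$ term.

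For the corruption indicator we bound $\Indi\{|c_s|>\sigma_s\tau_s/4\}\le 4|c_s|/(\sigma_s\tau_s)$. Multiplying by $\tau_s\sqrt\alpha w_s\beta_{s-1}$ leaves $\lesssim |c_s|\,\|X_s\|_{V_{s-1}^{-1}}\beta_{s-1}/\sigma_s^2$. Since $\sigma_s^2\ge C\kappa^{-1/2}\|X_s\|_{V_{s-1}^{-1}}$, this is at most $\sqrt{\kappa}\,\beta_{s-1}|c_s|/C$, and summing gives $\sqrt\kappa\max_u\beta_{u-1}$, the last term in the statement.

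\textbf{Self-regularization term.} Using $1$-Lipschitzness of $h_s$, we get $\langle-\nabla\ell_s(\theta^*),\Delta_s\rangle\le h_s(\eta_s/\sigma_s)\langle X_s/\sigma_s,\Delta_s\rangle+\frac{|c_s|}{\sigma_s}|\langle X_s/\sigma_s,\Delta_s\rangle|$.

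The noise part is a martingale with bounded increments, since $\Delta_s$ is predictable. Here we reproduce the argument of \citet{wang2025heavy}. Property~\ref{property:huber}(3) gives a one-sided exponential supermartingale in $\mu\, h_s(\eta_s/\sigma_s)\langle X_s/\sigma_s,\Delta_s\rangle\Indi_{\mathcal{A}_s}$. The factor $|x|^{1+\epsilon}$ is controlled by $\nu_s\le\sigma_s$ and $\tau_s$, and a peeling/union argument over the scale of $\sum\langle X_s/\sigma_s,\Delta_s\rangle^2$ (probability $2\delta$) produces $\frac14(\lambda\alpha+\sum_s\langle X_s/\sigma_s,\Delta_s\rangle^2)+(8t^{\frac{1-\epsilon}{2(1+\epsilon)}}\sqrt{2\kappa}\cdots)^2$. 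It also produces the $12\sqrt{1+1/(8\alpha)}\sqrt\kappa\, t^{\frac{1-\epsilon}{2(1+\epsilon)}}\max_u\beta_{u-1}$ term, which comes from the elliptical sum of $w_s$ against the truncation bias.

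For the corruption part, the same estimate as above gives $\frac{|c_s|}{\sigma_s^2}\|X_s\|_{V_{s-1}^{-1}}\beta_{s-1}\le \sqrt\kappa\beta_{s-1}|c_s|/C$. Summed, this is again absorbed into the $\sqrt\kappa\max_u\beta_{u-1}$ term; the constants are adjusted by merging it with the Huber corruption piece.

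\textbf{Main obstacle.} The main obstacle is the noise part of the self-regularization term. Its concentration must be self-normalized, with $\beta_{s-1}$ entering only through $\max_u\beta_{u-1}$ and the variance term absorbed by the negative term. The heavy-tailed $(1+\epsilon)$ moment also forces careful use of $\tau_s$ and Property~\ref{property:huber}(3). Our plan here is to import Lemma~5 of \citet{wang2025heavy} essentially verbatim. This is legitimate because that piece involves only $\eta_s$, $\tau_s$, and $\sigma_s\ge\nu_s$, none of which are affected by the corruption. Finally, the failure probabilities $\delta+2\delta$ combine to give the stated $1-3\delta$.
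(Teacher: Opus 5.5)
Your proposal follows the paper's proof in all essentials. It uses the same split into the Huber-loss term and the self-regularization term, and the same reduction of the former to the event $\{|z_s(\theta^*)|>\tau_s/2\}$ on $\mathcal{A}_s$. It then applies the same union bound into a noise indicator and a corruption indicator, and the same $1$-Lipschitz separation of $h_s$ in the self-regularization term. Each corruption piece is turned into $\sqrt{\kappa}\,\beta_{s-1}|c_s|/C$ via the same inequality $\sigma_s^2\ge C\kappa^{-1/2}\|X_s\|_{V_{s-1}^{-1}}$. The noise pieces are handled by the same imported heavy-tailed bounds, Lemma~\ref{lem:c_12} at threshold $\tau_s/4$ and Lemma~\ref{lem:huang_C.2}. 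Your form of the Lipschitz step, with $|\langle X_s/\sigma_s,\Delta_s\rangle|$ in the corruption part, is the correct one; the paper's display multiplies a one-sided inequality by a signed factor. There is also a harmless sign slip: $\nabla\tilde\ell_s(\hat\theta_s)=-\tilde z_s(\hat\theta_s)X_s/\sigma_s$.

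The one real difference is in the corruption indicator of the Huber-loss term, and it exposes a bookkeeping error in your final assembly. The paper first pulls out $3\sqrt{\alpha+1/8}\,\tau_0 t^{\frac{1-\epsilon}{2(1+\epsilon)}}\max_u\beta_{u-1}$ and then counts $\sum_s\Indi\{|c_s|/\sigma_s>\tau_s/4\}\le 4\sqrt{\kappa}/(\sqrt{\alpha}\tau_0)$. The product of these is exactly the term $12\sqrt{1+1/(8\alpha)}\sqrt{\kappa}\,t^{\frac{1-\epsilon}{2(1+\epsilon)}}\max_u\beta_{u-1}$ in the statement. You instead keep $\tau_s$ inside the sum so that it cancels. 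This is sharper and yields $4\sqrt{\kappa}\max_u\beta_{u-1}$ (the factor $4$ is hidden in your $\lesssim$).

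Consequently, two of your attributions are wrong:
\begin{itemize}
\item The $12\sqrt{1+1/(8\alpha)}\cdots$ term does not come from a truncation bias in the self-regularization noise part. Lemma~\ref{lem:huang_C.2} rests on a Catoni-type supermartingale built from the log bounds in Property~1, which already absorbs the bias. In the paper that part produces only $\frac14(\lambda\alpha+\sum_s\langle X_s/\sigma_s,\Delta_s\rangle^2)$ plus the squared $8t^{\frac{1-\epsilon}{2(1+\epsilon)}}\cdots$ term.
\item Your two corruption pieces, $4\sqrt{\kappa}\max_u\beta_{u-1}$ and $\sqrt{\kappa}\max_u\beta_{u-1}$, cannot be merged into the single coefficient-one term. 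As written, your constants do not close.
\end{itemize}

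The fix is immediate. Charge the Huber corruption piece to the $12\sqrt{1+1/(8\alpha)}\cdots$ term, which dominates $4\sqrt{\kappa}\max_u\beta_{u-1}$ since $t\ge 1$. The self-regularization corruption piece alone then supplies the final $\sqrt{\kappa}\max_u\beta_{u-1}$. Your per-round bound $\tau_s\sqrt{\alpha}w_s\beta_{s-1}$, instead of the paper's $3\tau_s$ version, also leaves slack in the noise-indicator term. With this accounting your argument gives the stated inequality.
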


\subsection{Proof of Lemma~\ref{lem:stability}}
\label{sec:stability_proof}
\begin{proof}
We first decompose the squared $V_t^{-1}$-norm of the gradient
by separating the contributions of
$\eta_s$, $X_s^\top(\theta^*-\hat{\theta}_s)$, and $c_s$. For any $t \ge 1$, the squared $V_t^{-1}$-norm of the gradient can be
written as
\begin{align}
&\bigl\|\nabla \ell_t(\hat{\theta}_t)\bigr\|_{V_t^{-1}}^2 \notag \\
&=
\left\|
  \min\left\{
    \left|
      \frac{r_t - X_t^\top \hat{\theta}_t}{\sigma_t}
    \right|,
    \tau_t
  \right\}
  \frac{X_t}{\sigma_t}
\right\|_{V_t^{-1}}^2 \notag \\
&=
\left\|
  \min\left\{
    \left|
      \frac{X_t^\top \theta^* + \eta_t + c_t - X_t^\top \hat{\theta}_t}{\sigma_t}
    \right|,
    \tau_t
  \right\}
  \frac{X_t}{\sigma_t}
\right\|_{V_t^{-1}}^2 \notag \\
&=
\left\|
  \min\left\{
    \left|
      \frac{\eta_t}{\sigma_t}
      + \frac{c_t}{\sigma_t}
      + \frac{X_t^\top(\theta^* - \hat{\theta}_t)}{\sigma_t}
    \right|,
    \tau_t
  \right\}
  \frac{X_t}{\sigma_t}
\right\|_{V_t^{-1}}^2 \notag \\
&\le
\left(
  \min\left\{
    \left|
      \frac{\eta_t}{\sigma_t}
    \right|
    +
    \left|
      \frac{X_t^\top(\theta^* - \hat{\theta}_t)}{\sigma_t}
    \right|
    +
    \left|
      \frac{c_t}{\sigma_t}
    \right|,
    \tau_t
  \right\}
\right)^2
\left\|\frac{X_t}{\sigma_t}\right\|_{V_t^{-1}}^2 \notag \\
&\le
\left(
  \min\left\{\left|\frac{\eta_t}{\sigma_t}\right|,\tau_t\right\}  + \min\left\{\left|\frac{X_t^\top(\theta^*-\hat{\theta}_t)}{\sigma_t}\right|,\tau_t\right\}
  + \min\left\{\left|\frac{c_t}{\sigma_t}\right|,\tau_t\right\}
\right)^2
\left\|\frac{X_t}{\sigma_t}\right\|_{V_t^{-1}}^2 \notag \\
&\le
3\left(\min\left\{\left|\frac{\eta_t}{\sigma_t}\right|,\tau_t\right\}\right)^2
 \left\|\frac{X_t}{\sigma_t}\right\|_{V_t^{-1}}^2 
+ 3\left( \min\left\{\left|\frac{X_t^\top(\theta^*-\hat{\theta}_t)}{\sigma_t}\right|,\tau_t\right\}\right)^2
 \left\|\frac{X_t}{\sigma_t}\right\|_{V_t^{-1}}^2 \notag\\
&\quad + 3\left(\min\left\{\left|\frac{c_t}{\sigma_t}\right|,\tau_t\right\}\right)^2
 \left\|\frac{X_t}{\sigma_t}\right\|_{V_t^{-1}}^2 \notag \\
&\le
3\left(\min\left\{\left|\frac{\eta_t}{\sigma_t}\right|,\tau_t\right\} \right)^2
 \left\|\frac{X_t}{\sigma_t}\right\|_{V_t^{-1}}^2 
 + 3\left(
    \frac{X_t^\top(\theta^*-\hat{\theta}_t)}{\sigma_t}
  \right)^2
  \left\|\frac{X_t}{\sigma_t}\right\|_{V_t^{-1}}^2 + 3\left(\frac{c_t}{\sigma_t}\right)^2
  \left\|\frac{X_t}{\sigma_t}\right\|_{V_t^{-1}}^2 . 
\label{equ:norm_gradient_inequality}
\end{align}
The first equality uses the definition of the gradient in
\eqref{equ:grad_norm}, the second equality uses the observed reward
$r_t = X_t^\top \theta^* + \eta_t + c_t$ in~\eqref{eq:reward}. 
The first inequality follows from the triangle inequality.
The second uses
$\min\{a+b,c\} \le \min\{a,c\} + \min\{b,c\}$ for all
$a,b,c \in \mathbb{R}_+$.
The third uses
$(a+b+c)^2 \le 3(a^2+b^2+c^2)$ for all $a,b,c \in \mathbb{R}$,
and the last uses
$\min\{a,b\} \le a$ for all $a,b \in \mathbb{R}$.

To simplify the expression of the weighted norm $\left\|\frac{X_t}{\sigma_t}\right\|_{V_t^{-1}}$, define
\begin{equation}
\tilde{V}_t
:= \lambda\alpha I_d
 + \sum_{s=1}^t \frac{X_sX_s^\top}{\sigma_s^2}
= \alpha V_t.
\label{def:tilde_V}
\end{equation}
Using $\tilde{V}_t = \alpha V_t$, the quantity $w_t$ defined in~\eqref{def:wt} can be rewritten as
\begin{equation}
w_t
= \frac{1}{\sqrt{\alpha}}
  \left\|\frac{X_t}{\sigma_t}\right\|_{V_{t-1}^{-1}}
= \left\|\frac{X_t}{\sigma_t}\right\|_{\tilde{V}_{t-1}^{-1}} .
\label{equ:w_t_tilde_V}
\end{equation}
Since $\tilde{V}_t = \tilde{V}_{t-1} + X_tX_t^\top/\sigma_t^2$, applying the
Sherman--Morrison--Woodbury formula~\citep{horn2012matrix} gives
\begin{align*}
\tilde{V}_t^{-1} 
=
\left(
  \tilde{V}_{t-1}
  + \frac{X_tX_t^\top}{\sigma_t^2}
\right)^{-1} 
&=
\tilde{V}_{t-1}^{-1}
- \tilde{V}_{t-1}^{-1}
  \frac{X_t}{\sigma_t}
  \left(
    1 + \frac{X_t^\top}{\sigma_t}\tilde{V}_{t-1}^{-1}\frac{X_t}{\sigma_t}
  \right)^{-1}
  \frac{X_t^\top}{\sigma_t}\tilde{V}_{t-1}^{-1} \\
&=
\tilde{V}_{t-1}^{-1}
- \frac{\tilde{V}_{t-1}^{-1} X_t X_t^\top \tilde{V}_{t-1}^{-1}}
       {\sigma_t^2 (1 + w_t^2)} .
\end{align*}
Then,
\begin{align*}
\left\|\frac{X_t}{\sigma_t}\right\|_{\tilde{V}_t^{-1}}^2
=
\frac{X_t^\top \tilde{V}_t^{-1} X_t}{\sigma_t^2} 
&=
\frac{1}{\sigma_t^2}
X_t^\top\left(
  \tilde{V}_{t-1}^{-1}
  - \frac{\tilde{V}_{t-1}^{-1} X_t X_t^\top \tilde{V}_{t-1}^{-1}}
         {\sigma_t^2(1+w_t^2)}
\right) X_t 
= w_t^2 - \frac{w_t^4}{1+w_t^2}
 = \frac{w_t^2}{1+w_t^2} .
\end{align*}
Recalling that~\eqref{equ:w_t_tilde_V}, we obtain
\begin{equation}
\left\|\frac{X_t}{\sigma_t}\right\|_{V_t^{-1}}^2
= \alpha\frac{w_t^2}{1+w_t^2}.
\label{equ:w_t_rewritten}
\end{equation}

Plugging~\eqref{equ:w_t_rewritten} into~\eqref{equ:norm_gradient_inequality} and summing over
$s=1,\dots,t$, we can obtain
\begin{align*}
\sum_{s=1}^t
\bigl\|\nabla \ell_s(\hat{\theta}_s)\bigr\|_{V_s^{-1}}^2 
&\le
\underbrace{
  3\alpha \sum_{s=1}^t
  \left(\min\left\{\left|\frac{\eta_s}{\sigma_s}\right|,\tau_s\right\} \right)^2
  \frac{w_s^2}{1+w_s^2}
}_{\text{TERM (A.1)}} \\
&\quad
+ \underbrace{
  3\alpha \sum_{s=1}^t
  \left(
    \frac{X_s^\top(\theta^*-\hat{\theta}_s)}{\sigma_s}
  \right)^2
  \frac{w_s^2}{1+w_s^2}
}_{\text{TERM (A.2)}} 
+ \underbrace{
  3\alpha \sum_{s=1}^t
  \left(\frac{c_s}{\sigma_s}\right)^2
  \frac{w_s^2}{1+w_s^2}
}_{\text{TERM (A.3)}} .
\end{align*}
TERM (A.1) is the stochastic term corresponding to the Huber gradient induced by the noise $\eta_s$, 
TERM (A.2) is the deterministic term corresponding to the estimation error 
$\theta^* - \hat{\theta}_s$, and 
TERM (A.3) is the term induced by the corruption $c_s$.
We now control each of the above terms separately.
\paragraph{Bound on TERM (A.1).}
This term can be bounded by the same argument as in
\citet{wang2025heavy}.
Applying Lemma~\ref{lem:huang_C.5} under our choice of $\tau_t$ in~\eqref{def:tau} and $b:=\max_{t\in[T]}\frac{\nu_t}{\sigma_t} \le 1,$
we obtain the following bound that holds with probability at least
$1-\delta$ for all $t\ge1$:
\begin{align}
\label{eq:A1_prelimit}
\sum_{s=1}^t
\left(\min\left\{\left|\frac{\eta_s}{\sigma_s}\right|,\tau_s\right\} \right)^2
\frac{w_s^2}{1+w_s^2} \le
t^{\frac{1-\epsilon}{1+\epsilon}}
\Bigl(
  \sqrt{\tau_0^{1-\epsilon}
         (2\kappa)^{\frac{1+\epsilon}{2}}
         (\log 3T)^{\frac{1-\epsilon}{2}}}
         +
    \tau_0\sqrt{2\log(2T^2/\delta)}
\Bigr)^{2}.
\end{align}
With the choice of $\tau_0$ in~\eqref{def:tau},
we observe that
$\sqrt{
  \tau_0^{1-\epsilon}
  (2\kappa)^{\frac{1+\epsilon}{2}}
  (\log 3T)^{\frac{1-\epsilon}{2}}
}
= \tau_0 \sqrt{\log(2T^2/\delta)}.
$
Therefore, both terms inside the parentheses of
\eqref{eq:A1_prelimit} coincide, and hence
\begin{align}
\label{equ:termA1}
\sum_{s=1}^t
\left(\min\left\{\left|\frac{\eta_s}{\sigma_s}\right|,\tau_s\right\} \right)^2
\frac{w_s^2}{1+w_s^2} \notag 
&\le
t^{\frac{1-\epsilon}{1+\epsilon}}
(1+\sqrt{2})^2\Bigl(
\sqrt{2\kappa}
(\log 3T)^{\frac{1-\epsilon}{2(1+\epsilon)}}
(\log(2T^2/\delta))^{\frac{\epsilon}{1+\epsilon}}
\Bigr)^{2} \notag \\
&\le
6t^{\frac{1-\epsilon}{1+\epsilon}}
\Bigl(
\sqrt{2\kappa}
(\log 3T)^{\frac{1-\epsilon}{2(1+\epsilon)}}
(\log(2T^2/\delta))^{\frac{\epsilon}{1+\epsilon}}
\Bigr)^{2}.
\end{align}
\paragraph{Bound on TERM (A.2).}
This term can be bounded in the same way as in
\citet{wang2025heavy}.
By the choice of $\sigma_t$ in~\eqref{def:sigma_stab_en}, we have
$\sigma_t \ge 2\sqrt{2}\|X_t\|_{V_{t-1}^{-1}}$ for all $t$, and hence
\begin{align}
\alpha w_t^2
= \left\|\frac{X_t}{\sigma_t}\right\|_{V_{t-1}^{-1}}^2 = \left\|X_t\right\|_{V_{t-1}^{-1}}^2 \frac{1}{\sigma_t^2}
\le \|X_t\|_{V_{t-1}^{-1}}^2
     \frac{1}{\bigl(2\sqrt{2}\|X_t\|_{V_{t-1}^{-1}}\bigr)^2} 
 = \frac{1}{8} .
\label{eq:1/8}
\end{align}
Since $\frac{1}{1+w_s^2} \le 1$ and $\alpha w_s^2\leq \frac{1}{8}$ for all $s$, we obtain
\begin{align}
3\alpha \sum_{s=1}^t
\left(
  \frac{X_s^\top(\theta^*-\hat{\theta}_s)}{\sigma_s}
\right)^2
\frac{w_s^2}{1+w_s^2} 
&\le
\frac{3}{8} \sum_{s=1}^t
\left(
  \frac{X_s^\top(\theta^*-\hat{\theta}_s)}{\sigma_s}
\right)^2\notag  \\ 
&=
\frac{3}{8} \sum_{s=1}^t
\bigl\|\theta^*-\hat{\theta}_s\bigr\|^2_{\frac{X_sX_s^\top}{\sigma_s^2}} .
\label{equ:termA2}
\end{align}
\paragraph{Bound on TERM (A.3).} 
Using the definition of $\sigma_s$ in \eqref{def:sigma_stab_en} and the fact that $\frac{1}{1+w_s^2}\le 1$, we obtain
\begin{align}
3\alpha
\sum_{s=1}^t
\left(\frac{c_s}{\sigma_s}\right)^{2}
\frac{w_s^2}{1+w_s^2} 
&\le
3\alpha
\sum_{s=1}^t
\frac{c_s^2}{\sigma_s^2}w_s^2
\notag\\
&=
3\alpha
\sum_{s=1}^t
c_s^2
\frac{1}{\sigma_s^4}
\bigl(\sigma_s^2 w_s^2\bigr)
\notag\\
&\le
3\alpha
\sum_{s=1}^t
c_s^2
\frac{\kappa}{C^2\|X_s\|_{V_{s-1}^{-1}}^2}
\bigl(\sigma_s^2 w_s^2\bigr)
\notag\\
&=
3
\sum_{s=1}^t
c_s^2
\frac{\kappa}{C^2}
\frac{1}{\|\frac{X_s}{\sigma_s}\|_{V_{s-1}^{-1}}^2}
w_s^2
\notag\\
&=
3\frac{\kappa}{C^2}
\sum_{s=1}^t
c_s^2,
\label{eq:termA3_pre}
\end{align}
where the first inequality uses $1/(1+w_s^2)\le 1$,
the second inequality follows from the definition of $\sigma_s$ in~\eqref{def:sigma_stab_en}, which ensures that $\sigma_s \ge \sqrt{C}\kappa^{-1/4}\|X_s\|_{V_{s-1}^{-1}}^{1/2},$
and the last equality uses $\|X_s/\sigma_s\|_{V_{s-1}^{-1}}^2=\alpha^{-1}w_s^2$, which follows from the definition of $w_s$ in~\eqref{equ:w_t_tilde_V}.

Since the corruption level $C$ satisfies
\[
C^2
=
\Bigl(\sum_{s=1}^t |c_s|\Bigr)^2
  \ge  
\sum_{s=1}^t c_s^2,
\]
we conclude from \eqref{eq:termA3_pre} that
\begin{equation}
\label{equ:termA3}
3\alpha
\sum_{s=1}^t
\left(\frac{c_s}{\sigma_s}\right)^2
\frac{w_s^2}{1+w_s^2}
  \le  
3\kappa.
\end{equation}

Combining \eqref{equ:termA1}, \eqref{equ:termA2}, and \eqref{equ:termA3}, we obtain that,
with probability at least $1-\delta$, for all $t\ge 1$,
\begin{align*}
\sum_{s=1}^t \bigl\|\nabla \ell_s(\hat{\theta}_s)\bigr\|^2_{V_s^{-1}} 
\le
18\alpha
\Biggl[
t^{\frac{1-\epsilon}{2(1+\epsilon)}}
\sqrt{2\kappa}
(\log 3T)^{\frac{1-\epsilon}{2(1+\epsilon)}}
\bigl(\log(2T^2/\delta)\bigr)^{\frac{\epsilon}{1+\epsilon}}
\Biggr]^2 
+ \frac{3}{8}\sum_{s=1}^t
\bigl\|\theta^* - \hat{\theta}_s\bigr\|^2_{\frac{X_s X_s^\top}{\sigma_s^2}}
+ 3\kappa.
\end{align*}
\end{proof}

\subsection{Proof of Lemma~\ref{lem:gene_gap}}
\label{sec:gene_gap_proof}
\begin{proof}
Following~\citet{wang2025heavy}, we decompose the generalization-gap term as
\begin{align*}
&\left\langle 
\nabla \tilde{\ell}_t(\hat{\theta}_t) - \nabla \ell_t(\hat{\theta}_t), 
\hat{\theta}_t - \theta^*
\right\rangle  \Indi_{\mathcal{A}_t} \\
&=
\left\langle 
\nabla \tilde{\ell}_t(\hat{\theta}_t) + \nabla \ell_t(\theta^*) 
- \nabla \ell_t(\theta^*) - \nabla \ell_t(\hat{\theta}_t),
\hat{\theta}_t - \theta^*
\right\rangle \Indi_{\mathcal{A}_t} \\
&=
\underbrace{
\left\langle 
\nabla \tilde{\ell}_t(\hat{\theta}_t) 
+ \nabla \ell_t(\theta^*)  - \nabla \ell_t(\hat{\theta}_t),
\hat{\theta}_t - \theta^*
\right\rangle
\Indi_{\mathcal{A}_t}}_{\text{TERM (B.1)}} 
+
\underbrace{
\left\langle 
-\nabla \ell_t(\theta^*),
\hat{\theta}_t - \theta^*
\right\rangle 
\Indi_{\mathcal{A}_t}}_{\text{TERM (B.2)}}.
\end{align*}
Since $\nabla \tilde{\ell}_t(\theta^*) = 0$, 
TERM (B.1) compares the gradient increments 
between $\theta^*$ and $\hat{\theta}_t$, 
namely 
$\nabla \tilde{\ell}_t(\hat{\theta}_t) - \nabla \tilde{\ell}_t(\theta^*)$
and 
$\nabla \ell_t(\hat{\theta}_t) - \nabla \ell_t(\theta^*)$. Hence, TERM (B.1) measures how adversarial corruption and heavy-tailed noise distort these gradient increments.
TERM (B.2) corresponds to the gradient evaluated at $\theta^*$,
capturing the direct effect of adversarial corruption and heavy-tailed noise.
Since adversarial corruption enters both terms through the gradient $\nabla \ell_t(\cdot)$, 
we further decompose each term. 
We now analyze TERM (B.1) and TERM (B.2) separately.
\paragraph{Analysis of TERM (B.1).}
To isolate the nonlinearity of the Huber loss,
we introduce its derivative $h_t(\cdot)$, defined for any $z\in\mathbb{R}$ as
\begin{equation}
h_t(z)
=
\begin{cases}
z, & |z| \le \tau_t,\\
\tau_t, & z > \tau_t,\\
-\tau_t, & z < -\tau_t.
\end{cases}
\label{def:psi}
\end{equation}
With this notation, the gradients of the loss~\eqref{def:huber_gra} and the clean loss~\eqref{def:huber_gra_clean} can be written as
\begin{equation}
\nabla \ell_t(\theta)
= -\frac{h_t(z_t(\theta))X_t}{\sigma_t},
\qquad
\nabla \tilde{\ell}_t(\theta)
= -\frac{h_t(z_t(\theta)-z_t(\theta^*))X_t}{\sigma_t}.
\label{equ:ell_rewrite_h_}
\end{equation}
Substituting these expressions into TERM (B.1), we obtain
\begin{align}
&\left\langle 
\nabla \tilde{\ell}_t(\hat{\theta}_t)
+ \nabla \ell_t(\theta^*)
- \nabla \ell_t(\hat{\theta}_t),
\hat{\theta}_t-\theta^*
\right\rangle \Indi_{\mathcal{A}_t} \notag \\
&=
\bigl[
h_t(z_t(\hat{\theta}_t))
- h_t(z_t(\theta^*))
- h_t(z_t(\hat{\theta}_t)-z_t(\theta^*))
\bigr]
\frac{\langle X_t, \hat{\theta}_t - \theta^* \rangle}{\sigma_t}
\Indi_{\mathcal{A}_t}.
\label{equ:h_t_difference}
\end{align}
The linear factor
$\langle X_t,\hat{\theta}_t-\theta^*\rangle/\sigma_t$
can be controlled using the condition of $\mathcal{A}_t$. Hence, we focus on analyzing the inner difference term
$h_t(z_t(\hat{\theta}_t))
- h_t(z_t(\theta^*))
- h_t(z_t(\hat{\theta}_t)-z_t(\theta^*)).$

Under the event $\mathcal{A}_t$, we have 
\begin{align*}
|z_t(\hat{\theta}_t)-z_t(\theta^*)|
= \left|\frac{X_t^\top(\hat{\theta}_t-\theta^*)}{\sigma_t}\right|
&\leq \left\|\frac{X_t}{\sigma_t}\right\|_{V_{t-1}^{-1}}\left\|\hat{\theta}_t-\theta^*\right\|_{V_{t-1}} \\
&\leq \frac{\tau_0 t^{\frac{1-\epsilon}{2(1+\epsilon)}}}{2 w_t \beta_{t-1}}\beta_{t-1} 
\leq \frac{1}{2}\tau_0\frac{\sqrt{1+w_t^2}}{w_t} t^{\frac{1-\epsilon}{2(1+\epsilon)}}
=\frac{\tau_t}{2},
\end{align*} which follows from Cauchy–Schwarz and the definitions of $\sigma_t$ and $\tau_t$.
Hence,
$h_t(z_t(\hat{\theta}_t)-z_t(\theta^*))
= z_t(\hat{\theta}_t)-z_t(\theta^*)$.
Since the value of
\[
h_t(z_t(\hat{\theta}_t))
- h_t(z_t(\theta^*))
- (z_t(\hat{\theta}_t)-z_t(\theta^*))
\]
depends on the magnitude of $z_t(\theta^*)$,
we distinguish two cases.

When $|z_t(\theta^*)|\le \tau_t/2$, we have
\[
|z_t(\hat{\theta}_t)|
\le |z_t(\hat{\theta}_t)-z_t(\theta^*)| + |z_t(\theta^*)|
\le \tau_t,
\]
and also $|z_t(\hat{\theta}_t)-z_t(\theta^*)|\le \tau_t/2<\tau_t$.
Hence all the arguments of $h_t$ lie in its linear region, i.e., $h_t(u)=u$ whenever $|u|\le \tau_t$. Therefore the difference equals zero:
\begin{equation}
h_t(z_t(\hat{\theta}_t))
- h_t(z_t(\theta^*))
- h_t(z_t(\hat{\theta}_t)-z_t(\theta^*))
= 0.
\label{small-zt}
\end{equation}

In general, regardless of the values of $z_t(\theta^*)$, the uniform bound
$|h_t(u)| \le \tau_t$ for all $u \in \mathbb{R}$ implies
\begin{align}
&\left|
h_t(z_t(\hat{\theta}_t))
- h_t(z_t(\theta^*))
- h_t(z_t(\hat{\theta}_t)-z_t(\theta^*))
\right| \notag \\
&\le
\left|h_t(z_t(\hat{\theta}_t))\right|
+ \left|h_t(z_t(\theta^*))\right|
+ \left|h_t(z_t(\hat{\theta}_t)-z_t(\theta^*))\right|
\le 3\tau_t.
\label{big-zt}
\end{align}

Applying \eqref{small-zt} and \eqref{big-zt} to~\eqref{equ:h_t_difference}, we obtain the following
bound for TERM (B.1):
\begin{align*}
&\left\langle 
\nabla \tilde{\ell}_t(\hat{\theta}_t)
+ \nabla \ell_t(\theta^*)
- \nabla \ell_t(\hat{\theta}_t),
\hat{\theta}_t - \theta^*
\right\rangle \Indi_{\mathcal{A}_t} \\
&=
\bigl[
h_t(z_t(\hat{\theta}_t))
- h_t(z_t(\theta^*))
- h_t(z_t(\hat{\theta}_t)-z_t(\theta^*))
\bigr]
\frac{
\langle X_t, \hat{\theta}_t - \theta^* \rangle
}{\sigma_t}
\Indi_{\mathcal{A}_t} \\
&\le
\bigl|
h_t(z_t(\hat{\theta}_t))
- h_t(z_t(\theta^*))
- h_t(z_t(\hat{\theta}_t)-z_t(\theta^*))
\bigr|
\left\|\frac{X_t}{\sigma_t}\right\|_{V_{t-1}^{-1}}
\left\|\hat{\theta}_t - \theta^*\right\|_{V_{t-1}}
\Indi_{\mathcal{A}_t} \\
&\le
\Indi\left\{|z_t(\theta^*)|\le \frac{\tau_t}{2}\right\}\cdot 0 
+
\Indi\left\{|z_t(\theta^*)|> \frac{\tau_t}{2}\right\}
3\tau_t
\left\|\frac{X_t}{\sigma_t}\right\|_{V_{t-1}^{-1}}
\left\|\hat{\theta}_t - \theta^*\right\|_{V_{t-1}} \\
&=
\Indi\left\{|z_t(\theta^*)|> \frac{\tau_t}{2}\right\}
3\tau_0 \frac{\sqrt{1+w_t^2}}{w_t}
t^{\frac{1-\epsilon}{2(1+\epsilon)}}
\sqrt{\alpha} w_t
\left\|\hat{\theta}_t - \theta^*\right\|_{V_{t-1}} \\
&\le
\Indi\left\{|z_t(\theta^*)|> \frac{\tau_t}{2}\right\}
3\tau_0 \sqrt{\alpha + \alpha w_t^2}
t^{\frac{1-\epsilon}{2(1+\epsilon)}}
\beta_{t-1} \\
&\le
\Indi\left\{|z_t(\theta^*)|> \frac{\tau_t}{2}\right\}
3\sqrt{\alpha+\tfrac18}
\tau_0 t^{\frac{1-\epsilon}{2(1+\epsilon)}}
\beta_{t-1}.
\end{align*}
The first inequality follows from Cauchy--Schwarz, the second from
\eqref{small-zt} and \eqref{big-zt}, and the third from the condition of $\mathcal{A}_t$, and the last from \eqref{eq:1/8}.

Summing over $s=1,\dots,t$ and using the monotonicity of
$s^{\frac{1-\epsilon}{2(1+\epsilon)}}$ together with the fact that
$\max_{u\in[t]}\beta_{u-1}\le \max_{u\in[t+1]}\beta_{u-1}$, we obtain
\begin{align*}
&\sum_{s=1}^t 
\left\langle 
\nabla \tilde{\ell}_s(\hat{\theta}_s)
+ \nabla \ell_s(\theta^*)
- \nabla \ell_s(\hat{\theta}_s),
\hat{\theta}_s - \theta^*
\right\rangle \Indi_{\mathcal{A}_s} \\
&\leq \sum_{s=1}^t \Indi\left\{|z_s(\theta^*)|> \frac{\tau_s}{2}\right\}
3\sqrt{\alpha+\tfrac18}
\tau_0 s^{\frac{1-\epsilon}{2(1+\epsilon)}}
\beta_{s-1} \\
&\le
3\sqrt{\alpha+\tfrac{1}{8}}
\tau_0
t^{\frac{1-\epsilon}{2(1+\epsilon)}}
\max_{u\in[t+1]}\beta_{u-1}
\sum_{s=1}^t
\Indi\left\{|z_s(\theta^*)|>\frac{\tau_s}{2}\right\}.
\end{align*}

Next, since $z_s(\theta^*) = (\eta_s + c_s)/\sigma_s$ contains both the heavy-tailed noise and the corruption term, we further decompose the indicator.
Using the union bound, namely that for any real numbers $x,y$ and $z>0$,
$
\Indi\{|x+y|>z\}\le \Indi\{|x|>z/2\}+\Indi\{|y|>z/2\},
$
we obtain
\begin{align*}
\sum_{s=1}^t \Indi\left\{|z_s(\theta^*)|>\frac{\tau_s}{2}\right\}
&=
\sum_{s=1}^t 
\Indi\left\{\left|\frac{\eta_s}{\sigma_s}+\frac{c_s}{\sigma_s}\right| > \frac{\tau_s}{2}\right\} \\
&\le 
\underbrace{
\sum_{s=1}^t 
\Indi\left\{\left|\frac{\eta_s}{\sigma_s}\right|>\frac{\tau_s}{4}\right\}
}_{\text{TERM (B.1.1)}}
+
\underbrace{
\sum_{s=1}^t 
\Indi\left\{\left|\frac{c_s}{\sigma_s}\right|>\frac{\tau_s}{4}\right\}
}_{\text{TERM (B.1.2)}}.
\end{align*}
TERM (B.1.1) represents the stochastic contribution from the heavy-tailed noise $\eta_s$, 
whereas TERM (B.1.2) represents the adversarial contribution due to the corruption term $c_s$.
\paragraph{Bound on TERM (B.1.1).}
By applying Lemma~\ref{lem:c_12} with the threshold $\tau_s/4$ in place of $\tau_s/2$, 
we obtain that, with probability at least $1-\delta$, for all $t\geq 1$,
\begin{equation}
\sum_{s=1}^t 
\Indi\left\{\left|\frac{\eta_s}{\sigma_s}\right| >
\frac{\tau_s}{4}\right\}
\le
\frac{65}{3}\log\frac{2T^2}{\delta}.
\label{equ:termB1_1}
\end{equation}

\paragraph{Bound on TERM (B.1.2).}
We bound the exceedance count
directly:
\begin{align}
\sum_{s=1}^t 
\Indi\left\{\left|\frac{c_s}{\sigma_s}\right| >
\frac{\tau_s}{4}\right\}
&=
\sum_{s=1}^t 
\Indi\left\{
\frac{4}{\tau_s}\left|\frac{c_s}{\sigma_s}\right| > 1
\right\}
\notag \\
&\le 
\sum_{s=1}^t 
\frac{4}{\tau_s}\frac{|c_s|}{\sigma_s} \notag \\
&=
4\sum_{s=1}^t
\frac{1}{\tau_0}
\frac{w_s}{\sqrt{1+w_s^2}}s^{-\frac{1-\epsilon}{2(1+\epsilon)}}
\frac{|c_s|}{\sigma_s^2}
\sigma_s \notag \\
&\le
4\sqrt{\kappa}\frac{1}{\sqrt{\alpha}}\frac{1}{\tau_0}
\sum_{s=1}^t 
|c_s|
\frac{1}{C}
w_s \frac{\sqrt{\alpha}}{\|X_s/ \sigma_s\|_{V_{s-1}^{-1}}}
\notag \\
&=
4\frac{1}{\sqrt{\alpha}}\frac{1}{\tau_0}
\sum_{s=1}^t |c_s| \frac{\sqrt{\kappa}}{C} \notag \\
&=
4\frac{1}{\sqrt{\alpha}}\frac{\sqrt{\kappa}}{\tau_0}.
\label{equ:termB1_2}
\end{align}
The first inequality uses the elementary fact that 
$\Indi\{u>1\}\le u$ for any $u\ge 0$.  
The first equality that follows comes from the definition of 
$\tau_s$ in~\eqref{def:tau}.  
For the second inequality, we use 
$s^{-(1-\epsilon)/(2(1+\epsilon))}\le 1$ and 
$\frac{1}{\sqrt{1+w_s^2}}\le 1$, together with the lower bound 
$\sigma_s\ge \sqrt{C}\kappa^{-1/4}\|X_s\|_{V_{s-1}^{-1}}^{1/2}$ 
from~\eqref{def:sigma_gap}.  

Combining the bounds \eqref{equ:termB1_1} and~\eqref{equ:termB1_2}, we conclude that,
with probability at least $1-\delta$, for all $t\in[T]$,
\begin{align}
&\sum_{s=1}^t 
\left\langle 
\nabla \tilde{\ell}_s(\hat{\theta}_s)
+ \nabla \ell_s(\theta^*)
- \nabla \ell_s(\hat{\theta}_s),
\hat{\theta}_s - \theta^*
\right\rangle \Indi_{\mathcal{A}_s} \notag \\
&\le
3\sqrt{\alpha+\tfrac18}
\tau_0
t^{\frac{1-\epsilon}{2(1+\epsilon)}}
\max_{u\in[t+1]}\beta_{u-1}
\left(
\frac{65}{3}\log\frac{2T^2}{\delta}
+
4\frac{1}{\sqrt{\alpha}}\frac{\sqrt{\kappa}}{\tau_0}
\right) \notag \\
&=
65\sqrt{\alpha+\tfrac18}
\left(\log\frac{2T^2}{\delta}\right)
\tau_0
t^{\frac{1-\epsilon}{2(1+\epsilon)}} 
\max_{u\in[t+1]}\beta_{u-1} 
+
12\sqrt{1+\tfrac{1}{8\alpha}}
\sqrt{\kappa}
t^{\frac{1-\epsilon}{2(1+\epsilon)}} 
\max_{u\in[t+1]}\beta_{u-1}.
\label{equ:termB1}
\end{align}
\paragraph{Analysis of TERM (B.2).}
Recall from~\eqref{equ:ell_rewrite_h_}, the self-regularization term satisfies the following inequality:
\begin{align}
\sum_{s=1}^t
\left\langle 
-\nabla \ell_s(\theta^*), \hat{\theta}_s - \theta^*
\right\rangle 
\Indi_{\mathcal{A}_s}
&=
\sum_{s=1}^t h_s(z_s(\theta^*))
\left\langle \frac{X_s}{\sigma_s},\hat{\theta}_s-\theta^*
\right\rangle 
\Indi_{\mathcal{A}_s} \notag \\
&\le
\left|
\sum_{s=1}^t 
h_s(z_s(\theta^*))
\left\langle \frac{X_s}{\sigma_s},\hat{\theta}_s-\theta^*
\right\rangle
\right|.
\label{equ:regu_first_inequality}
\end{align}
To separate the corruption term $c_t$ from the stochastic noise $\eta_t$
inside $h_t(z_t(\theta^*))$, we exploit the Lipschitz continuity of $h_t$. 

Since $h_t(\cdot)$ is $1$-Lipschitz, it satisfies $|h_t(x)-h_t(y)| \le |x-y|  \quad\text{for all } x,y\in\mathbb{R}.$
Applying this property to 
$x = (\eta_t + c_t)/\sigma_t$ and $y = \eta_t/\sigma_t$ yields
\begin{equation}
\label{equ:lipschitz}
h_t(z_t(\theta^*))=h_t\left(\frac{\eta_t+c_t}{\sigma_t}\right)
\le 
h_t\left(\frac{\eta_t}{\sigma_t}\right)
+ \left|\frac{c_t}{\sigma_t}\right|.
\end{equation}
Substituting~\eqref{equ:lipschitz} into~\eqref{equ:regu_first_inequality} gives
\begin{align}
\sum_{s=1}^t
\left\langle 
-\nabla \ell_s(\theta^*), \hat{\theta}_s - \theta^*
\right\rangle 
\Indi_{\mathcal{A}_s} 
&\le
\left|
\sum_{s=1}^t
\left\{
h_s\left(\frac{\eta_s}{\sigma_s}\right)
+
\left|\frac{c_s}{\sigma_s}\right|
\right\}
\left\langle \frac{X_s}{\sigma_s},\hat{\theta}_s-\theta^*
\right\rangle
\right| \notag \\
&\le
\underbrace{
\left|
\sum_{s=1}^t
h_s\left(\frac{\eta_s}{\sigma_s}\right)
\left\langle \frac{X_s}{\sigma_s},\hat{\theta}_s-\theta^*
\right\rangle
\right|
}_{\text{TERM (B.2.1)}}
+
\underbrace{
\left|
\sum_{s=1}^t
\left|\frac{c_s}{\sigma_s}\right|
\left\langle \frac{X_s}{\sigma_s},\hat{\theta}_s-\theta^*
\right\rangle
\right|
}_{\text{TERM (B.2.2)}},
\label{equ:termB2_before}
\end{align}
where the first inequality follows from \eqref{equ:lipschitz},
and the second from the triangle inequality. TERM (B.2.1) corresponds to the stochastic noise component,
while TERM (B.2.2) captures the bias term induced by $c_s$.
We analyze them separately.
\paragraph{Bound on TERM (B.2.1).}
This term can be handled in the same way as in \citet{wang2025heavy}.
Applying Lemma~\ref{lem:huang_C.2} under our choice of $\tau_t$ in~\eqref{def:tau} and $b:=\max_{t\in[T]}\frac{\nu_t}{\sigma_t} \le 1,$ we obtain that, with probability at least $1-2\delta$,
the following holds for all $t \ge 1$:
\begin{equation}
\left\|\sum_{s=1}^t h_t\left(\frac{\eta_s}{\sigma_s}\right)\frac{X_s}{\sigma_s}\right\|_{\tilde{V}_t^{-1}}
\leq 
\Bigl(
8 t^{\frac{1-\epsilon}{2(1+\epsilon)}}
\sqrt{2\kappa}
(\log 3T)^{\frac{1-\epsilon}{2(1+\epsilon)}}
\bigl(\log\tfrac{2T^2}{\delta}\bigr)^{\frac{\epsilon}{1+\epsilon}}
\Bigr)^2,
\end{equation}
where $\tilde{V}_t$ is defined in~\eqref{def:tilde_V}.
As in \citet{wang2025heavy}, this vector bound reduces to the
scalar form corresponding to
$Z_s = \langle X_s, \hat\theta_s - \theta^*\rangle$, yielding
\begin{align}
&\left|
\sum_{s=1}^t h_s\left(\frac{\eta_s}{\sigma_s}\right)\frac{Z_s}{\sigma_s}
\right|  \notag \\
&\le
\frac14\left(\lambda\alpha + \sum_{s=1}^t \frac{Z_s^2}{\sigma_s^2}\right)
+
\Bigl(
8 t^{\frac{1-\epsilon}{2(1+\epsilon)}}
\sqrt{2\kappa}
(\log 3T)^{\frac{1-\epsilon}{2(1+\epsilon)}}
\bigl(\log\tfrac{2T^2}{\delta}\bigr)^{\frac{\epsilon}{1+\epsilon}}
\Bigr)^2.
\label{equ:termB2_1}
\end{align}
\paragraph{Bound on TERM (B.2.2).}
By triangle inequality and Cauchy--Schwarz, we obtain
\begin{align}
\left|
\sum_{s=1}^t 
\left|\frac{c_s}{\sigma_s}\right|
\left\langle \frac{X_s}{\sigma_s},
\hat{\theta}_s - \theta^* \right\rangle
\right|
&\le
\sum_{s=1}^t 
\frac{|c_s|}{\sigma_s}
\left\|\frac{X_s}{\sigma_s}\right\|_{V_{s-1}^{-1}}
\left\|\hat{\theta}_s - \theta^* \right\|_{V_{s-1}}     \notag       \\
&\le
\sum_{s=1}^t 
\frac{|c_s|}{\sigma_s^2}
\|X_s\|_{V_{s-1}^{-1}}\beta_{s-1} \notag\\
&\le
\sqrt{\kappa}\sum_{s=1}^t 
\frac{|c_s|}{C}\beta_{s-1} \notag\\
&\le
\sqrt{\kappa}
\max_{u\in[t+1]} \beta_{u-1},
\label{equ:termB2_2}
\end{align}
where we used the definition of $\mathcal{A}_s$ in the
second inequality, and the third inequality follows from $\sigma_s \ge \sqrt{C}\kappa^{-1/4}\|X_s\|_{V_{s-1}^{-1}}^{1/2}$ in~\eqref{def:sigma_gap}, and the last follows from  $\beta_{s-1} \le \max_{1\le u\le t+1}\beta_{u-1}$ for all $s\le t$ and $C = \sum_{s=1}^t |c_s|$.

From~\eqref{equ:termB2_1} and~\eqref{equ:termB2_2},  
with probability at least $1-2\delta$,  
the following holds for all $t\in[T]$,
\begin{align}
&\sum_{s=1}^t 
\left\langle
-\nabla\ell_s(\theta^*),
\hat{\theta}_s - \theta^*
\right\rangle
\Indi_{\mathcal{A}_s} \notag \\
&\le
\frac14\left(
\lambda\alpha 
+
\sum_{s=1}^t
\left\langle \frac{X_s}{\sigma_s},
\hat\theta_s - \theta^*
\right\rangle^2
\right) 
 +\left(
8t^{\frac{1-\epsilon}{2(1+\epsilon)}}
\sqrt{2\kappa}
(\log 3T)^{\frac{1-\epsilon}{2(1+\epsilon)}}
\left(\log\frac{2T^2}{\delta}\right)^{\frac{\epsilon}{1+\epsilon}}
\right)^2  \notag \\                                         
&\quad+
\sqrt{\kappa}
\max_{u\in[t+1]} \beta_{u-1}.
\label{equ:termB2}
\end{align}

Combining~\eqref{equ:termB1} and~\eqref{equ:termB2} and applying the union bound,
we obtain that, with probability at least $1-3\delta$,
for all $t\geq 1$,
\begin{align*}
&\sum_{s=1}^t 
\left\langle 
\nabla \tilde{\ell}_s(\hat{\theta}_s)
- \nabla \ell_s(\hat{\theta}_s),
\hat{\theta}_s - \theta^*
\right\rangle 
\Indi_{\mathcal{A}_s} \\
&\le
65\sqrt{\alpha+\tfrac18}
\left(\log\frac{2T^2}{\delta}\right)  \tau_0
t^{\frac{1-\epsilon}{2(1+\epsilon)}}
\max_{u\in[t+1]} \beta_{u-1}
+ 12\sqrt{1+\tfrac{1}{8\alpha}}
\sqrt{\kappa}
t^{\frac{1-\epsilon}{2(1+\epsilon)}}
\max_{u\in[t+1]} \beta_{u-1}
\\
&\quad
+ \frac14\left(
\lambda\alpha
+ \sum_{s=1}^t 
\left\langle \frac{X_s}{\sigma_s},
\hat{\theta}_s - \theta^*
\right\rangle^2
\right)
+ \left(
8t^{\frac{1-\epsilon}{2(1+\epsilon)}}
\sqrt{2\kappa}
(\log 3T)^{\frac{1-\epsilon}{2(1+\epsilon)}}
\left(\log\frac{2T^2}{\delta}\right)^{\frac{\epsilon}{1+\epsilon}}
\right)^2 \\
&\qquad+ \sqrt{\kappa}
\max_{u\in[t+1]} \beta_{u-1},
\end{align*}
which completes the proof of Lemma~\ref{lem:gene_gap}.
\end{proof}

\subsection{Proof of Lemma~\ref{lem:conf_interval}}
\label{sec:conf_interval_proof}
\begin{proof}
The proof proceeds in four steps.
\paragraph{Step 1: Collecting the two high-probability bounds.}
To upper bound the right-hand side of Lemma~\ref{lem:key_decomp}, we consider an expression of a similar form and apply Lemma~\ref{lem:stability} and~\ref{lem:gene_gap}.
Applying a union bound, we obtain that, with probability at least $1-4\delta$, for all $t \in [T]$,
\begin{align*}
& 4\lambda S^2
+ \sum_{s=1}^{t}
\|\nabla \ell_s(\hat{\theta}_s)\|_{V_s^{-1}}^2
+ 2 \sum_{s=1}^{t}
\left\langle
\nabla \tilde{\ell}_s(\hat{\theta}_s) - \nabla \ell_s(\hat{\theta}_s),
\hat{\theta}_s - \theta^*
\right\rangle \Indi_{\mathcal{A}_s} 
+ \Bigl( \tfrac{1}{\alpha} - 1 \Bigr)
\sum_{s=1}^{t}
\left\|
\hat{\theta}_s - \theta^*
\right\|^2_{ \frac{X_s X_s^\top}{\sigma_s^{2}} } \\
&\le 
4\lambda S^2
+ 18\alpha 
\left[
t^{\frac{1 - \epsilon}{2(1 + \epsilon)}} 
\sqrt{2\kappa}(\log 3T)^{\frac{1 - \epsilon}{2(1 + \epsilon)}} 
\left( \log \frac{2T^2}{\delta} \right)^{\frac{\epsilon}{1 + \epsilon}} 
\right]^2  
+ \frac38 \sum_{s=1}^t
\left\| 
\theta^* - \hat{\theta}_s 
\right\|^2_{ \frac{X_s X_s^\top}{\sigma_s^2} }
+ 3\kappa \\
&\quad 
+ 130 \sqrt{ \alpha + \tfrac18 }
\Bigl(\log\tfrac{2T^2}{\delta}\Bigr)
\tau_0
t^{\frac{1-\epsilon}{2(1+\epsilon)}}
\max_{u\in[t+1]} \beta_{u-1}  
+ 24 \sqrt{1 + \tfrac{1}{8\alpha}}\sqrt{\kappa}
t^{\frac{1-\epsilon}{2(1+\epsilon)}}
\max_{u\in[t+1]} \beta_{u-1} \\
&\quad 
+ \frac12
\left(
\lambda \alpha 
+ \sum_{s=1}^t 
\left\langle 
\frac{X_s}{\sigma_s}, 
\hat{\theta}_s - \theta^*
\right\rangle^2
\right) 
+ 2
\left(
8 t^{\frac{1 - \epsilon}{2(1 + \epsilon)}}
\sqrt{2\kappa} (\log 3T)^{\frac{1 - \epsilon}{2(1 + \epsilon)}}
\left( 
\log \frac{2T^2}{\delta}
\right)^{\frac{\epsilon}{1 + \epsilon}}
\right)^2 \\
&\qquad 
+ 2\sqrt{\kappa}
\max_{u\in[t+1]} \beta_{u-1}
+ \Bigl( \tfrac{1}{\alpha} - 1 \Bigr)
\sum_{s=1}^{t} 
\left\| 
\hat{\theta}_s - \theta^*
\right\|^2_{ \frac{X_s X_s^\top}{\sigma_s^{2}} }.
\end{align*}
\paragraph{Step 2: Simplifying the coefficients.}
For $t\in[T]$, our parameter choices ensure that
\[
\sqrt{2}t^{\frac{1-\epsilon}{2(1+\epsilon)}}
(\log3T)^{\frac{1-\epsilon}{2(1+\epsilon)}}
\Bigl(\log\tfrac{2T^2}{\delta}\Bigr)^{\frac{\epsilon}{1+\epsilon}}
\ge 1.
\]
Hence, we multiply the terms of $3\kappa$ or
$2\sqrt{\kappa}\max_{u\in[t+1]}\beta_{u-1}$
by this factor to match the scale of the other terms.
Recall that
\[
(\log\tfrac{2T^2}{\delta})\tau_0
t^{\frac{1-\epsilon}{2(1+\epsilon)}}
=
t^{\frac{1-\epsilon}{2(1+\epsilon)}}
\sqrt{2\kappa}
(\log3T)^{\frac{1-\epsilon}{2(1+\epsilon)}}
(\log\tfrac{2T^2}{\delta})^{\frac{\epsilon}{1+\epsilon}},
\]
so that all terms on the right-hand side of the above form can be rewritten in terms of $(\log\tfrac{2T^2}{\delta})\tau_0
t^{\frac{1-\epsilon}{2(1+\epsilon)}}.$

Substituting $\alpha=8$ into the bound obtained in Step~1 and
collecting terms of the same form, we obtain that, with probability at least $1-4\delta$, for all $t \in [T]$,
\begin{align*}
& 4\lambda S^2
+ \sum_{s=1}^{t}
\|\nabla \ell_s(\hat{\theta}_s)\|_{V_s^{-1}}^2
+ 2 \sum_{s=1}^{t}
\left\langle
\nabla \tilde{\ell}_s(\hat{\theta}_s) - \nabla \ell_s(\hat{\theta}_s),
\hat{\theta}_s - \theta^*
\right\rangle \Indi_{\mathcal{A}_s} \notag\\
&\quad
+ \Bigl( \tfrac{1}{\alpha} - 1 \Bigr)
\sum_{s=1}^{t}
\left\|
\hat{\theta}_s - \theta^*
\right\|^2_{ \frac{X_s X_s^\top}{\sigma_s^{2}} } \\
&\le (4S^2+2)\lambda
+ (144+128+3)\Bigl((\log\tfrac{2T^2}{\delta})\tau_0
t^{\tfrac{1-\epsilon}{2(1+\epsilon)}}\Bigr)^2 \\
&\quad
+ \Bigl(130\sqrt{8+\tfrac{1}{8}}
      + 24\sqrt{1+\tfrac{1}{64}}
      + 2 \Bigr)
  (\log\tfrac{2T^2}{\delta})\tau_0
  t^{\tfrac{1-\epsilon}{2(1+\epsilon)}}
  \max_{u\in[t+1]}\beta_{u-1} \\
&\le (4S^2+2)\lambda
  + 275\Bigl((\log\tfrac{2T^2}{\delta})\tau_0
   t^{\tfrac{1-\epsilon}{2(1+\epsilon)}}\Bigr)^2 \\
&\quad + 397(\log\tfrac{2T^2}{\delta})\tau_0
   t^{\tfrac{1-\epsilon}{2(1+\epsilon)}}
   \max_{u\in[t+1]}\beta_{u-1}.
\end{align*}
Here, the term $\sum_{s=1}^t\|\hat{\theta}_s - \theta^*\|_{\frac{X_s X_s^{\top}}{\sigma_s}}$ cancels out because of the choice $\alpha=8$.

To match the standard confidence bound form
$\|\hat{\theta}_t - \theta^*\|_{V_t} \le \beta_t$,
it suffices to choose $\beta_t$ so that, for all $t \in [T]$,
\begin{align*}
    \beta_t^2\geq   (4S^2+2)\lambda
  + 275\Bigl((\log\tfrac{2T^2}{\delta})\tau_0
   t^{\tfrac{1-\epsilon}{2(1+\epsilon)}}\Bigr)^2+ 397(\log\tfrac{2T^2}{\delta})\tau_0
   t^{\tfrac{1-\epsilon}{2(1+\epsilon)}}
   \max_{u\in[t+1]}\beta_{u-1}.
\end{align*}
To simplify the notation, define
\[
b=397t^{\tfrac{1-\epsilon}{2(1+\epsilon)}},
c=(4S^2+2)\lambda
+275\Bigl((\log\tfrac{2T^2}{\delta})\tau_0
t^{\tfrac{1-\epsilon}{2(1+\epsilon)}}\Bigr)^2.
\]
Then the above condition can be written as
\[ 
b\max_{u\le t+1}\beta_{u-1}
+ c \leq \beta_t^2.
\]
\paragraph{Step 3: Choosing the confidence radius.}
By choosing the sequence $\{\beta_t\}_{t=1}^T$ to be non-decreasing, we have
$\max_{u\le t+1}\beta_{u-1} = \beta_t$. Therefore, it suffices to require
\[
b\beta_t + c \le \beta_t^2
\qquad\text{for all } t\in[T].
\]
The quadratic inequality holds whenever
\[
\beta_t
  \ge  
\frac{b}{2}
+\sqrt{
\Bigl(\frac{b}{2}\Bigr)^2
+ c}.
\]
Using the inequality $\sqrt{a+b}\le\sqrt{a}+\sqrt{b}$ for $a,b>0$, we obtain the simpler sufficient condition
\[
\beta_t
  \ge  
b
+ \sqrt{c}.
\]
Recalling the explicit form of $b$ and $c$, and slightly enlarging the numerical constant for simplicity, we define
\[
\beta_t
:= 409
(\log\tfrac{2T^2}{\delta})\tau_0
t^{\tfrac{1-\epsilon}{2(1+\epsilon)}}
+ \sqrt{(4S^2+2)\lambda}.
\]
With this choice, with probability at least $1-4\delta$,
the following inequality holds for all $t\in[T]$:
\begin{align}
    &\beta_t^2 \geq 4\lambda S^2
+ \sum_{s=1}^{t}
\|\nabla \ell_s(\hat{\theta}_s)\|_{V_s^{-1}}^2
+ 2 \sum_{s=1}^{t}
\left\langle
\nabla \tilde{\ell}_s(\hat{\theta}_s) - \nabla \ell_s(\hat{\theta}_s),
\hat{\theta}_s - \theta^*
\right\rangle \Indi_{\mathcal{A}_s}  \notag \\
&\quad + \Bigl( \tfrac{1}{\alpha} - 1 \Bigr)
\sum_{s=1}^{t}
\left\|
\hat{\theta}_s - \theta^*
\right\|^2_{ \frac{X_s X_s^\top}{\sigma_s^{2}}}.
\label{equ:sta_and_gene}
\end{align}
\paragraph{Step 4: Verifying the confidence radius via induction.}
Let $\mathcal{B}$ denote the event that the conditions in~\eqref{equ:sta_and_gene} hold for all
$t \ge 1$. By construction, $\mathbb{P}(\mathcal{B}) \ge 1 - 4\delta$.
We define the event
\begin{equation*}
\mathcal{C}
:= \bigcap_{t\ge 1}
\Bigl\{
\|\hat{\theta}_t-\theta^*\|_{V_{t-1}} \le \beta_{t-1}
\Bigr\}
=
\bigcap_{t\ge 1} \mathcal{A}_t.
\end{equation*}
We show that $\mathcal{B} \subseteq \mathcal{C}$ by mathematical induction,
which immediately implies
$\mathbb{P}(\mathcal{C}) \ge \mathbb{P}(\mathcal{B}) \ge 1 - 4\delta$.
By definition, $\mathcal{A}_1$ holds since
$\|\hat{\theta}_1 - \theta^\ast\|_{V_0}
\le \sqrt{4\lambda S^2}
\le \sqrt{\lambda(2+4S^2)}
= \beta_0 .$

Assume that at iteration $t \ge 1$, $\mathcal{A}_s$ holds for all $s \in [t]$.
We now show that $\mathcal{A}_{t+1}$ also holds.
By Lemma~\ref{lem:key_decomp}, we have
\begin{align*}
&\|\hat{\theta}_{t+1}-\theta^*\|_{V_t}^2 \\
&\le
4\lambda S^2
+ \sum_{s=1}^t \|\nabla \ell_s(\hat{\theta}_s)\|_{V_{s-1}^{-1}}^2
+ 2\sum_{s=1}^t
\bigl\langle
\nabla \tilde{\ell}_s(\hat{\theta}_s)
- \nabla \ell_s(\hat{\theta}_s),
\hat{\theta}_s - \theta^*
\bigr\rangle +\Bigl(\frac{1}{\alpha}-1\Bigr)
\sum_{s=1}^t
\|\hat{\theta}_s-\theta^*\|_{\frac{X_s X_s^\top} {\sigma_s^2}}^2
\\
&=
4\lambda S^2
+ \sum_{s=1}^t \|\nabla \ell_s(\hat{\theta}_s)\|_{V_{s-1}^{-1}}^2
+ 2\sum_{s=1}^t
\bigl\langle
\nabla \tilde{\ell}_s(\hat{\theta}_s)
- \nabla \ell_s(\hat{\theta}_s),
\hat{\theta}_s - \theta^*
\bigr\rangle \mathbf{1}_{\mathcal{A}_s}  \\
&\quad + \Bigl(\frac{1}{\alpha}-1\Bigr)
\sum_{s=1}^t
\|\hat{\theta}_s-\theta^*\|^2_{\frac{X_s X_s^\top}{\sigma_s^2}}
\\
&\le
\beta_t^2 .
\end{align*}
Here, the first inequality follows from Lemma~\ref{lem:key_decomp}, the first equality uses the
induction hypothesis that $\mathcal{A}_s$ holds for all $s \in [t]$, and the
second inequality follows from~\eqref{equ:sta_and_gene}. 
Consequently, $\mathcal{A}_{t+1}$ holds.
Since $t\ge 1$ was arbitrary, we conclude that $\mathcal{A}_t$ holds for all $t \ge 1$, which implies that
$\mathbb{P}(\mathcal{C}) \ge 1-4\delta$.

Moreover, we can verify that
\begin{align*}
\sqrt{
\frac{2\|X_t\|_{V_{t-1}^{-1}}^2 \beta_{t-1}}
{\sqrt{\alpha}\tau_0 t^{\frac{1-\epsilon}{2(1+\epsilon)}}}
}
=
\sqrt{
\frac{
409 \left(\log \frac{2T^2}{\delta}\right) \tau_0 t^{\frac{1-\epsilon}{2(1+\epsilon)}}
+ \sqrt{\lambda(2+4S^2)}
}
{\tau_0 t^{\frac{1-\epsilon}{2(1+\epsilon)}}}
}
\|X_t\|_{V_{t-1}^{-1}}
&\ge
\sqrt{409}\|X_t\|_{V_{t-1}^{-1}} \\
&\ge
2\sqrt{2}\|X_t\|_{V_{t-1}^{-1}}.
\end{align*} Hence, our choice of $\sigma_t$ satisfies the lower bound condition required in Lemma~\ref{lem:stability}.

Consequently, under the choice of $\sigma_t$ and $\tau_t$,
\[
\sigma_t
:=
\max\left\{
\nu_t,
\frac{1}{\sqrt{T}}
\sqrt{
\frac{
\|X_t\|_{V_{t-1}^{-1}}^2 \beta_{t-1}
}
{\tau_0 t^{\frac{1-\epsilon}{2(1+\epsilon)}}}
},
\sqrt{C}\kappa^{-1/4}\|X_t\|_{V_{t-1}}^{1/2}
\right\},
\qquad
\tau_t
=
\tau_0
\frac{\sqrt{1+w_t^2}}{w_t}
t^{\frac{1-\epsilon}{2(1+\epsilon)}},
\]
we obtain that for any $\delta \in (0,\frac{1}{4})$, with probability at least
$1-4\delta$, the following holds for all $t \ge 1$:
\[
\|\hat{\theta}_{t+1} - \theta_*\|_{V_t}
\le
409 \log \frac{2T^2}{\delta}
\tau_0 t^{\frac{1-\epsilon}{2(1+\epsilon)}}
+
\sqrt{\lambda(2+4S^2)}.
\]

\end{proof}
\section{Regret Analysis}
\subsection{Proof of Theorem~\ref{thm:regret_instance}}
\label{subsec:ins_reg}
\begin{proof}
We define $X_t^* = \argmax_{\mathbf{x}\in\mathcal{X}_t} \mathbf{x}^\top\theta^*.$
By Lemma~\ref{lem:conf_interval} and the fact that
$X_t^*, X_t \in \mathcal{X}_t$, we have, with probability at least
$1-4\delta$, for all $t\in[T]$,
\begin{equation}
\label{equ:Xt_star_bound}
\langle X_t^*, \theta^* - \hat{\theta}_t\rangle
\le
\|\theta^* - \hat{\theta}_t\|_{V_{t-1}}
\|X_t^*\|_{V_{t-1}^{-1}}
\le
\beta_{t-1}\|X_t^*\|_{V_{t-1}^{-1}},
\end{equation}
which can be rewritten as
\begin{equation}
\label{equ:X_t_star}
X_t^{*\top}\theta^*
\le
X_t^{*\top}\hat{\theta}_t
+ \beta_{t-1}\|X_t^*\|_{V_{t-1}^{-1}}.
\end{equation}
Similarly, by Lemma~\ref{lem:conf_interval},
with probability at least $1-4\delta$, for all $t\in[T]$,
\begin{equation}
\label{equ:Xt_bound}
\langle X_t, \theta^* - \hat{\theta}_t\rangle
\ge
-\beta_{t-1}\|X_t\|_{V_{t-1}^{-1}},
\end{equation}
which implies
\begin{equation}
\label{equ:X_t}
X_t^{\top}\theta^*
\ge
X_t^{\top}\hat{\theta}_t
- \beta_{t-1}\|X_t\|_{V_{t-1}^{-1}}.
\end{equation}

By applying the union bound to \eqref{equ:X_t_star} and \eqref{equ:X_t},
we obtain that, with probability at least $1-8\delta$, the following
holds for all $t\in[T]$:
\begin{align}
X_t^{*\top}\theta^* - X_t^\top\theta^*
&\le
\bigl(
  X_t^{*\top}\hat{\theta}_t
  + \beta_{t-1}\|X_t^*\|_{V_{t-1}^{-1}}
\bigr)
-
\bigl(
  X_t^{\top}\hat{\theta}_t
  - \beta_{t-1}\|X_t\|_{V_{t-1}^{-1}}
\bigr)
\notag \\
&=
X_t^{*\top}\hat{\theta}_t - X_t^\top\hat{\theta}_t
+ \beta_{t-1}\bigl(
    \|X_t^*\|_{V_{t-1}^{-1}}
    + \|X_t\|_{V_{t-1}^{-1}}
  \bigr).
\label{equ:reg_before}
\end{align}
On the other hand, by the arm selection rule~\eqref{eq:arm_selection}, 
\begin{equation*}
X_t^{*\top}\hat{\theta}_t
+ \beta_{t-1}\|X_t^*\|_{V_{t-1}^{-1}}
\le
X_t^\top\hat{\theta}_t
+ \beta_{t-1}\|X_t\|_{V_{t-1}^{-1}},
\end{equation*}
which implies
\begin{equation}
X_t^{*\top}\hat{\theta}_t - X_t^\top\hat{\theta}_t
\le
\beta_{t-1}\bigl(
  \|X_t\|_{V_{t-1}^{-1}}
  - \|X_t^*\|_{V_{t-1}^{-1}}
\bigr).
\label{equ:reg_arm_selection}
\end{equation}
Substituting~\eqref{equ:reg_arm_selection} into~\eqref{equ:reg_before} yields
\begin{align*}
X_t^{*\top}\theta^* - X_t^\top\theta^*
&\le 
\beta_{t-1}\bigl(
  \|X_t\|_{V_{t-1}^{-1}}
  - \|X_t^*\|_{V_{t-1}^{-1}}
\bigr)
+
\beta_{t-1}\bigl(
  \|X_t^*\|_{V_{t-1}^{-1}}
  + \|X_t\|_{V_{t-1}^{-1}}
\bigr)= 2\beta_{t-1}\|X_t\|_{V_{t-1}^{-1}}.
\end{align*}
Therefore, the regret satisfies
\begin{align*}
\mathrm{Reg}(T)
=
\sum_{t=1}^T
\bigl(
  X_t^{*\top}\theta^*
  - X_t^\top\theta^*
\bigr) \le
2\sum_{t=1}^T
\beta_{t-1}\|X_t\|_{V_{t-1}^{-1}} 
\le
2\beta_T
\sum_{t=1}^T
\|X_t\|_{V_{t-1}^{-1}} = 2\sqrt{\alpha}\beta_T
\sum_{t=1}^T \sigma_t w_t,
\end{align*}
where $\beta_t
=
409
\bigl(\log\tfrac{2T^2}{\delta}\bigr)\tau_0
t^{\tfrac{1-\epsilon}{2(1+\epsilon)}}
+ \sqrt{(4S^2+2)\lambda}.$

Next we bound the sum of bonuses $\sum_{t=1}^T \sigma_t w_t$
by splitting the time indices $[T]$ into three cases depending on which
candidate term attains the value of $\sigma_t$.
If the maximum is attained by multiple candidate terms,
we assign the index $t$ to an arbitrary one of them.
We define
\begin{align*}
\mathcal{J}_1 &:= \Bigl\{t\in[T]:
  \sigma_t \in \{\nu_t,\sigma_{\min}\}\Bigr\},\\
\mathcal{J}_2 &:= \Biggl\{t\in[T]:
  \sigma_t =
  \sqrt{\frac{2\beta_{t-1}}
             {\tau_0 \sqrt{\alpha}
              t^{\frac{1-\epsilon}{2(1+\epsilon)}}}}
   \|X_t\|_{V_{t-1}^{-1}}
  \Biggr\},\\
\mathcal{J}_3 &:= \Bigl\{t\in[T]:
  \sigma_t =
  \sqrt{C}\kappa^{-1/4}
    \|X_t\|_{V_{t-1}^{-1}}^{1/2}
  \Bigr\}.
\end{align*}

We first bound the contribution from the indices in $\mathcal{J}_1$.
Recalling that $\sigma_{\min} = 1/\sqrt{T}$, by the Cauchy--Schwarz
inequality and Lemma~\ref{lem:eliptical} we obtain
\begin{align}
  \sum_{t\in\mathcal{J}_1}\sigma_t w_t
  = \sum_{t\in\mathcal{J}_1}\max\{\nu_t, \sigma_{\min}\} w_t
  &\leq  \sum_{t=1}^T\max\{\nu_t, \sigma_{\min}\} w_t \notag \\
  &\leq
  \sqrt{\sum_{t=1}^{T}(\nu_t^2 + \sigma_{\min}^2)}
  \sqrt{\sum_{t=1}^T w_t^2} 
  \leq
  \sqrt{2\kappa}\sqrt{1+\sum_{t=1}^T \nu_t^2 },
\label{equ:J1_regret}
\end{align}
where in the last inequality we used $\sum_{t=1}^T \sigma_{\min}^2 = 1$
and Lemma~\ref{lem:eliptical} to bound $\sum_{t=1}^T w_t^2 \le 2\kappa$.

For the indices in $\mathcal{J}_2$, we use the fact that
Lemma~\ref{lem:conf_interval} holds, and hence the confidence
radius $\beta_t$ takes the form
\[
\beta_t
=
409
\biggl(\log\frac{2T^2}{\delta}\biggr)\tau_0
t^{\frac{1-\epsilon}{2(1+\epsilon)}}
+ \sqrt{\lambda(2+4S^2)}.
\]
Therefore,
\begin{align}
w_t^{-2}
= \frac{2\beta_{t-1}}
        {\sqrt{\alpha}\tau_0 t^{\frac{1-\epsilon}{2(1+\epsilon)}}}
&=\frac{2 \left(409
\biggl(\log\frac{2T^2}{\delta}\biggr)\tau_0
t^{\frac{1-\epsilon}{2(1+\epsilon)}}
+ \sqrt{\lambda(2+4S^2)}\right)}{\sqrt{\alpha}\tau_0 t^{\frac{1-\epsilon}{2(1+\epsilon)}}}\\
&\le
\frac{\sqrt{\lambda(2+4S^2)}}{\tau_0}
+ 409\log\frac{2T^2}{\delta},
\label{equ:w_t}
\end{align}
and letting
\[
c_0
:=
\frac{\sqrt{\lambda(2+4S^2)}}{\tau_0}
+ 409\log\frac{2T^2}{\delta},
\]
we obtain
\begin{equation}
\sum_{t\in\mathcal{J}_2}\sigma_t w_t
= \sum_{t\in\mathcal{J}_2}\frac{1}{w_t^2}\|X_t\|_{\tilde{V}^{-1}_{t-1}} w_t^2
\le
\frac{c_0L}{\sqrt{\alpha\lambda}} \sum_{t\in\mathcal{J}_2} w_t^2
\le
\frac{2c_0\kappa L}{\sqrt{\alpha\lambda}},
\label{equ:J2_regret}
\end{equation}
where the first inequality follows from
$\tilde V_{t-1}\succeq \lambda\alpha I_d$, which implies
$\|X_t\|_{\tilde V_{t-1}^{-1}}
\le L/\sqrt{\alpha\lambda}$,
and the second inequality follows from
Lemma~\ref{lem:eliptical}, which gives
$\sum_{t=1}^T w_t^2 \le 2\kappa$.

Finally, we bound the contribution from the indices in $\mathcal{J}_3$:
\begin{equation}
\sum_{t\in\mathcal{J}_3} \sigma_t w_t
= \sum_{t\in\mathcal{J}_3}
    \frac{\sqrt{\alpha}C}{\sqrt{\kappa}} w_t^2
\le
2\sqrt{\alpha\kappa} C,
\label{equ:J3_regret}
\end{equation}
where the inequality follows from Lemma~\ref{lem:eliptical}.
 
Choosing $\delta = 1/(8T)$ and
$\lambda = d$, we obtain $\kappa = \tilde{\mathcal{O}}(d)$ and $\beta_T
= \tilde{\mathcal{O}}\left(
  \sqrt{d}  
  T^{\frac{1-\epsilon}{2(1+\epsilon)}}
\right).$
Plugging these orders into the bounds
\eqref{equ:J1_regret}, \eqref{equ:J2_regret}, and
\eqref{equ:J3_regret}, and summing them up, with probability at least $1-\frac{1}{T}$, we have
\begin{align*}
\mathrm{Reg}(T)
&\le
2\sqrt{\alpha}\beta_T\left(
  \sqrt{2\kappa}\sqrt{1+\sum_{t=1}^T \nu_t^2}
  + \frac{2c_0\kappa L}{\sqrt{\alpha\lambda}}
  + \sqrt{\alpha\kappa} C
\right) \\
&=
\tilde{\mathcal{O}}\left(
  \sqrt{d}
  T^{\frac{1-\epsilon}{2(1+\epsilon)}}
  \left(
    \sqrt{d\sum_{t=1}^T \nu_t^2}
    + \sqrt{d}
  \right)
  + \sqrt{d} C
\right) \\
&=
\tilde{\mathcal{O}}\left(
  d T^{\frac{1-\epsilon}{2(1+\epsilon)}}
     \sqrt{\sum_{t=1}^T \nu_t^2}
  + d T^{\frac{1-\epsilon}{2(1+\epsilon)}}
  + d T^{\frac{1-\epsilon}{2(1+\epsilon)}} C
\right).
\end{align*}
\end{proof}

\subsection{Proof of Corollary~\ref{col:regret_c}}
\label{subsec:unknown_c_reg}
\begin{proof}
The argument follows the same structure as in the proof of
Theorem~\ref{thm:regret_instance}.  
The only difference is that
we replace $C$ with $\bar{C}$ in the definition of $\mathcal{J}_3$, i.e.,
\[
\mathcal{J}_3 := \{ t\in[T] : 
   \sigma_t = \sqrt{\bar{C}}\kappa^{-1/4}\|X_t\|_{V_{t-1}^{-1}}^{1/2}
   \}.
\]
With this definition of $\mathcal{J}_3$, we obtain the bound on the contribution of indices in $\mathcal J_3$:
\begin{equation*}
\sum_{t\in\mathcal{J}_3}\sigma_t w_t
\le 2\sqrt{\alpha\kappa}\bar{C}.
\end{equation*}

The bounds for $\mathcal{J}_1$ and $\mathcal{J}_2$ remain identical to
those in Theorem~\ref{thm:regret_instance}.  
Under the same choice of $\delta$ and $\lambda$ as in Theorem~\ref{thm:regret_instance}, with probability at least $1-\frac{1}{T}$, we have
\begin{align*}
\mathrm{Reg}(T)
&\le
2\sqrt{\alpha}\beta_T\left(
 \sqrt{2\kappa}\sqrt{\sum_{t=1}^T \nu_t^2 + 1}
 + \frac{2c_0L\kappa}{\sqrt{\alpha\lambda}}
 + 2\sqrt{\alpha\kappa}\bar{C}
\right) \\
&=
\tilde{\mathcal{O}}\left(
 dT^{\frac{1-\varepsilon}{2(1+\varepsilon)}}
   \sqrt{\sum_{t=1}^T \nu_t^2}
 + dT^{\frac{1-\varepsilon}{2(1+\varepsilon)}}
 + dT^{\frac{1-\varepsilon}{2(1+\varepsilon)}}\bar{C}
\right).
\end{align*}
\end{proof}

\subsection{Proof of Corollary~\ref{col:regret_nu}}
\label{subsec:unknown_nu_reg}
\begin{proof}
The argument follows the same structure as in the proof of
Theorem~\ref{thm:regret_instance}.
The only difference is that $\nu_t$ is replaced by $\nu$
in the definition of $\mathcal{J}_1$, namely,
\[
\mathcal{J}_1 := \{ t\in[T] : \sigma_t \in \{\nu,\sigma_{\min}\} \}.
\]
With this definition of $\mathcal{J}_1$, we obtain the bound on the contribution of indices in $\mathcal J_1$:
\begin{align*}
\sum_{t\in\mathcal{J}_1}\sigma_t w_t
&= \sum_{t\in\mathcal{J}_1} \max\{\nu,\sigma_{\min}\} w_t \\
&\le
\sqrt{\sum_{t\in\mathcal{J}_1}(\nu^2 + \sigma_{\min}^2)}
\sqrt{\sum_{t\in\mathcal{J}_1} w_t^2}
\le
\sqrt{2\kappa}\nu\sqrt{T+1}.
\end{align*}

The bounds for $\mathcal{J}_2$ and $\mathcal{J}_3$ remain identical to
those in Theorem~\ref{thm:regret_instance}.
Under the same choice of $\delta$ and $\lambda$ as in Theorem~\ref{thm:regret_instance}, with probability at least $1-\frac{1}{T}$, we have
\begin{align*}
\mathrm{Reg}(T)
&\le
2\sqrt{\alpha}\beta_T \sum_{t=1}^T \sigma_t w_t \\
&\le
4\beta_T\left(
  \sqrt{2\kappa}\nu\sqrt{T+1}
  + \frac{2c_0L\kappa}{\sqrt{\alpha\lambda}}
  + 2\sqrt{\alpha\kappa}C
\right) \\
&=
\tilde{\mathcal{O}}\left(
  dT^{\frac{1}{1+\varepsilon}}
  + dT^{\frac{1-\varepsilon}{2(1+\varepsilon)}}
  + dT^{\frac{1-\varepsilon}{2(1+\varepsilon)}}C
\right).
\end{align*}
\end{proof}
\section{Auxiliary Lemmas}
We collect several auxiliary lemmas adapted from
\citet{huang2023tackling} and \citet{wang2025heavy}.
For clarity, let $\mathcal{F}_t = \sigma(X_{1:t}, \eta_{1:t-1}, c_{1:t-1})$
denote the filtration associated with our process.
Compared to the filtrations used in the above works, ours additionally includes
the corruption sequence $c_{1:t-1}$.
This difference is immaterial, since the lemmas in
\citet{huang2023tackling} and \citet{wang2025heavy}
only require a conditional moment bound with respect to the underlying filtration.

\begin{lemma}[Partial result of Lemma~C.2 in \citet{huang2023tackling}]
\label{lem:huang_C.2}
Let $\{\mathcal{F}_t\}_{t=1}^\infty$ be a filtration and
$\{\eta_t \}_{t=1}^\infty$ be a sequence of random variables such that
for all $t\ge 1$,
\begin{equation*}
\mathbb{E}\left[\frac{\eta_t}{\sigma_t} \middle| \mathcal{F}_{t}\right]
= 0,
\qquad
\mathbb{E}\left[\left|\frac{\eta_t}{\sigma_t}\right|^{1+\epsilon}
 \middle| \mathcal{F}_{t}\right]
\le b^{1+\epsilon}.
\end{equation*}
For each $t\in\mathbb{N}$, let $X_t$ be an $\mathbb{R}^d$-valued random variable satisfying
$\|X_t\|_2 \leq L$. Define $U_0 = \lambda I_d$ and
$U_t = U_{t-1} + \frac{X_t X_t^\top}{\sigma_t^2}$ for $t\ge 1$, where $\lambda>0$.
Moreover, for $t\ge 1$ set
\[
\tau_t
=
\tau_0 \frac{\sqrt{1 + w_t^2}}{w_t} 
t^{\frac{1 - \epsilon}{2(1 + \epsilon)}},
\qquad
\tau_0
=
\frac{\sqrt{2\kappa}b 
      (\log 3T)^{\frac{1 - \epsilon}{2(1 + \epsilon)}}}
     {\bigl(\log \frac{2T^2}{\delta}\bigr)^{\frac{1}{1 + \epsilon}}}.
\]
Then, with probability at least $1 - 2\delta$, for all $t\ge 1$,
\begin{equation*}
\Biggl\|
\sum_{s=1}^t
f_{\tau_s}^{\prime}\left(\frac{\eta_s}{\sigma_s}\right)
\frac{X_s}{\sigma_s}
\Biggr\|_{U_t^{-1}}
\le
8 t^{\frac{1-\epsilon}{2(1+\epsilon)}} 
\sqrt{2\kappa} 
(\log 3T)^{\frac{1-\epsilon}{2(1+\epsilon)}} 
\left(\log\frac{2T^2}{\delta}\right)^{\frac{\epsilon}{1+\epsilon}}.
\end{equation*}
\end{lemma}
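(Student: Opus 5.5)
The statement is the self-normalized concentration bound of Lemma~C.2 in \citet{huang2023tackling}. I would prove it by a Freedman-type argument on the truncated (Huber-clipped) martingale, using the logarithmic sandwich of Property~\ref{property:huber}(\ref{property:huber3}). The plan has two stages. First, a scalar exponential supermartingale in every fixed direction. Second, a covering (peeling) argument in $t$ and a union bound, to convert the directional bounds into a $U_t^{-1}$-norm bound.

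\textbf{Directional supermartingale.} Fix a direction $\mu$ and write $\psi_s := f'_{\tau_s}(\eta_s/\sigma_s)$. By Property~\ref{property:huber}(\ref{property:huber2}), $\psi_s = \tau_s f'_1(\eta_s/(\sigma_s\tau_s))$. Property~\ref{property:huber}(\ref{property:huber3}) gives $\exp(\lambda_s \psi_s/\tau_s)\le 1 + \lambda_s\eta_s/(\sigma_s\tau_s)+|\lambda_s\eta_s/(\sigma_s\tau_s)|^{1+\epsilon}$ for $\lambda_s\in[0,1]$; this is the standard Catoni-type trick. Taking conditional expectations and using that $\eta_s/\sigma_s$ has zero mean and $(1+\epsilon)$-moment at most $b^{1+\epsilon}$, I get
\[
\mathbb{E}\bigl[\exp(\lambda_s\psi_s/\tau_s)\mid\mathcal F_s\bigr]\le \exp\bigl(\lambda_s^{1+\epsilon}b^{1+\epsilon}\tau_s^{-(1+\epsilon)}\bigr).
\]
I will choose the weights so that $\lambda_s/\tau_s$ is proportional to $\langle\mu, X_s/\sigma_s\rangle$, normalized by $w_s/\sqrt{1+w_s^2}$. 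The choice of $\tau_s$, which contains the factor $\sqrt{1+w_s^2}/w_s$, makes the accumulated exponent $\sum_s b^{1+\epsilon}\tau_s^{-(1+\epsilon)}(\cdot)$ comparable to $\sum_s (w_s^2/(1+w_s^2))^{(1+\epsilon)/2}$. By H\"older, this sum is at most $t^{\frac{1-\epsilon}{2}}\bigl(\sum_s w_s^2/(1+w_s^2)\bigr)^{\frac{1+\epsilon}{2}}$. The elliptical potential lemma (Lemma~\ref{lem:eliptical}) bounds the inner sum by $\log\det U_t/\det U_0\le 2\kappa$. Ville's inequality then gives, with probability $1-\delta/T^2$, a bound on $\langle\mu,\sum_s\psi_s X_s/\sigma_s\rangle$ of the form (variance proxy)$+\log(2T^2/\delta)$ times the scale. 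The lower inequality in Property~\ref{property:huber}(\ref{property:huber3}) handles the negative side in the same way.

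\textbf{Self-normalization.} To pass from directions to $\|\cdot\|_{U_t^{-1}}$, I would follow \citet{huang2023tackling}. They write $\|S_t\|_{U_t^{-1}}^2=\sum_s\psi_s\langle X_s/\sigma_s, U_t^{-1}S_t\rangle$ and use the one-step recursion $\|S_t\|_{U_t^{-1}}^2\le\|S_{t-1}\|_{U_{t-1}^{-1}}^2+2\psi_t\langle X_t/\sigma_t,U_{t-1}^{-1}S_{t-1}\rangle/(1+w_t^2)+\psi_t^2 w_t^2/(1+w_t^2)$ (Sherman--Morrison). This reduces the problem to two sums. The first is a scalar martingale with predictable coefficient $\langle X_t/\sigma_t,U_{t-1}^{-1}S_{t-1}\rangle/(1+w_t^2)$, bounded by $w_t\|S_{t-1}\|_{U_{t-1}^{-1}}/(1+w_t^2)$; the directional argument above applies to it, and I would close the resulting self-referential bound by induction on a stopping time. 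The second, $\sum_s\psi_s^2 w_s^2/(1+w_s^2)$, is exactly the quantity controlled by Lemma~\ref{lem:huang_C.5}, at probability cost $\delta$. A union bound over $t\in[T]$ with level $\delta/T^2$ produces the $\log(2T^2/\delta)$. Plugging in $\tau_0$ makes the truncation-bias term and the deviation term equal, as was observed for TERM (A.1). This yields $t^{\frac{1-\epsilon}{2(1+\epsilon)}}\sqrt{2\kappa}(\log 3T)^{\frac{1-\epsilon}{2(1+\epsilon)}}(\log(2T^2/\delta))^{\frac{\epsilon}{1+\epsilon}}$ up to the constant $8$.

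\textbf{Main obstacle.} The hard step is the self-referential martingale term. Its predictable coefficient depends on $\|S_{t-1}\|_{U_{t-1}^{-1}}$ itself. One must apply the exponential inequality on the event that the bound held up to $t-1$ (the indicator trick), and control the $(1+\epsilon)$-moment of the coefficient using $w_t/(1+w_t^2)$ together with the $\log 3T$ peeling factor. Getting the constants to collapse to $8$ requires that balancing, and I would rely on the corresponding computation in \citet{huang2023tackling}.
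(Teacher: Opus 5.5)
The paper does not prove this lemma. It imports Lemma~C.2 of \citet{huang2023tackling} and remarks only that enlarging the filtration by $c_{1:t-1}$ is immaterial, because only conditional moment bounds are used. Your reconstruction has the right skeleton:
\begin{itemize}
\item the Sherman--Morrison recursion you state is correct;
\item the cross term is handled on the event that the bound held at all earlier rounds;
\item the quadratic term is exactly Lemma~\ref{lem:huang_C.5};
\item the $1-2\delta$ budget splits between these two.
\end{itemize}
However, the exponential-moment step that is supposed to control the cross term is false. It also hides exactly the truncation-bias term that you later say must be balanced against the deviation.

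Concretely, for $\gamma\in[0,1]$ it is not true that $e^{\gamma f_1'(x)}\le 1+\gamma x+|\gamma x|^{1+\epsilon}$. With $\epsilon=1$, $\gamma=0.1$, $x=-5$, the left side is $e^{-0.1}\approx 0.905$ and the right side is $0.75$. Nor can $\mathbb{E}[e^{\gamma\psi_s/\tau_s}\mid\mathcal{F}_s]\le\exp(\gamma^{1+\epsilon}b^{1+\epsilon}\tau_s^{-(1+\epsilon)})$ hold. At $\gamma=0$ the logarithm of the left side has derivative $\mathbb{E}[\psi_s\mid\mathcal{F}_s]/\tau_s$, which is the truncation bias. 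This is strictly positive when clipping removes a heavy negative tail, while the right side has derivative $0$.

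Your inequality is the Catoni bound for $f_1'(\gamma x)$, whose clipping level moves with the multiplier. Here the clipping level is frozen at $\tau_s$, so shrinking the multiplier does not make the bias higher order, and a Bernstein-type optimization over the multiplier is not available. What convexity ($e^{\gamma g}\le 1+\gamma(e^{g}-1)$ for $\gamma\in[0,1]$) and both sides of Property~\ref{property:huber}(\ref{property:huber3}) actually give is
\[
e^{\gamma f_1'(x)}\le 1+\gamma x+|\gamma|\,|x|^{1+\epsilon}\quad(|\gamma|\le 1),
\qquad
\mathbb{E}\bigl[e^{\gamma f_1'(\eta_s/(\sigma_s\tau_s))}\bigm|\mathcal{F}_s\bigr]\le \exp\bigl(|\gamma|\,b^{1+\epsilon}\tau_s^{-(1+\epsilon)}\bigr),
\]
which is only linear in $|\gamma|$.

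The repair goes as follows.
\begin{itemize}
\item \textbf{Setup.} Let $\rho_s:=8\tau_0 s^{\frac{1-\epsilon}{2(1+\epsilon)}}\log\frac{2T^2}{\delta}$, which is $b$ times the displayed bound. Let $a_s:=2\langle X_s/\sigma_s,U_{s-1}^{-1}S_{s-1}\rangle/(1+w_s^2)$ times the indicator that the earlier bounds held. Then $|a_s|\le 2w_s\rho_{s-1}/(1+w_s^2)$, and the cross term is $\sum_{s\le t}(a_s\tau_s)f_1'(\eta_s/(\sigma_s\tau_s))$.
\item \textbf{Choice of multiplier.} A multiplier $\zeta$ now enters only through $\zeta^{-1}\log\frac{2T^2}{\delta}$, so it must be as large as the admissibility condition $|\zeta a_s\tau_s|\le 1$ allows. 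Because $\tau_s\propto\sqrt{1+w_s^2}/w_s$, one has $|a_s|\tau_s\le 2\tau_0 s^{\frac{1-\epsilon}{2(1+\epsilon)}}\rho_{s-1}$. So take $\zeta^{-1}=2\tau_0 t^{\frac{1-\epsilon}{2(1+\epsilon)}}\rho_{t-1}$ and union-bound over $t$.
\item \textbf{Bias term.} What remains is a $\zeta$-free bias term $b^{1+\epsilon}\sum_{s\le t}|a_s|\tau_s^{-\epsilon}$, with $|a_s|\tau_s^{-\epsilon}\le 2\rho_{s-1}\tau_0^{-\epsilon}\bigl(\frac{w_s^2}{1+w_s^2}\bigr)^{\frac{1+\epsilon}{2}}s^{-\frac{\epsilon(1-\epsilon)}{2(1+\epsilon)}}$. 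You must keep the power of $s$: bounding it by $1$ and reusing your $t^{\frac{1-\epsilon}{2}}$ H\"older estimate gives the wrong power of $t$. Instead, apply H\"older against $\sum_{s\le t}s^{-1}\le\log 3t$; this, not peeling, is the source of the $(\log 3T)$ factor in $\tau_0$. Together with the choice of $\tau_0$, this bounds the bias by $2\tau_0t^{\frac{1-\epsilon}{2(1+\epsilon)}}\rho_{t-1}\log\frac{2T^2}{\delta}$, the same as the deviation term.
\item \textbf{Closing the induction.} The cross term is therefore at most $\rho_t^2/2$. Lemma~\ref{lem:huang_C.5} contributes at most $(1+\sqrt{2})^2\rho_t^2/(64\log\frac{2T^2}{\delta})<\rho_t^2/2$. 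So the induction closes with the constant $8$, without deferring to \citet{huang2023tackling} for the computation.
\end{itemize}
The displayed bound then follows when $b\le 1$, which is how the paper uses it.
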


\begin{lemma}[Lemma~C.5 in \citet{huang2023tackling}]
\label{lem:huang_C.5}
Let $\{\mathcal{F}_t \}_{t=1}^\infty$ be a filtration and
$\{\eta_t \}_{t=1}^\infty$ be a sequence of random variables such that
for all $t\ge 1$,
\[
\mathbb{E}\left[\frac{\eta_t}{\sigma_t} \middle| \mathcal{F}_{t}\right]
= 0,
\qquad
\mathbb{E}\left[\left|\frac{\eta_t}{\sigma_t}\right|^{1+\epsilon}
 \middle| \mathcal{F}_{t}\right]
\le b^{1+\epsilon}.
\]
Moreover, for $t\ge 1$ set
\[
\tau_t
=
\tau_0 \frac{\sqrt{1 + w_t^2}}{w_t} 
t^{\frac{1 - \epsilon}{2(1 + \epsilon)}},
\qquad
\tau_0
=
\frac{\sqrt{2\kappa}b 
      (\log 3T)^{\frac{1 - \epsilon}{2(1 + \epsilon)}}}
     {\bigl(\log (2T^2/\delta)\bigr)^{\frac{1}{1 + \epsilon}}}.
\]
Then, with probability at least $1 - \delta$, for all $t\ge 1$,
\begin{align*}
\sum_{s=1}^t
{f_{\tau_s}^{\prime}}^2\left(\frac{\eta_s}{\sigma_s}\right)
\frac{w_s^2}{1 + w_s^2} 
\le
t^{\frac{1 - \epsilon}{1 + \epsilon}}
\left(
  \sqrt{
    \tau_0^{1 - \epsilon}
    (\sqrt{2 \kappa}b)^{1 + \epsilon}
    (\log 3t)^{\frac{1 - \epsilon}{2}}
  }
  +
  \tau_0 \sqrt{2 \log \frac{2t^2}{\delta}}
\right)^2.
\end{align*}
\end{lemma}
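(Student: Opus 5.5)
The plan is to write $Y_s := {f'_{\tau_s}}^2(\eta_s/\sigma_s)\,\frac{w_s^2}{1+w_s^2}$. The quantities $\tau_s, w_s, \sigma_s$ are $\mathcal{F}_s$-measurable, since $\sigma_s$ depends only on $X_{1:s}$ and the past. I will then split $\sum_{s\le t} Y_s$ into the sum of conditional means $\sum_{s\le t}\mathbb{E}[Y_s\mid\mathcal{F}_s]$ plus a martingale deviation, and bound the two pieces separately. These two pieces should produce exactly the two summands inside the square on the right-hand side.

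\textbf{Conditional mean.} By Property~\ref{property:huber}(1), $|f'_\tau(x)|^2=\min\{x^2,\tau^2\}\le \tau^{1-\epsilon}|x|^{1+\epsilon}$ for $\epsilon\in(0,1]$. Hence
\[
\mathbb{E}[Y_s\mid\mathcal{F}_s]\le b^{1+\epsilon}\tau_s^{1-\epsilon}\frac{w_s^2}{1+w_s^2}
= b^{1+\epsilon}\tau_0^{1-\epsilon}s^{\frac{(1-\epsilon)^2}{2(1+\epsilon)}}\Bigl(\frac{w_s^2}{1+w_s^2}\Bigr)^{\frac{1+\epsilon}{2}} .
\]
Summing over $s\le t$ and applying H\"older with exponents $\frac{2}{1-\epsilon},\frac{2}{1+\epsilon}$ gives $\sum_{s\le t}(\tfrac{w_s^2}{1+w_s^2})^{\frac{1+\epsilon}{2}}\le t^{\frac{1-\epsilon}{2}}\bigl(\sum_{s\le t}\tfrac{w_s^2}{1+w_s^2}\bigr)^{\frac{1+\epsilon}{2}}$. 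The elliptical potential bound (log-det argument for $\tilde V_t$) gives $\sum_s \min\{1,w_s^2\}\le 2\kappa$, and $\frac{w^2}{1+w^2}\le\min\{1,w^2\}$. The powers of $t$ combine as $\frac{(1-\epsilon)^2}{2(1+\epsilon)}+\frac{1-\epsilon}{2}=\frac{1-\epsilon}{1+\epsilon}$. This yields $\sum_{s\le t}\mathbb{E}[Y_s\mid\mathcal{F}_s]\le t^{\frac{1-\epsilon}{1+\epsilon}}\tau_0^{1-\epsilon}(\sqrt{2\kappa}\,b)^{1+\epsilon}$, up to a $(\log 3t)^{\frac{1-\epsilon}{2}}$ slack. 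Call this bound $t^{\frac{1-\epsilon}{1+\epsilon}}A$.

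\textbf{Deviation.} Each term is bounded almost surely: $0\le Y_s\le \tau_s^2\frac{w_s^2}{1+w_s^2}=\tau_0^2 s^{\frac{1-\epsilon}{1+\epsilon}}\le R_t:=\tau_0^2t^{\frac{1-\epsilon}{1+\epsilon}}$. Also $\mathbb{E}[Y_s^2\mid\mathcal{F}_s]\le R_t\,\mathbb{E}[Y_s\mid\mathcal{F}_s]$, so the predictable variance is at most $R_t\, t^{\frac{1-\epsilon}{1+\epsilon}}A$. I will apply Freedman's inequality to the centred sum $\sum_s(Y_s-\mathbb{E}[Y_s\mid\mathcal{F}_s])$ for each fixed $t$ with confidence $\delta/(2t^2)$, and then take a union bound over $t$ (using $\sum_t 1/(2t^2)\le 1$). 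The deviation is then at most $\sqrt{2R_t t^{\frac{1-\epsilon}{1+\epsilon}}A\log\frac{2t^2}{\delta}}+R_t\log\frac{2t^2}{\delta}$. Substituting $R_t$, this equals $t^{\frac{1-\epsilon}{1+\epsilon}}\bigl(\sqrt{A}\,\tau_0\sqrt{2\log\tfrac{2t^2}{\delta}}+\tau_0^2\log\tfrac{2t^2}{\delta}\bigr)$. Adding the mean bound $t^{\frac{1-\epsilon}{1+\epsilon}}A$, the total is at most $t^{\frac{1-\epsilon}{1+\epsilon}}\bigl(\sqrt{A}+\tau_0\sqrt{2\log\tfrac{2t^2}{\delta}}\bigr)^2$, which is the claimed form.

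\textbf{Main obstacle.} The delicate step is Freedman's inequality. Its variance proxy is random (it depends on the realized $w_s$), so it has to be replaced by its deterministic upper bound; a standard way is peeling over a geometric grid of variance levels. That peeling is where I expect the extra $(\log 3t)^{\frac{1-\epsilon}{2}}$-type logarithmic factor to enter, and I will track it so that it lands inside $A$ exactly as in the statement. I will also check the H\"older/elliptical-potential constants, since $\tau_0$ was tuned so that the two summands balance.
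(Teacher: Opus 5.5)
Your argument is correct. The paper itself gives no proof of this lemma; it imports it from Lemma~C.5 of \citet{huang2023tackling}, so your proposal serves as a self-contained justification.

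\textbf{Conditional mean.} This step is right. You have $\min\{x^2,\tau^2\}\le\tau^{1-\epsilon}|x|^{1+\epsilon}$, the powers of $t$ combine to $\frac{1-\epsilon}{1+\epsilon}$, and pathwise $\sum_{s\le t}\frac{w_s^2}{1+w_s^2}\le\sum_{s\le t}\log(1+w_s^2)=\log\frac{\det\tilde V_t}{\det\tilde V_0}\le\kappa$, using $\sigma_s\ge\sigma_{\min}$. Your route is in fact slightly sharper than what the statement needs. You first bound $s\le t$ and then apply H\"older against the all-ones vector. The $(\log 3t)^{\frac{1-\epsilon}{2}}$ factor in the statement is what the looser split produces: write $s^{\frac{(1-\epsilon)^2}{2(1+\epsilon)}}=s^{\frac{1-\epsilon}{1+\epsilon}}s^{-\frac{1-\epsilon}{2}}$ and apply H\"older to $\sum_s s^{-\frac{1-\epsilon}{2}}\bigl(\frac{w_s^2}{1+w_s^2}\bigr)^{\frac{1+\epsilon}{2}}$. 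This yields the harmonic sum $\sum_{s\le t}s^{-1}\le 1+\log t\le\log 3t$. Since $\log 3>1$, that factor is pure slack in your proof.

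\textbf{The ``main obstacle'' does not exist.} Drop the peeling plan. Your bound on the predictable variance holds almost surely, because the elliptical-potential bound is pathwise:
$\sum_{s\le t}\mathrm{Var}(Y_s\mid\mathcal F_s)\le R_t\sum_{s\le t}\mathbb E[Y_s\mid\mathcal F_s]\le v_t:=R_t\,t^{\frac{1-\epsilon}{1+\epsilon}}A$.
Also, $Y_s-\mathbb E[Y_s\mid\mathcal F_s]\le Y_s\le\tau_0^2 s^{\frac{1-\epsilon}{1+\epsilon}}\le R_t$ deterministically. So Freedman's inequality, which controls $\mathbb P(S_t\ge a,\ W_t\le v)$, applies directly at each fixed $t$ with the deterministic level $v_t$.

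Dropping the peeling matters. Peeling over variance levels would put an extra logarithm of the number of grid levels into the confidence term $\log\frac{2t^2}{\delta}$, not into $A$. It would then not reproduce the stated form. Without peeling, Freedman at confidence $\delta/(2t^2)$ gives a deviation of at most $\sqrt{2v_t\log\frac{2t^2}{\delta}}+\frac23R_t\log\frac{2t^2}{\delta}$. Your closing algebra then finishes the proof:
$A+\sqrt{A}\,\tau_0\sqrt{2\log\frac{2t^2}{\delta}}+\tau_0^2\log\frac{2t^2}{\delta}\le\bigl(\sqrt{A}+\tau_0\sqrt{2\log\frac{2t^2}{\delta}}\bigr)^2$,
together with the union bound $\sum_t\delta/(2t^2)<\delta$.
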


\begin{lemma}[Eq.~(C.12) in \citet{huang2023tackling}]
\label{lem:c_12}
Let $\{\mathcal F_t\}_{t=1}^{\infty}$ be a filtration and
$\{\eta_t \}_{t=1}^\infty$ be a sequence of random variables such that
for all $t\ge 1$,
\[
\mathbb{E}\left[\frac{\eta_t}{\sigma_t} \middle| \mathcal{F}_{t}\right]
= 0,
\qquad
\mathbb{E}\left[\left|\frac{\eta_t}{\sigma_t}\right|^{1+\epsilon}
 \middle| \mathcal{F}_{t}\right]
\le b^{1+\epsilon}.
\]
Moreover, for $t\ge 1$ set
\[
\tau_t
=
\tau_0 \frac{\sqrt{1 + w_t^2}}{w_t} 
t^{\frac{1 - \epsilon}{2(1 + \epsilon)}},
\qquad
\tau_0
=
\frac{\sqrt{2\kappa}b 
      (\log 3T)^{\frac{1 - \epsilon}{2(1 + \epsilon)}}}
     {\bigl(\log (2T^2/\delta)\bigr)^{\frac{1}{1 + \epsilon}}}.
\]
Then, with probability at least $1 - \delta$, for all $t\ge 1$,
\begin{equation*}
    \sum_{s=1}^{t} \Indi\left\{\left|\frac{\eta_s}{\sigma_s}\right|>\frac{\tau_s}{2}\right\}\leq \frac{23}{3}\log\frac{2T^2}{\delta}.
\end{equation*}
\end{lemma}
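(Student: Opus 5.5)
The plan is to bound the conditional probability of each exceedance event by a moment (Markov-type) inequality, sum these probabilities using the choice of $\tau_0$ and an elliptical-potential bound on $\sum_s w_s^2$, and then pass from conditional probabilities to the realized count of indicators with Freedman's inequality and a union bound over $t$.

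\textbf{Step 1: conditional probabilities.} Set $I_s:=\Indi\{|\eta_s/\sigma_s|>\tau_s/2\}$ and $p_s:=\mathbb{E}[I_s\mid\mathcal{F}_s]$. The quantities $\sigma_s$, $w_s$ and $\tau_s$ depend only on $X_{1:s}$ and the past, so they are $\mathcal{F}_s$-measurable, i.e.\ predictable. Conditional Markov's inequality applied to $|\eta_s/\sigma_s|^{1+\epsilon}$ gives
\[
p_s\le \Bigl(\tfrac{2b}{\tau_s}\Bigr)^{1+\epsilon}
=\Bigl(\tfrac{2b}{\tau_0}\Bigr)^{1+\epsilon}\Bigl(\tfrac{w_s}{\sqrt{1+w_s^2}}\Bigr)^{1+\epsilon} s^{-\frac{1-\epsilon}{2}} .
\]
Using $\frac{w_s}{\sqrt{1+w_s^2}}\le \min\{1,w_s\}$ and H\"older's inequality with exponents $\frac{2}{1+\epsilon}$ and $\frac{2}{1-\epsilon}$, we get
\[
\sum_{s=1}^t p_s\le \Bigl(\tfrac{2b}{\tau_0}\Bigr)^{1+\epsilon}\Bigl(\sum_{s=1}^t\min\{1,w_s^2\}\Bigr)^{\frac{1+\epsilon}{2}}\Bigl(\sum_{s=1}^t s^{-1}\Bigr)^{\frac{1-\epsilon}{2}} .
\]
The first sum is at most $2\kappa$ by the elliptical potential lemma. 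This is the lemma also invoked in the regret proof as Lemma~\ref{lem:eliptical}, applied to $U_t$, i.e.\ $\tilde V_t$. The harmonic sum is at most $\log 3T$. Substituting the explicit $\tau_0$ (with the factor $b$), the factors $(2\kappa)^{\frac{1+\epsilon}{2}}$, $(\log 3T)^{\frac{1-\epsilon}{2}}$ and $b^{1+\epsilon}$ cancel exactly. What remains is
\[
\sum_{s\le t}p_s\le 2^{1+\epsilon}\log(2T^2/\delta)\le 4\log(2T^2/\delta)=:4\mathcal{L}.
\]

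\textbf{Step 2: concentration.} The increments $I_s-p_s$ form a martingale difference sequence bounded by $1$, with conditional variance at most $p_s$. For each fixed $t$, Freedman's (or Bernstein's) inequality at confidence $\delta/(2T^2)$, taking care of the random variance proxy, gives
\[
\sum_{s\le t}I_s\le \sum_{s\le t}p_s+\sqrt{2\textstyle\sum_{s\le t} p_s\,\mathcal{L}}+\tfrac{2}{3}\mathcal{L}.
\]
A union bound over $t\in[T]$, together with the slack in $2T^2$, yields probability at least $1-\delta$ simultaneously for all $t$. Plugging in Step 1 gives
\[
\sum_{s\le t}I_s\le \bigl(4+2\sqrt2+\tfrac23\bigr)\mathcal{L}\le \tfrac{23}{3}\mathcal{L},
\]
since $4+2\sqrt2+\tfrac23\approx 7.50\le 23/3\approx 7.67$.

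\textbf{Main obstacle.} The delicate point is the concentration step when the variance proxy $\sum_s p_s$ is itself random. Although we have the deterministic cap $4\mathcal{L}$, the standard Freedman statement needs a fixed variance bound. I would first try applying Freedman directly with the fixed bound $v=4\mathcal{L}$, which suffices here because the cap holds almost surely. If that formulation caused trouble, I would instead peel over a geometric grid of variance levels, absorbing the extra logarithmic factor into the constant; the margin between $7.50$ and $23/3$ is thin, so the direct fixed-$v$ version is preferred. A secondary check is that the H\"older step is legitimate when some $w_s>1$, which is exactly why I bound $w_s/\sqrt{1+w_s^2}$ by $\min\{1,w_s\}$ rather than by $w_s$.
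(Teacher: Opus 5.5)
Your proof is correct and matches the argument behind the cited result. The paper gives no proof of its own and simply imports Eq.~(C.12) of \citet{huang2023tackling}, whose constant $\frac{23}{3}=4+3+\frac{2}{3}$ comes out of exactly your steps. Conditional Markov, H\"older and the elliptical potential bound give $\sum_{s\le t}p_s\le 2^{1+\epsilon}\log\frac{2T^2}{\delta}\le 4\log\frac{2T^2}{\delta}$, and Freedman with the almost-sure cap $v=4\log\frac{2T^2}{\delta}$ then adds $2\sqrt{2}\log\frac{2T^2}{\delta}\le 3\log\frac{2T^2}{\delta}$ plus $\frac{2}{3}\log\frac{2T^2}{\delta}$. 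Using Freedman with a fixed $v$ is legitimate here precisely because the cap holds deterministically. You also correctly supply the hypotheses the statement leaves implicit: $t\le T$, $w_s=\|X_s/\sigma_s\|_{\tilde{V}_{s-1}^{-1}}$, and $\kappa$ dominating the log-determinant ratio.
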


\begin{lemma}[Elliptical potential lemma, Lemma~7 in \citet{wang2025heavy}]
\label{lem:eliptical}
For each $t\in\mathbb{N}$, let $X_t$ be an $\mathbb{R}^d$-valued random variable satisfying
$\|X_t\|_2 \leq L$.
Let $U_0 = \lambda I_d$ and
$U_t = U_{t-1} + X_t X_t^\top$ for $t\ge 1$, where $\lambda>0$.
Assume that for all $t\ge 1$, $\|U_{t-1}^{-1/2} X_t\|_2^2 \leq c_{\max}.$
Then,
\begin{equation*}
\sum_{t=1}^T \|U_{t-1}^{-1/2} X_t\|_2^2
\leq
2 \max\{1, c_{\max}\}  d
\log\left(1 + \frac{L^2 T}{\lambda d} \right).
\end{equation*}
\end{lemma}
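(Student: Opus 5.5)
The plan is the standard determinant-potential argument. Write $x_t := \|U_{t-1}^{-1/2}X_t\|_2^2$. First, I would use the matrix determinant lemma on the rank-one update $U_t = U_{t-1} + X_tX_t^\top$:
\[
\det U_t = \det U_{t-1}\,\bigl(1 + X_t^\top U_{t-1}^{-1}X_t\bigr) = \det U_{t-1}\,(1+x_t).
\]
Telescoping over $t=1,\dots,T$ gives
\[
\sum_{t=1}^T \log(1+x_t) = \log\frac{\det U_T}{\det U_0}.
\]

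Second, I would bound the log-determinant ratio by AM--GM on the eigenvalues of $U_T$:
\[
\det U_T \le \Bigl(\tfrac{\operatorname{tr} U_T}{d}\Bigr)^d \le \Bigl(\lambda + \tfrac{L^2T}{d}\Bigr)^d,
\]
using $\operatorname{tr} U_T = \lambda d + \sum_{t}\|X_t\|_2^2 \le \lambda d + L^2 T$. Together with $\det U_0 = \lambda^d$ this yields
\[
\sum_{t=1}^T \log(1+x_t) \le d\log\Bigl(1+\tfrac{L^2T}{\lambda d}\Bigr).
\]

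Third, I would convert $\log(1+x_t)$ back to $x_t$ via the scalar inequality $x \le 2\max\{1,c_{\max}\}\log(1+x)$ for all $x\in[0,c_{\max}]$. I split into two cases.
\begin{itemize}
\item If $x\le 1$, then $\log(1+x)\ge x - x^2/2 \ge x/2$, so $x\le 2\log(1+x)$.
\item If $1< x\le c_{\max}$, then $\log(1+x)\ge \log 2 > 1/2$, hence $x\le c_{\max} < 2c_{\max}\log(1+x)$.
\end{itemize}
The assumption $x_t\le c_{\max}$ is exactly what makes the second case applicable. Summing over $t$ and combining with the second step gives the claimed bound $2\max\{1,c_{\max}\}\,d\log(1+L^2T/(\lambda d))$.

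There is no real obstacle here. The only point requiring care is this scalar comparison, since $x\mapsto x/\log(1+x)$ is unbounded. That is why the hypothesis $\|U_{t-1}^{-1/2}X_t\|_2^2\le c_{\max}$ and the factor $\max\{1,c_{\max}\}$ are needed.

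When the lemma is invoked in the paper with $U_t=\tilde V_t$ and vectors $X_t/\sigma_t$, one has $c_{\max}\le 1/8$ by \eqref{eq:1/8}, and $\|X_t/\sigma_t\|_2\le L/\sigma_{\min}$. This is how $\sum_t w_t^2\le 2\kappa$ follows.
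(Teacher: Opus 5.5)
Your proof is correct and is the standard determinant-potential argument: matrix determinant lemma, telescoping, AM--GM on the trace, and the scalar bound $x\le 2\max\{1,c_{\max}\}\log(1+x)$ on $[0,c_{\max}]$. The paper gives no proof of its own; it imports this result as Lemma~7 of \citet{wang2025heavy}, whose proof follows this same route, and your closing remark on how it yields $\sum_t w_t^2\le 2\kappa$ (with $U_0=\lambda\alpha I_d$, vectors $X_t/\sigma_t$ of norm at most $L/\sigma_{\min}$, and $c_{\max}\le 1/8$) matches how the paper applies it.
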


\end{document}